\def\eqref#1{equation~\ref{#1}}
\def\1{\bm{1}}
\DeclareMathAlphabet{\mathsfit}{\encodingdefault}{\sfdefault}{m}{sl}
\SetMathAlphabet{\mathsfit}{bold}{\encodingdefault}{\sfdefault}{bx}{n}
\DeclareMathOperator*{\argmax}{arg\,max}
\DeclareMathOperator*{\argmin}{arg\,min}
\renewcommand{\eqref}[1]{(\ref{#1})}
\def\Real{\mathbb{R}}
\theoremstyle{plain}
\newtheorem{lemma}{Lemma}
\newtheorem{theorem}{Theorem}
\newtheorem{remark}{Remark}
\newtheorem{proposition}{Proposition}
\newtheorem{corollary}{Corollary}
\newtheorem{assumption}{Assumption}
\theoremstyle{definition}
\title{Provable Domain Adaptation for Offline Reinforcement Learning with Limited Samples}
\author{\name Weiqin Chen \email chenw18@rpi.edu \\
      \addr Rensselaer Polytechnic Institute
      \AND
      \name Xinjie Zhang \email xz3236@columbia.edu \\
      \addr Columbia University
      \AND
      \name Sandipan Mishra \email MISHRS2@rpi.edu \\
      \addr Rensselaer Polytechnic Institute
      \AND
      \name Santiago Paternain \email paters@rpi.edu \\
      \addr Rensselaer Polytechnic Institute
      }
\begin{document}

\maketitle

\begin{abstract}
Offline reinforcement learning (RL) learns effective policies from a static target dataset. The performance of state-of-the-art offline RL algorithms notwithstanding, it relies on the size of the target dataset, and it degrades if limited samples in the target dataset are available, which is often the case in real-world applications. To address this issue, domain adaptation that leverages auxiliary samples from related source datasets (such as simulators) can be beneficial. However, establishing the optimal way to trade off the limited target dataset and the large-but-biased source dataset while ensuring provably theoretical guarantees remains an open challenge. To the best of our knowledge, this paper proposes the first framework that theoretically explores the impact of the weights assigned to each dataset on the performance of offline RL. In particular, we establish performance bounds and the existence of the optimal weight, which can be computed in closed form under simplifying assumptions. We also provide algorithmic guarantees in terms of convergence to a neighborhood of the optimum. Notably, these results depend on the quality of the source dataset and the number of samples in the target dataset. Our empirical results on the well-known \textit{Procgen} and \textit{MuJoCo} benchmarks substantiate the theoretical contributions in this work.
\end{abstract}

%%%%%%%%%%%%%%%%%%%%%%%%%%%%%%%%%%%%%%%%%%%%%%%%%%%%%%%%%%%%%%%%
%% Introduction
%%%%%%%%%%%%%%%%%%%%%%%%%%%%%%%%%%%%%%%%%%%%%%%%%%%%%%%%%%%%%%%%
\section{Introduction}

Deep reinforcement learning (RL) has demonstrated impressive performance in a wide variety of applications, such as strategy games~\citep{mnih2013playing, mnih2015human}, robotics~\citep{levine2016end, duan2016benchmarking}, and recommender systems~\citep{afsar2022reinforcement, lin2023survey}.
RL aims to learn an optimal policy that maximizes the expected discounted cumulative reward. To achieve this goal, the online RL agent learns and improves the policy by actively interacting with the environment. However, this poses a critical challenge for the real-world applications of RL, as interactions with the real-world can be significantly dangerous and expensive~\citep{kumar2020conservative, levine2020offline, chen2024probabilistic}. In this context, offline RL has emerged as a promising alternative framework for the real-world applications of RL, where the agent learns effective policies from a static and previously collected dataset. 

Recent advances in offline RL algorithms have shown remarkable success across a diverse array of problems and datasets~\citep{fujimoto2019benchmarking, kumar2020conservative, kostrikov2021offline, chen2021decision}. Nevertheless, their effectiveness depends on the quality and size of the dataset. More concretely, it is worth noting that even state-of-the-art (SOTA) offline RL algorithms like BCQ~\citep{fujimoto2019benchmarking}, CQL~\citep{kumar2020conservative}, IQL~\citep{kostrikov2021offline}, DT~\citep{chen2021decision} demonstrate poor performance given a limited offline RL dataset, as training on a small number of samples may lead to overfitting~\citep{fu2019diagnosing, kumar2019stabilizing}. 
In this work, we are interested in offline RL that learns from a static dataset with limited samples. We now proceed by introducing a series of related works.

% \begin{figure}[t]
% \centering
% \includegraphics[width=0.5\columnwidth]{}
% \caption{{Schematic of domain adaptation for offline reinforcement learning. The target dataset has limited samples (red), whereas the source dataset has unlimited samples (green) that are with discrepancy (or dynamics gap) from the target domain. Striking the proper balance between the two datasets remains a challenging problem in offline reinforcement learning. The example data in both datasets is the \textit{Maze} from the \textit{Procgen} Benchmark~\citep{cobbe2020leveraging}.\blue{I think we need to remove this figure given the lack of space. }}}
% \label{fig_schematic}
% \end{figure}

\subsection{Related Work}

\paragraph{Offline reinforcement learning with dataset distillation.}

Dataset distillation~\citep{wang2018dataset} proposes a framework for synthesizing a smaller and more efficient dataset by minimizing the gradient discrepancy of the samples from the original dataset and the distilled dataset.
Synthetic~\citep{light2024dataset} is the first work that applies dataset distillation to offline RL and achieves comparable performance with the original large offline RL dataset. Specifically, it synthesizes a small distilled dataset by minimizing the gradient matching loss between the original offline RL dataset and the synthetic dataset. However, generating the synthetic dataset necessitates access to the original large offline RL dataset, which is often impractical in real-world scenarios. In particular, this work focuses on scenarios where only a limited number of samples are accessible at all times.

\paragraph{Offline reinforcement learning with domain adaptation.}

\begin{wrapfigure}{r}{0.45\textwidth}
\centering
\includegraphics[width=0.45\columnwidth]{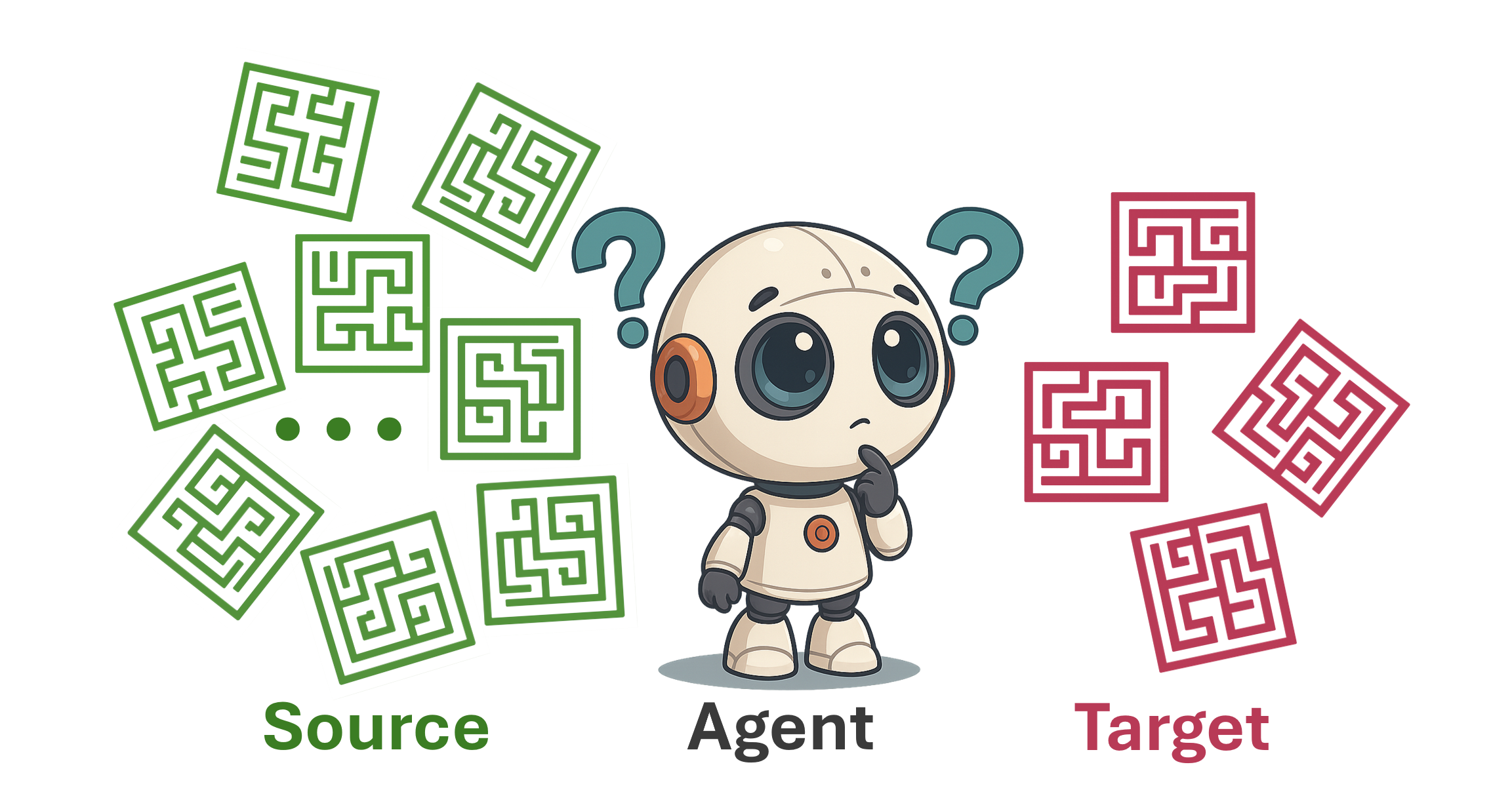}
\caption{Schematic: the target dataset has limited samples (red), whereas the source dataset has unlimited samples (green) with a dynamics gap from the target dataset. How to strike a proper balance between the two datasets?}
\label{fig_schematic}
\end{wrapfigure}

To avoid overfitting by learning from the \textit{limited target dataset}, domain adaptation techniques~\citep{redko2020survey, farahani2021brief} propose to leverage an auxiliary large \textit{source} dataset (such as simulators) with unlimited samples.
H2O~\citep{niu2022trust} assumes access to an unrestricted simulator, 
%requires knowing the dynamics, 
and the process of training on simulators still requires interacting with the environment online.
On the other hand, ORIS~\citep{hou2024improving} proposes to generate a new (source) dataset from the simulators, where a generative adversarial network (GAN) model is employed to approximate the state distribution of the original target dataset. Starting from the initial state provided by GAN, the new (source) dataset is generated by interacting with the simulator and reweighted by an additional discriminator model.
These approaches emphasize the algorithmic design by employing sample-dependent weights to combine the source and target datasets during the learning process, e.g., through weighting based on the transition ratio between the source and target domains.

Given the unlimited source dataset and the limited target dataset, however, striking a proper balance between the two datasets prior to the actual learning process remains a challenging problem in offline RL (see Figure~\ref{fig_schematic}).
%
%
%
% For instance, H2O consistently treats both datasets with equal weights. On the other hand, ORIS suggests always assigning a lower weight to the source dataset than to the target dataset, as the dynamics gap could negatively impact the performance of the offline RL. 
Straightforward strategies involve either combining both datasets equally or using only one of them, nevertheless, neither approach offers optimality guarantees. Accordingly, this work focuses on exploring the impact of the weights assigned to each dataset on the performance of offline RL, particularly from a provably theoretical perspective.

% \red{Weiqin: mention somewhere in the Introduction that the naive choice of $\lambda \in \{0, 0.5, 1\}$ may not work in some scenarios}

\paragraph{Offline reinforcement learning with dynamics gap.}

Dynamics gap in the domain adaptation and transfer for offline RL is widely acknowledged as a significant challenge.
From a practical standpoint, DARC~\citep{eysenbach2020off} and DARA~\citep{liu2022dara} train a dynamics gap-related penalty by minimizing the divergence between the real and simulator trajectory distributions, and combine it with the simulator reward during the online/offline training of simulators. 
On the other hand, HTRL~\citep{qu2024hybrid} theoretically investigates the sample efficiency of a hybrid transfer RL framework with a dynamics gap, which, however, requires access to the entire target domain.
To our knowledge, no prior work has theoretically investigated the impact of the dynamics gap on selecting the weights assigned to the limited target dataset and the unlimited source dataset.

\subsection{Main Contributions}
The main contributions of this work are summarized as follows.

\begin{itemize}
    \item To the best of our knowledge, this is the first work that proposes an algorithm-agnostic framework of domain adaptation for offline RL prior to the actual learning process, which exhibits a provably theoretical trade-off between the number of samples in a limited target dataset and the dynamics gap (or discrepancy) between the target and source domains.
    % \purple{Dynamics gap is very common in the literature, and it is regarded as the reason that the simulator doesn't work well for real systems. The nice thing about this work is that we are the first ones theoretically defining and studying this ``dynamics gap'', as shown in (12). So I think we should mention this ``dynamics gap'', and also mention the literature showing that the "dynamics gap" causes problems. I create a paragraph for the dynamics gap in the subsection of Related Work.} \blue{Good. We may need to shorten the intro a bit. I think it's on the long side for a 7-page paper.} \purple{Yes, I was planning to make it to get to the middle of the second page.} \blue{Specifically, I think the Offline RL with DA is a bit too long. You can probably combine the two paragraphs in something much smaller.}

    \item We establish the (expected and worst-case) performance bounds and the convergence to a neighborhood of the optimum within our framework. We further identify, under simplifying assumptions, the existence of an optimal weight for balancing the two datasets, which is typically not one of the trivial choices: treating both datasets equally or using either dataset only.
    All theoretical guarantees and the optimal weight will depend on the quality of the source dataset (refer to the dynamics gap) and the size of the target dataset (i.e., number of samples).

    \item A series of numerical experiments conducted on the well-known \textit{Procgen} and \textit{MuJoCo} benchmarks substantiate our theoretical contributions.
\end{itemize}

Notice that the primary focus of this paper is on the theoretical analysis of our proposed framework, while algorithm development is reserved for future research.

%%%%%%%%%%%%%%%%%%%%%%%%%%%%%%%%%%%%%%%%%%%%%%%%%%%%%%
%%%%%%%%%%%%%%%%%%%%%%%%%%%%%%%%%%%%%%%%%%%%%%%%%%%%%%
%%%%%%%%%%%%%%%%%%%%%%%%%%%%%%%%%%%%%%%%%%%%%%%%%%%%%%
%%%%%%%%%%%%%%%%%%%%%%%%%%%%%%%%%%%%%%%%%%%%%%%%%%%%%%
%%%%%%%%%%%%%%%%%%%%%%%%%%%%%%%%%%%%%%%%%%%%%%%%%%%%%%
%%%%%%%%%%%%%%%%%%%%%%%%%%%%%%%%%%%%%%%%%%%%%%%%%%%%%%

%%%%%%%%%%%%%%%%%%%%%%%%%%%%%%%%%%%%%%%%%%%%%%%%%%%%%%%%%%%%%%%%
%% Combining Target and Source Datasets
%%%%%%%%%%%%%%%%%%%%%%%%%%%%%%%%%%%%%%%%%%%%%%%%%%%%%%%%%%%%%%%%
\section{Balancing Target and Source Datasets}\label{sec_combine_src_tar}

In this section, we consider the mathematical formalism of offline RL~\citep{levine2020offline}, namely \textit{Markov Decision Process} (MDP)~\citep{sutton2018reinforcement}. This work centers on the tabular MDP defined by a tuple $\mathcal{M} =(\mathcal{S}, \mathcal{A}, P, r, \rho, \gamma, T)$, where $\mathcal{S}$ and $\mathcal{A}$ are finite state and action spaces (a standard assumption in RL for theoretical guarantees), $P: \mathcal{S} \times \mathcal{A} \times \mathcal{S} \to [0, 1]$ denotes the transition probability that describes the dynamics of the system, $r: \mathcal{S} \times \mathcal{A} \to \Real$ denotes the reward function that evaluates the quality of the action, $\rho: \mathcal{S} \to [0, 1]$ denotes the initial state distribution, $\gamma$ represents the discount factor, and $T \in \{0, 1, 2, \cdots\}$ defines the horizon length.
%
% \blue{
% Note that assuming a finite MDP is standard in RL for theoretical guarantees, when no strong assumptions are imposed on the expressiveness of the function approximation.
% }
% \orange{We consider $\mathcal{S}$ and $\mathcal{A}$ to be discrete and finite in this work.} \purple{Do we want to comment this in the text or make it a formal assumption?}\blue{We can comment it on th text. I would comment it even earlier where you say that S and A are the state and action spaces.}

A \textit{policy} is represented by the probability distribution over actions conditioned on states. In each state $s$, the agent selects an action $a$ based on the policy $\pi(a|s)$, and generates the next state $s'$. The tuple $(s, a, s')$ is referred to as the \textit{transition} data. Offline RL considers employing a behavior policy $\pi_\beta$ to collect an offline and static dataset with $N$ transitions ($N \in \mathbb{N}_+$), i.e., {$\hat{\mathcal{D}}_\text{tr} = \{ (s_i, a_i, s_i') \}_{i=1}^{N}$. The transitions in the dataset $\hat{\mathcal{D}}_\text{tr}$ are collected from a domain $\mathcal{D}$.} The goal of RL is to learn an optimal policy $\pi^*$ that maximizes the expected discounted cumulative reward, i.e., \begin{equation}
    \pi^* = \argmax_{\pi} \, \underset{\substack{s_0 \sim \rho, \, a_t \sim \pi(a_t | s_t) \\ s_{t+1} \sim  P (s_{t+1} | s_t, a_t) }}{\mathbb{E}} \left[\sum_{t=0}^{T} \gamma^t r(s_t, a_t) \right].
\end{equation}
In the context of \emph{offline} RL, the policy needs to be learned from the {static dataset $\hat{\mathcal{D}}_\text{tr}$ exclusively.}
Offline RL algorithms~\citep{fujimoto2019benchmarking, levine2020offline, kostrikov2021offline} typically train an action-value function (or Q-function) by minimizing the temporal difference (TD) error iteratively. 
% \blue{It's easier to define things the other way around.}
%
To be formal, let $\mathcal{B}$ be a Bellman operator. This operator takes different forms depending on the specific algorithm considered. For instance, in Q-learning type methods the operator takes the form 
\begin{equation}\label{eqn_q_bellman}
    \mathcal{B} Q^k(s, a) = r(s, a) + \gamma \underset{s' \sim P (s'|s, a) }{\mathbb{E}} \left[\max_{a'} Q^k(s', a') \right],
\end{equation}
whereas for actor-critic methods it takes the form 
\begin{equation}\label{eqn_ac_bellman}
    \mathcal{B}^\pi Q^k(s, a) = r(s, a) + \gamma \underset{\substack{s' \sim P (s'|s, a),\, a' \sim \pi(a'|s') }}{\mathbb{E}} \left[ Q^k(s', a') \right].
\end{equation}
Given any transition $(s, a, s')$, let us define the TD error for any $Q$-function at the step $k$ as
\begin{align}\label{eqn_TDerror_single}
    &\mathcal{E} (Q, (s, a, s')) := \left(Q(s, a) -  \hat{\mathcal{B}}_{s'} Q^k(s, a)  \right)^2,
\end{align}
where $\hat{\mathcal{B}}_{s'}$ denotes the stochastic approximation of the Bellman operator, namely a version of \eqref{eqn_q_bellman} or \eqref{eqn_ac_bellman} where the expectation is replaced by evaluating the random variable at a single of realization $s'$. Note that we have dropped the dependence on $k$ on the left hand side of the above expression for simplicity. We further define the expected TD error in the domain $\mathcal{D}$ as
\begin{align}\label{eqn_TDerror}
    \mathcal{E}_{\mathcal{D}}(Q) = \underset{(s, a, s') \sim \mathcal{D}}{\mathbb{E}} \, \left[\mathcal{E} (Q, (s, a, s')) \right],
\end{align}
where $(s, a, s') \sim \mathcal{D}$ denotes that the transition $(s, a, s')$ is drawn from the probability distribution $P_\mathcal{D}(s, a, s')$ in the domain $\mathcal{D}$. With a slight abuse of notation, we also denote $(s, a) \sim \mathcal{D}$ to represent that the state-action pair $(s, a)$ is drawn from the probability distribution  $P_\mathcal{D}(s, a)$ in the domain $\mathcal{D}$.
With these definitions, one can define the iterations of a large class of offline RL algorithms through the following optimization problem 
\begin{align}\label{eqn_objective1}
    Q^{k+1} = \argmin_Q \, \mathcal{E}_{\mathcal{D}}(Q), \, \forall k \in \mathbb{N}.
\end{align}
%
%
%
% \red{
% \begin{align}\label{eqn_objective1}
%     Q^{k+1} = \argmin_Q \, \mathcal{E}_{\mathcal{D}}(Q, Q^k),
% \end{align}
% where the TD error $\mathcal{E}_{\mathcal{D}}(Q, Q^k)$ is formally defined as
% \begin{align}\label{eqn_TDerror}
%     \mathcal{E}_{\mathcal{D}}(Q, Q^k) = \underset{(s, a)  \sim \mathcal{D}}{\mathbb{E}} \, \left[\left(Q(s, a) -  \mathcal{B} Q^k(s, a)  \right)^2 \right].
% \end{align}
%
% Notice that $\mathcal{B}$ denotes the Bellman operators. Specifically, \textbf{Q-learning}-type methods uses the Bellman operator in \eqref{eqn_TDerror}, i.e., $\mathcal{B} = \mathcal{B}_{\mathcal{D}}$, while the family of \textbf{actor-critic} algorithms employ the Bellman expectation operator in \eqref{eqn_TDerror}, i.e., $\mathcal{B} = \mathcal{B}^\pi$. Formally, $\mathcal{B}_{\mathcal{D}}$ and $\mathcal{B}^\pi$ are given by
% \begin{align}
%     &\mathcal{B}_{\mathcal{D}} Q^k(s, a) = r(s, a) \! + \! \gamma \!\underset{s' \sim P (s'|s, a) }{\mathbb{E}} \! \left[\max_{a'} Q^k(s', a') \right] \\
%     &\mathcal{B}^\pi Q^k(s, a) = r(s, a) \! + \! \gamma \! \underset{\substack{s' \sim P (s'|s, a) \\ a' \sim \pi(a'|s'), \, }}{\mathbb{E}} \left[ Q^k(s', a') \right].
% \end{align}
% }
%
% \blue{This puts too much emphasis on the offline learning algorithms which is not necessarily what we want. We can make it much shorter.}
%
It is worth pointing out that the specific forms of \eqref{eqn_q_bellman} and \eqref{eqn_ac_bellman} can result in poor performance in offline RL attributed to the issues with bootstrapping from out-of-distribution (OOD) actions~\citep{fujimoto2019off, kumar2019stabilizing, kumar2020conservative, levine2020offline}. This typically leads to an overestimation of the Q-value. To avoid this overestimation, prior works consider solely using in-distribution state-action pairs to maintain the Q-function~\citep{fujimoto2019benchmarking}, or constraining the learned policy to remain closely aligned with the behavior policy~\citep{levine2020offline}. 
% These algorithms, however, can also be written as \eqref{eqn_TDerror} by introducing modifications to the Bellman operators.
% \red{
% However, directly applying $\mathcal{B}_{\mathcal{D}}$ or $\mathcal{B}^\pi$, in general, results in catastrophic performance in offline RL, attributed to the issues with bootstrapping from out-of-distribution (OOD) action $a'$. This typically leads to an overestimation of the Q-value with respect to the $(s', a')$ pair. To avoid this overestimation, prior works consider solely using in-distribution state-action pairs to maintain the Q-function~\citep{fujimoto2019benchmarking}, or constraining the learned policy to remain closely aligned with the behavior policy~\citep{levine2020offline}. Therefore, the offline RL version of Bellman operators $\mathcal{B}$ in \eqref{eqn_TDerror} for Q-learning and actor-critic are given by
% %
% \begin{align}
%     &\mathcal{B}_{\text{off}}^* Q^k(s, a) \! = \! r(s, a) \! + \! \gamma \! \underset{s' \sim P (s'|s, a) }{\mathbb{E}} \! \left[\max_{a' \in \mathcal{D}} Q^k(s', a') \right]  \\
%     &\mathcal{B}_{\text{off}}^{\pi_k} \, Q^k(s, a) \! = \! r(s, a) \! + \! \gamma \! \underset{\substack{s' \sim P (s'|s, a) \\ a' \sim \pi_k(a'|s'), \, }}{\mathbb{E}} \underset{\substack{ \\ \\ \mathbb{D}(\pi_k || \pi_\beta) \leq \epsilon}}{[ Q^k(s', a') ]},
% \end{align}
% %
% where $a' \in \mathcal{D}$ is shorthand for $a' \, \text{s.t.} \, (s', a') \in \mathcal{D}$, $\mathbb{D}(\cdot, \cdot)$ denotes the divergence measure, e.g., KL-divergence~\citep{peng2019advantage}, and $\epsilon \geq 0$ represents the threshold.
% }

Note that the expectation in \eqref{eqn_TDerror} poses a challenge in solving problem \eqref{eqn_objective1}: it requires visiting every transition infinite times.
In practice, one defines the empirical version of the TD error $\mathcal{E}_{\mathcal{D}}(Q)$ 
in \eqref{eqn_TDerror}, 
% \begin{align}\label{eqn_TDerror_empirical}
%     \mathcal{E}_{\hat{\mathcal{D}}}(Q) = \frac{1}{N} \sum_{i=1}^{N} \left(Q(s_i, a_i) -  \mathcal{B} \, Q^k(s_i, a_i)  \right)^2,
% \end{align}
%
\begin{align}\label{eqn_TDerror_empirical}
    \mathcal{E}_{\hat{\mathcal{D}}}(Q) = \frac{1}{N} \sum_{i=1}^{N} \mathcal{E} (Q, (s_i, a_i, s_i')),
\end{align}
%
%\blue{where N is ....? Note that you called the number of samples $|\mathcal{D}|$ before. Be consistent. I think $N$ is better.} \purple{addressed and consistent with before}
where the samples {are from the dataset $\hat{\mathcal{D}}_\text{tr}$}. Then, the offline RL algorithm is defined as the minimization of the stochastic approximation of problem \eqref{eqn_objective1}
\begin{align}\label{eqn_objective1_empirical}
\hat{Q}^{k+1} = \argmin_Q \, \mathcal{E}_{\hat{\mathcal{D}}}(Q), \, \forall k \in \mathbb{N}.
\end{align}
It has been widely demonstrated that SOTA offline RL algorithms, such as BCQ~\citep{fujimoto2019benchmarking}, CQL~\citep{kumar2020conservative}, IQL~\citep{kostrikov2021offline}, and DT~\citep{chen2021decision},
are capable of solving problem \eqref{eqn_objective1} given sufficient transition samples from the domain $\mathcal{D}$. Nevertheless, this assumption may not be realizable in practice, e.g., healthcare~\citep{tang2022leveraging} and autonomous driving~\citep{pan2017virtual}, where data collection is challenging. In this regime, $\hat{Q}^{k+1}$ generally demonstrates poor performance in approximating $Q^{k+1}$, as training on a small number of samples can lead to overfitting~\citep{fu2019diagnosing, kumar2019stabilizing}. 
% \blue{Comment on the small number of samples leading to poor generalization? Large Variance?} \purple{addressed. Literature uses overfitting and I gave some references.}
% \blue{I think we should just say overfitting. Thoughts?}\orange{Sounds good.}

On the other hand, in certain applications, one can rely on simulators (or related datasets) that provide a larger number of samples $\mathcal{D}' = \{ (s_j, a_j, s_j') \}_{j=1}^{N'}$ with $N^\prime \gg N$. It is worth noting that, in general, $\mathcal{D}'$ will differ from $\mathcal{D}$ in terms of the state distribution and transition probabilities. 
% \red{We term $\mathcal{D}'$, $\hat{\mathcal{D}}$ and $\mathcal{D}$ the \textit{source} dataset, the \textit{target} dataset, and the \textit{target} domain respectively.} 
% \blue{I think $D$ and $\hat{D}$ should have been defined before. After (9) you make a comment that you call $D'$ both dataset and domain. I would remove this sentence.}
Similar to \eqref{eqn_TDerror}, we define 
\begin{align}\label{eqn_TDerror_prime}
    \mathcal{E}_{\mathcal{D}'}(Q) = \underset{(s_j, a_j, s_j')  \sim \mathcal{D}'}{\mathbb{E}} \, \left[ \mathcal{E} (Q, (s_j, a_j, s_j')) \right],
\end{align}
where $(s_j, a_j, s_j') \sim \mathcal{D'}$ denotes that the transition $(s_j, a_j, s_j')$ is drawn from the probability distribution $P_\mathcal{D'}(s_j, a_j, s_j')$ in the source domain $\mathcal{D'}$.
We then explore in this paper a general scheme to combine the limited target dataset and the large-but-biased source dataset
\begin{align}\label{eqn_objective2}
    Q_\lambda^{k+1} = \argmin_Q \, (1-\lambda) \mathcal{E}_{\hat{\mathcal{D}}}(Q) + \lambda \mathcal{E}_{\mathcal{D}'}(Q), \, \forall k \in \mathbb{N},
\end{align}
where $\lambda \in [0, 1]$ denotes the weight that trades off $\mathcal{E}_{\hat{\mathcal{D}}}(Q)$ from the limited target dataset and $\mathcal{E}_{\mathcal{D}'}(Q)$ from the large-but-biased source dataset. 
%
% Particularly, \eqref{eqn_objective2} recovers H2O when $\lambda=0.5$ and yields ORIS when $\lambda < 0.5$.
%
Particularly, $\lambda \approx 1$ prioritizes the minimization of the TD error corresponding to the \emph{source} dataset. This approach is suitable in cases where the \emph{source} dataset is similar to the \emph{target} (or coming from the same domain in an extreme case). On the other hand, $\lambda \approx 0$ focuses on minimizing the TD error in the \emph{target} dataset. This method is appropriate for scenarios where data is abundant (sufficiently large $N$) or where dynamics gaps between the \emph{target} and \emph{source} datasets are too large.

Given these observations, it is expected that different values of $\lambda$ can attain the optimal performance,
% \red{generalization bounds} \purple{since you change the title of the next section, should we change this too?}\blue{good point. Performance?}
depending on the interplay between the available number of target samples and the dynamics gap between the two datasets. The following section formalizes this expectation.

%%%%%%%%%%%%%%%%%%%%%%%%%%%%%%%%%%%%%%%%%%%%%%%%%%%%%%
%%%%%%%%%%%%%%%%%%%%%%%%%%%%%%%%%%%%%%%%%%%%%%%%%%%%%%
%%%%%%%%%%%%%%%%%%%%%%%%%%%%%%%%%%%%%%%%%%%%%%%%%%%%%%
%%%%%%%%%%%%%%%%%%%%%%%%%%%%%%%%%%%%%%%%%%%%%%%%%%%%%%
%%%%%%%%%%%%%%%%%%%%%%%%%%%%%%%%%%%%%%%%%%%%%%%%%%%%%%
%%%%%%%%%%%%%%%%%%%%%%%%%%%%%%%%%%%%%%%%%%%%%%%%%%%%%%

%%%%%%%%%%%%%%%%%%%%%%%%%%%%%%%%%%%%%%%%%%%%%%%%%%%%%%%%%%%%%%%%
%% Performance and Convergence Guarantees
%%%%%%%%%%%%%%%%%%%%%%%%%%%%%%%%%%%%%%%%%%%%%%%%%%%%%%%%%%%%%%%%
\section{Performance and Convergence Guarantees}

We start this section by discussing the necessary assumptions to develop our theoretical results concerning the generalization of the solution to problem \eqref{eqn_objective2}.  
\begin{assumption}\label{assumption_bound_reward}
   There exists $B>0$ such that, for any \( (s, a) \in \mathcal{S} \times \mathcal{A}\), $|r(s,a)|<B$. 
\end{assumption}
Assumption~\ref{assumption_bound_reward} is common in the literature~\citep{azar2017minimax, wei2020model, zhang2021near}. 
%\blue{I don't know if this is true. If it's not remove and adapt the comment afterward.}. 
In particular, in the case of finite state-action spaces, it is always possible to design the reward to avoid the possibility of being unbounded. Further notice that this assumption, along with the fact that a geometric series with ratio $\gamma\in [0,1)$ converges to $1/(1-\gamma)$ implies that 
% \red{the sum of the geometric series $\{1, \gamma, \gamma^2, \cdots\}$ converges to $1/(1-\gamma)$} \blue{A series is already a sum }
%
\begin{equation}\label{eqn_bound_q_value}    \max_{(s,a)\in\mathcal{S}\times\mathcal{A}} ~ Q(s,a) \leq \frac{B}{1-\gamma}.
\end{equation}
\begin{assumption}\label{assumption_infinite_source}
    \eqref{eqn_TDerror_prime} can be computed given a sufficiently large amount of samples from source domain $\mathcal{D'}$.
\end{assumption}
To be precise, in practice one must work with the empirical counterpart of \eqref{eqn_TDerror_prime}. Note, however, that since $N' \gg N$, the generalization error arising from the finite number of samples in the source dataset is negligible. For this reason and to simplify the exposition, we assume that \eqref{eqn_TDerror_prime} can be computed directly and, with a slight abuse of notation, denote $\mathcal{D}'$ to refer to both the source dataset and the source domain.

To proceed, we denote by $P_\mathcal{D'}(s, a)$ and $P_\mathcal{D}(s, a)$ the probability of a state-action pair $(s, a)$ from the source and target domains. 
Let us define $\hat{\mathcal{D}} = \{ (s, a) \in  \mathcal{S} \times \mathcal{A} \mid (s, a, \cdot) \in \hat{\mathcal{D}}_\text{tr} \}$ as the \textit{transition-excluded dataset} of $\hat{\mathcal{D}}_\text{tr}$, 
% \purple{(WC: Not sure if this is a good name, but we may want to give $\hat{\mathcal{D}}$ a name and we will cite it in e.g., Assumption 2, Proposition 1, etc.)} \blue{I think its good.}
$N(s,a) \in \left\{0, 1,\ldots,N\right\}$ as the number of the specific $(s, a, \cdot)$ transitions in the target dataset $\hat{\mathcal{D}}_\text{tr}$, and $P_{\hat{\mathcal{D}}} (s, a) = N(s, a) / N$.
\begin{assumption}\label{assumption_SAprobability}
For any \( (s, a) \in \mathcal{S} \times \mathcal{A}\), \( P_\mathcal{D} (s, a) \) and \( P_{\mathcal{D}'} (s, a) \) are positive. {Given any dataset $\hat{\mathcal{D}}_\text{tr}$ and its corresponding transition-excluded dataset $\hat{\mathcal{D}}$, there exist constants \( \beta_u \geq \beta_l >0 \) such that $P_{\hat{\mathcal{D}}} (s, a) / P_\mathcal{D'} (s, a)  \in [\beta_l, \beta_u], \, \forall (s, a) \in \hat{\mathcal{D}}$. }
\end{assumption}
% %
% \red{
% \begin{assumption}
%     %
%     For any \( (s, a) \in \mathcal{S} \times \mathcal{A}\), \( P_\mathcal{D} (s, a) \) and \( P_{\mathcal{D}'} (s, a) \) are positive, and there exist constants \( \hat{\beta}_u \geq \hat{\beta}_l >0 \) and \( \beta_u' \geq \beta_l' >0 \) such that the following bounded ratios holds:  
% \begin{equation*}
% \frac{P_{\hat{\mathcal{D}}} (s, a)}{P_\mathcal{D} (s, a)} \in [\hat{\beta}_l, \hat{\beta}_u], \, \forall (s, a) \in \hat{\mathcal{D}}; \quad\quad 
% \frac{P_{\mathcal{D}'} (s, a)}{P_\mathcal{D} (s, a)} \in [\beta_l', \beta_u'], \, \forall (s, a) \in \mathcal{S} \times \mathcal{A}.
% \end{equation*}
% %
% \end{assumption}
% }
% %

\begin{remark}\label{remark_assump}
$i)$ Assumption~\ref{assumption_SAprobability} posits that every state-action pair $(s,a)$ in the space $\mathcal{S} \times \mathcal{A}$ has non-zero probability of occurring in the source domain $\mathcal{D'}$ and the target domain $\mathcal{D}$, thus yielding a bounded ratio $P_{\mathcal{D}} (s, a) / P_\mathcal{D'} (s, a),\, \forall (s, a) \in \mathcal{S} \times \mathcal{A}$. This is a mild assumption in the sense that if $P_{\mathcal{D}} (s, a) =0$ or $ P_\mathcal{D'} (s, a)=0$ essentially implies that the state-action pair $(s, a)$ can be ignored in the corresponding domain of interest.
$ii)$ {Nevertheless, since the limited dataset $\hat{\mathcal{D}}$ is unlikely to cover the entire space of $\mathcal{S} \times \mathcal{A}$, we define $\beta_l$ and $\beta_u$ as the lower and upper bounds, respectively, of the ratio $P_{\hat{\mathcal{D}}} (s, a)  / P_\mathcal{D'} (s, a)$ for all $(s, a) \in \hat{\mathcal{D}}$. In this sense, $P_{\hat{\mathcal{D}}} (s, a) =  N(s, a) /N=0$ for any $(s, a) \in \mathcal{S} \times \mathcal{A} \setminus \hat{\mathcal{D}}$.}
\end{remark}

It is worth noting that the transition sample $\tau$ is independent and identically distributed (i.i.d.), as the dataset $\hat{\mathcal{D}}_\text{tr}$ is shuffled and the transition data are sampled i.i.d. Alternatively, one can think of the dataset as random samples from the occupancy measure, followed by a random transition.

{

\begin{remark}\label{remark_proposition}
%
    % We focus on solving \eqref{eqn_objective2} for any $(s, a) \in \mathcal{S} \times \mathcal{A}$. 
    It is crucial to note that \eqref{eqn_objective2} relies solely on the limited target dataset when $\lambda = 0$, i.e., 
$Q_\lambda^{k+1} = \argmin_Q \, (1-\lambda) \mathcal{E}_{\hat{\mathcal{D}}}(Q) + \lambda \mathcal{E}_{\mathcal{D}'}(Q) = \argmin_Q \,  \mathcal{E}_{\hat{\mathcal{D}}}(Q)$. In this context, it becomes infeasible to compute $Q_\lambda^{k+1}$ for any state–action pair outside the transition-excluded dataset $\hat{\mathcal{D}}$. Hence, throughout this work, we assume $\lambda \in (0, 1]$ for any $(s, a) \in \mathcal{S} \times \mathcal{A} \setminus \hat{\mathcal{D}}$, which guarantees that $(1-\lambda) P_{\hat{\mathcal{D}}}(s, a) + \lambda P_{\mathcal{D}'}(s, a) > 0$.
\end{remark}

Having formally stated all assumptions in this work, we are now in conditions to proceed with the analysis, where we start by deriving the analytical expressions of $Q^{k+1}$ and $Q_\lambda^{k+1}$. 
%which we formally present in the following proposition. 

\begin{proposition}\label{proposition_two_solution}
Let Assumptions \ref{assumption_infinite_source} and \ref{assumption_SAprobability} hold.
Recall the empirical Bellman operator $\hat{\mathcal{B}}$ in \eqref{eqn_TDerror_single}. Denote by $\mathcal{B}_\mathcal{D}$ ($\mathcal{B}_\mathcal{D'}$) the Bellman operator in \eqref{eqn_q_bellman} or \eqref{eqn_ac_bellman} in which $s'$ follows the transition probability of the domain $\mathcal{D}$ ($\mathcal{D'}$). Note that $Q^{k+1}$ and $Q_\lambda^{k+1}$ represent the solutions to \eqref{eqn_objective1} and \eqref{eqn_objective2}.
Given any dataset $\hat{\mathcal{D}}_\text{tr}$ and its corresponding transition-excluded dataset $\hat{\mathcal{D}}$, denote by $N$ and $N(s, a)$ the total number of samples and the amount of $(s, a, \cdot)$ transition in $\hat{\mathcal{D}}_\text{tr}$. At each iteration \((k=0, 1, 2, \cdots)\), it holds that
\begin{align}
    &Q^{k+1}(s, a) = \mathcal{B}_\mathcal{D} Q^k(s, a), \, \forall (s, a) \in \mathcal{S}\times\mathcal{A}, \label{eqn_proposition_two_solution_Q} \\
    &Q_\lambda^{k+1}(s, a) = \frac{\frac{1-\lambda}{N} \sum_{j=1}^{N(s, a)} \hat{\mathcal{B}}_{\hat{s}_j'} Q^k(s, a) + \lambda P_{\mathcal{D}'}(s, a) \mathcal{B}_\mathcal{D'} Q^k(s, a) } {(1-\lambda) P_{\hat{\mathcal{D}}}(s, a) + \lambda P_{\mathcal{D}'}(s, a) },  \, \forall (s, a) \in \mathcal{S}\times\mathcal{A}.  
    \label{eqn_proposition_two_solution_Q_lam}
    % & \red{ \quad\quad\quad\quad\quad\quad \forall (s, a) \in \{  \hat{\mathcal{D}} ~~\text{or}~~  \mathcal{S}\times\mathcal{A} \setminus \hat{\mathcal{D}} ~\text{if}~ \lambda\in (0, 1] \}. \nonumber }
\end{align}
\end{proposition}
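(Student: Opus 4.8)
The plan is to exploit the fact that, over the finite state--action space $\mathcal{S}\times\mathcal{A}$, a $Q$-function is simply a finite-dimensional vector $(Q(s,a))_{(s,a)\in\mathcal{S}\times\mathcal{A}}$, and that the objectives in \eqref{eqn_objective1} and \eqref{eqn_objective2} are \emph{quadratic and separable} in these coordinates. Indeed, the target $\hat{\mathcal{B}}_{s'}Q^k(s,a)$ appearing in the TD error \eqref{eqn_TDerror_single} depends on the \emph{previous} iterate $Q^k$ only, so for the purpose of the minimization over $Q$ it is a fixed number; consequently each transition $(s,a,s')$ contributes a term $(Q(s,a)-\hat{\mathcal{B}}_{s'}Q^k(s,a))^2$ that involves the single coordinate $Q(s,a)$. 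Summing (or taking the expectation) over transitions therefore yields an objective of the form $\sum_{(s,a)} f_{s,a}(Q(s,a))$ with each $f_{s,a}$ a convex quadratic, so the minimization decouples across $(s,a)$ and is solved coordinate-wise.

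First I would handle \eqref{eqn_proposition_two_solution_Q}. Writing the expectation in \eqref{eqn_TDerror} as $\mathcal{E}_{\mathcal{D}}(Q)=\sum_{(s,a)}P_{\mathcal{D}}(s,a)\,\mathbb{E}_{s'\sim P(\cdot\mid s,a)}\big[(Q(s,a)-\hat{\mathcal{B}}_{s'}Q^k(s,a))^2\big]$, and using that $P_{\mathcal{D}}(s,a)>0$ for every $(s,a)$ by Assumption~\ref{assumption_SAprobability}, the unique minimizer of each summand is the conditional mean of the target, namely $Q^{k+1}(s,a)=\mathbb{E}_{s'\sim P(\cdot\mid s,a)}[\hat{\mathcal{B}}_{s'}Q^k(s,a)]$. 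Since $\hat{\mathcal{B}}_{s'}$ is by construction the single-sample version of the Bellman operator in \eqref{eqn_q_bellman}--\eqref{eqn_ac_bellman} with $s'$ drawn from the dynamics of $\mathcal{D}$, this conditional mean is exactly $\mathcal{B}_\mathcal{D} Q^k(s,a)$, giving \eqref{eqn_proposition_two_solution_Q}.

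Next I would treat \eqref{eqn_proposition_two_solution_Q_lam}. For a fixed $(s,a)$, the empirical term $\mathcal{E}_{\hat{\mathcal{D}}}(Q)$ in \eqref{eqn_TDerror_empirical} contributes $\tfrac{1}{N}\sum_{j=1}^{N(s,a)}(Q(s,a)-\hat{\mathcal{B}}_{s_j'}Q^k(s,a))^2$, where $s_1',\dots,s_{N(s,a)}'$ are the next states recorded for that pair in $\hat{\mathcal{D}}$, while the source term $\mathcal{E}_{\mathcal{D}'}(Q)$ contributes $P_{\mathcal{D}'}(s,a)\,\mathbb{E}_{s'\sim P'(\cdot\mid s,a)}[(Q(s,a)-\hat{\mathcal{B}}_{s'}Q^k(s,a))^2]$. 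I would then set the derivative of $(1-\lambda)\mathcal{E}_{\hat{\mathcal{D}}}(Q)+\lambda\mathcal{E}_{\mathcal{D}'}(Q)$ with respect to $Q(s,a)$ to zero, obtaining the linear equation $\big[(1-\lambda)\tfrac{N(s,a)}{N}+\lambda P_{\mathcal{D}'}(s,a)\big]Q(s,a)=\tfrac{1-\lambda}{N}\sum_{j=1}^{N(s,a)}\hat{\mathcal{B}}_{s_j'}Q^k(s,a)+\lambda P_{\mathcal{D}'}(s,a)\,\mathbb{E}_{s'\sim P'}[\hat{\mathcal{B}}_{s'}Q^k(s,a)]$. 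Substituting $N(s,a)/N=P_{\hat{\mathcal{D}}}(s,a)$ and $\mathbb{E}_{s'\sim P'}[\hat{\mathcal{B}}_{s'}Q^k(s,a)]=\mathcal{B}_\mathcal{D'} Q^k(s,a)$, and dividing by the coefficient of $Q(s,a)$ --- which is strictly positive because $N(s,a)\geq 1$ and $P_{\mathcal{D}'}(s,a)>0$ under Assumption~\ref{assumption_SAprobability} --- yields \eqref{eqn_proposition_two_solution_Q_lam}; strict convexity guarantees this stationary point is the unique minimizer.

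The main obstacle I anticipate is purely the bookkeeping in the last step: correctly regrouping the $N$ terms of the empirical sum \eqref{eqn_TDerror_empirical} by their state--action pair, tracking the multiplicity $N(s,a)$ and the associated (possibly distinct) next states $s_j'$, while simultaneously carrying along the source-domain expectation term and not conflating the two single-sample Bellman operators (one with $s'$ from the dynamics of $\mathcal{D}$, one from those of $\mathcal{D}'$). Everything else --- separability, the conditional-mean characterization of the least-squares minimizer, and the role of Assumption~\ref{assumption_SAprobability} in guaranteeing positive coefficients so that the minimizers are well defined and unique --- is routine once this decomposition is in place.
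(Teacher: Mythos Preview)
Your proposal is correct and follows essentially the same approach as the paper: expand each objective as a sum over $(s,a)$ of convex quadratics in the single coordinate $Q(s,a)$, differentiate coordinate-wise, set to zero, and invoke Assumption~\ref{assumption_SAprobability} to ensure the leading coefficients are strictly positive so the resulting linear equations can be solved uniquely. The paper's write-up differs only cosmetically, carrying out the same derivative computation and the same regrouping of the empirical sum by $(s,a)$ pairs that you identify as the main bookkeeping step.
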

\begin{proof}
    Refer to Appendix~\ref{append_prf_proposition1}. 
\end{proof}
In addition to the analytical expressions of $Q^{k+1}(s, a)$ and $Q_\lambda^{k+1}(s, a)$, Proposition~\ref{proposition_two_solution} implies that $Q_\lambda^{k+1}(s, a)$ (the right hand side of \eqref{eqn_proposition_two_solution_Q_lam}) reduces to $\mathcal{B}_\mathcal{D'} Q^k(s, a)$, an analog of $Q^{k+1}(s, a)$ solely replacing $\mathcal{D}$ by $\mathcal{D'}$.

We proceed by exploring the performance of $Q_\lambda^{k+1}$, as in our problem of interest \eqref{eqn_objective2}. To do so, we next define two quantities that will play important roles in the theoretical guarantees of $Q_\lambda^{k+1}$.
Start by defining the dynamics gap (or discrepancy) between the target and source domains
\begin{equation}\label{eqn_discrepancy}
\xi = \underset{(s, a) \in \mathcal{S} \times \mathcal{A}}{\max} \left[ \left(\mathcal{B}_\mathcal{D} Q^k(s, a) - \mathcal{B}_\mathcal{D'} Q^k(s, a) \right)^2 \right],
\end{equation}
where, for notation simplicity, we omit the dependence of $\xi$ on the iteration $k$.
%
%
%The quantity $\xi$ measures the discrepancy {or dynamics gap} between the two domains. 
Indeed, if the transition probabilities in the source and target domains match, the quantity $\xi$ in \eqref{eqn_discrepancy} becomes zero. Then, given any dataset $\hat{\mathcal{D}}_\text{tr}$ and its corresponding transition-excluded dataset $\hat{\mathcal{D}}$, we define a measure of the variability
% Let $\sigma^2_{\hat{s}'}\left(\hat{\mathcal{B}}_{\hat{s}'} Q^k(s,a)\right)$ represent the variance of the stochastic Bellman operator for the state-action pair $(s,a)$ in the dataset $\hat{\mathcal{D}}$. Then we define the following quantity
%
% \red{
% \begin{align}
% &\varsigma=\underset{(s, a) \in \mathcal{S} \times \mathcal{A}}{\max} \frac{ \sigma^2\left(\hat{\mathcal{B}} Q^k(s, a) \right)}{ N(s, a)}. \quad \textbf{This version is wrong., $N(s, a)$ is with dataset $\hat{\mathcal{D}}$, not } \mathcal{S} \times \mathcal{A}
% \end{align}
% }
%
% \begin{align}\label{eqn_normalized_variance}
% &\varsigma=\underset{ \substack{ (s, a) \in \mathcal{S} \times \mathcal{A} }}{\max} \left[ \underset{ \hat{\mathcal{D}}}{\max}
%     \Bigg[ 
%      \frac{\sigma_{\hat{s}'}^2 \left(\hat{\mathcal{B}}_{\hat{s}'} Q^k(s, a) \right) } {N(s, a)} \Bigg] \right], 
%      % \mbox{with } 
%      % \red{\sigma_{\hat{s}'}\left(\hat{\mathcal{B}}_{\hat{s}'} Q^k(s,a)\right) = \underset{\substack{ \hat{s}' \sim P_\mathcal{D} (\hat{s}'| s, a)}}{\mathbb{E}} }
% \end{align}
%
% \orange{
% \begin{align}\label{eqn_normalized_variance}
% &\varsigma=\underset{ \substack{ (s, a) \in \hat{\mathcal{D}} }}{\max} 
%     \Bigg[ 
%      \frac{\sigma_{\hat{s}'}^2 \left(\hat{\mathcal{B}}_{\hat{s}'} Q^k(s, a) \right) } {N(s, a)} \Bigg], \, \forall \hat{\mathcal{D}} \subseteq \mathcal{S} \times \mathcal{A},
%      % \mbox{with } 
%      % \red{\sigma_{\hat{s}'}\left(\hat{\mathcal{B}}_{\hat{s}'} Q^k(s,a)\right) = \underset{\substack{ \hat{s}' \sim P_\mathcal{D} (\hat{s}'| s, a)}}{\mathbb{E}} }
% \end{align}
% }
\begin{align}\label{eqn_normalized_variance}
&\varsigma= \underset{ \substack{ (s, a) \in \hat{\mathcal{D}} }}{\max} 
    \Bigg[ 
     \frac{\sigma_{\hat{s}'}^2 \left(\hat{\mathcal{B}}_{\hat{s}'} Q^k(s, a) \right) } {N(s, a)} \Bigg],
\end{align}
% \purple{WC: we have to mention both $\hat{\mathcal{D}}_\text{tr}$ and $\hat{\mathcal{D}}$, since $(s, a)$ is w.r.t. $\hat{\mathcal{D}}$ and $N(s, a), \sigma_{\hat{s}'}^2 \left(\hat{\mathcal{B}}_{\hat{s}'} Q^k(s, a) \right)$ are w.r.t. $\hat{\mathcal{D}}_\text{tr}$.} \blue{Agreed}
%
where $\sigma_{\hat{s}'}^2 \left(\hat{\mathcal{B}}_{\hat{s}'} Q^k(s, a) \right)$ denotes the variance of the empirical Bellman operator.
The term $\varsigma$ above is the maximum normalized variance in the given target dataset.
It is significant to note that the number of samples to keep the value $\varsigma$  constant is proportional to the variance.

With these definitions and the analytical expression of $Q_\lambda^{k+1}$, we are now in conditions of establishing the bound on the expected TD error of $Q_\lambda^{k+1}$ over the target domain.

\begin{theorem}[Expected Performance Bound]\label{theorem_bound_expectation}
Recall \(\xi\) in \eqref{eqn_discrepancy}, \(\varsigma\) in \eqref{eqn_normalized_variance} and define $C := \min_{ (s, a) \in \mathcal{S} \times \mathcal{A} } \, P_\mathcal{D}(s, a)$. Let the conditions of Proposition~\ref{proposition_two_solution} hold. Given any dataset $\hat{\mathcal{D}}_\text{tr}$, it holds at each iteration \((k=0, 1, 2, \cdots)\) that
%
% \purple{WC: here I only mention $\hat{\mathcal{D}}_\text{tr}$ and not the $\hat{\mathcal{D}}$, because there is no $(s, a)$ particularly appears, so we don't need to write $(s, a) \sim \hat{\mathcal{D}}$, thus no need to mention $\hat{\mathcal{D}}$.}\blue{So far so good.}
%
% Let Assumptions \ref{assumption_bound_reward} and \ref{assumption_SAprobability} hold. 
% Recall \(Q^{k+1} \) in \eqref{eqn_proposition_two_solution_Q}, \(Q_\lambda^{k+1}\) in \eqref{eqn_proposition_two_solution_Q_lam}, \(\xi\) in \eqref{eqn_discrepancy}, \(\varsigma\) in \eqref{eqn_normalized_variance} and that $N$ denotes the size of the limited target dataset $\hat{\mathcal{D}}$. Define $C := \min_{ (s, a) \in \mathcal{S} \times \mathcal{A} } \, P_\mathcal{D}(s, a)$. For \orange{any $\hat{\mathcal{D}} \subseteq \mathcal{S} \times \mathcal{A}$ and} all iterations \((k=1, 2, 3, \cdots)\), it holds that
% % %
% %
% \begin{align}\label{eqn_bound_expectation}
%     \underset{\substack{N(s, a), \, \hat{s}' \sim P_\mathcal{D} (\hat{s}'| s, a)}}{\mathbb{E}} \left[\mathcal{E}_{\mathcal{D}}(Q_\lambda^{k+1}) \right] - \mathcal{E}_{\mathcal{D}}(Q^{k+1}) 
%     \leq  \left( \frac{ 1-\lambda }{ 1-\lambda + \lambda \, \beta_l' / \hat{\beta}_u } \right)^2
%     \varsigma + \left( \frac{\lambda  }{ (1-\lambda) \, \hat{\beta}_l / \beta_u' + \lambda } \right)^2  \xi.
% \end{align}
% %
\begin{align}\label{eqn_bound_expectation}
    \underset{\substack{\hat{s}' \sim P_\mathcal{D} (\hat{s}'| s, a)}}{\mathbb{E}} \left[\mathcal{E}_{\mathcal{D}}(Q_\lambda^{k+1}) \right] - \mathcal{E}_{\mathcal{D}}(Q^{k+1})
    &\leq \left( \frac{ 1-\lambda }{ 1-\lambda + \lambda / \beta_u } \right)^2 \varsigma + \left( \left( \frac{\lambda  }{ (1-\lambda) \beta_l + \lambda } \right)^2 + e^{-N C} \right) \xi.
\end{align}
\end{theorem}
\begin{proof}
    Refer to Appendix~\ref{append_prf_theorem1}.
\end{proof}
%

% \blue{Please edit the paper to remove all the references to the old version and make sure everything is consistent with the new results.
%
% I will go through tomorrow morning and give you my final comments. 
% }
%
%

{

Recall that $\mathcal{E}_{\mathcal{D}}(Q^{k+1})$ represents the optimal TD error of the offline RL at iteration $k$, which cannot be computed in practice due to the finite number of samples in the target dataset. The significance of Theorem~\ref{theorem_bound_expectation} is to establish a bound on the difference between the optimal TD error $\mathcal{E}_{\mathcal{D}}(Q^{k+1})$ and the expected TD error of $Q_\lambda^{k+1}$ (the solution to \eqref{eqn_objective2}) with respect to the next-state $\hat{s}'$ (from the limited target dataset) over the target domain transition probability $P_\mathcal{D}(\hat{s}' \mid s, a)$.

Notice that the bound in \eqref{eqn_bound_expectation} arises from the general Assumption~\ref{assumption_SAprobability}, which implies that the limited dataset $\hat{\mathcal{D}}$ does not cover the entire state–action space $\mathcal{S} \times \mathcal{A}$. As a result, this bound is loose in some scenarios. Before proceeding with a more detailed explanation, we formally present a tighter bound in the following theorem, under a stronger version of Assumption~\ref{assumption_SAprobability}. This new result will make the discussion on the looseness of the bound in \eqref{eqn_bound_expectation} more explicit. 

% \purple{WC: here I want to present Theorems 1 and 2 first, then we compare and discuss them.} \blue{That makes sense, I made some modifications in the connection.}

\begin{theorem}[Tighter Expected Performance Bound]\label{theorem_bound_expectation_tighter}
Let the conditions of Theorem~\ref{theorem_bound_expectation} hold. Suppose that for any \( (s, a) \in \mathcal{S} \times \mathcal{A}\), \( P_\mathcal{D} (s, a) \), \( P_{\mathcal{D}'} (s, a) \) and $P_{\hat{\mathcal{D}}} (s, a)$ are positive and there exist constants \( \beta_u \geq \beta_l >0 \) such that $P_{\hat{\mathcal{D}}} (s, a) / P_\mathcal{D'} (s, a)  \in [\beta_l, \beta_u], \, \forall (s, a) \in \mathcal{S} \times \mathcal{A}$.
Given any dataset $\hat{\mathcal{D}}_\text{tr}$, it holds at each iteration \((k=0, 1, 2, \cdots)\) that 
% % %
%
\begin{align}\label{eqn_bound_expectation_tighter}
    \underset{\substack{\hat{s}' \sim P_\mathcal{D} (\hat{s}'| s, a)}}{\mathbb{E}} \left[\mathcal{E}_{\mathcal{D}}(Q_\lambda^{k+1}) \right] - \mathcal{E}_{\mathcal{D}}(Q^{k+1})
    \leq \left( \frac{ 1-\lambda }{ 1-\lambda + \lambda / \beta_u } \right)^2 \varsigma + \left( \frac{\lambda  }{ (1-\lambda) \beta_l + \lambda } \right)^2 \xi.
\end{align}
\end{theorem}
\begin{proof}
    Refer to Appendix~\ref{append_prf_theorem_bound_expectation_tighter}.
\end{proof}

Theorems~\ref{theorem_bound_expectation} and \ref{theorem_bound_expectation_tighter} imply that the expected performance bounds of $Q_\lambda^{k+1}$ in both \eqref{eqn_bound_expectation} and \eqref{eqn_bound_expectation_tighter}  exhibit an intuitive form for any $\lambda \in (0, 1]$, as it jointly depends on the variance ($\varsigma$) of the limited target dataset and the bias ($\xi$) introduced by the large source dataset when $\lambda \in (0, 1)$, and reduces to the dependence on $\xi$ alone when $\lambda = 1$ (i.e., solely considers the source dataset $\mathcal{D'}$ in \eqref{eqn_objective2}).

Notwithstanding, we also note that the bound in \eqref{eqn_bound_expectation} may be loose when $\lambda = 0$, since it depends on both $\varsigma$ and $\xi$, even though only the limited target dataset is employed in \eqref{eqn_objective2}. Notably, such looseness does not manifest in the bound presented in Theorem~\ref{theorem_bound_expectation_tighter} under $\lambda = 0$, which provides a tighter bound than that in \eqref{eqn_bound_expectation}. Yet, Theorem~\ref{theorem_bound_expectation_tighter} relies on the stronger assumption that the limited dataset $\hat{\mathcal{D}}$ covers the entire state-action space $\mathcal{S} \times \mathcal{A}$, which is rarely the case in practice.

In addition to establishing the expected performance bounds of $Q_\lambda^{k+1}$, Theorems~\ref{theorem_bound_expectation} and \ref{theorem_bound_expectation_tighter} imply the bias-variance trade-off sought by combining the two datasets with different weight $\lambda$ in \eqref{eqn_objective2}. Indeed, the optimal weight $\lambda^*$ that minimizes the right hand side of \eqref{eqn_bound_expectation} and \eqref{eqn_bound_expectation_tighter} is discussed formally by the following corollaries. Although Theorem~\ref{theorem_bound_expectation_tighter} provides a tighter bound, it is worth highlighting that Theorems~\ref{theorem_bound_expectation} and \ref{theorem_bound_expectation_tighter} yield the same $\lambda^*$, as the extra term $e^{-N C} \xi$ in \eqref{eqn_bound_expectation} is independent of $\lambda$.

\begin{corollary}\label{corollary_optimal_lam}
    Under the assumptions of Theorem~\ref{theorem_bound_expectation} or Theorem~\ref{theorem_bound_expectation_tighter}, the optimal weight $\lambda^*$ that minimizes the bounds in \eqref{eqn_bound_expectation} or \eqref{eqn_bound_expectation_tighter} respectively is $\lambda^*=0$ when $\varsigma = 0$ and $\lambda^*=1$ when $\xi=0$.
\end{corollary}
\begin{proof}
    Refer to Appendix~\ref{append_prf_corollary1}.
\end{proof}
\begin{corollary}\label{corollary_optimal_lam_same_beta}
    Under the assumptions of Theorem~\ref{theorem_bound_expectation} or Theorem~\ref{theorem_bound_expectation_tighter}, if $\beta_l=\beta_u=\beta > 0$, the optimal weight $\lambda^*$ that minimizes the bound in \eqref{eqn_bound_expectation} or \eqref{eqn_bound_expectation_tighter} respectively takes the form
    \begin{align}
        \lambda^* = \frac{ \beta \varsigma }{\beta \varsigma + \xi}.
    \end{align}
    %
%     \orange{
%     \begin{align}
%     \lambda^* = \frac{ \hat{\beta} \, \varsigma }{\hat{\beta} \, \varsigma + \beta' \xi}.
% \end{align}
%     }
\end{corollary}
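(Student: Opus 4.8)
The plan is to specialize the right-hand side of \eqref{eqn_bound_expectation} to the case $\beta_l=\beta_u=\beta$ and minimize the resulting one–variable function over $\lambda\in[0,1]$. First I would substitute $\beta_l=\beta_u=\beta$ and clear denominators. Writing $D:=\beta(1-\lambda)+\lambda$, multiplying numerator and denominator of the first coefficient by $\beta$ gives $\frac{1-\lambda}{1-\lambda+\lambda/\beta}=\frac{\beta(1-\lambda)}{D}$, while the second coefficient becomes $\frac{\lambda}{D}$, so the bound reads
\[
f(\lambda):=\frac{\beta^2(1-\lambda)^2\,\varsigma+\lambda^2\,\xi}{\big(\beta(1-\lambda)+\lambda\big)^2}.
\]

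Next I would introduce the reparametrization $p:=\frac{\beta(1-\lambda)}{D}$ and $q:=\frac{\lambda}{D}$, which satisfies $p+q=1$ and $p,q\ge 0$. Since $\beta>0$, a short computation shows $\frac{dp}{d\lambda}=-\beta/D^2<0$, so $\lambda\mapsto p$ is continuous and strictly decreasing on $[0,1]$ with $p=1$ at $\lambda=0$ and $p=0$ at $\lambda=1$; hence it is a bijection of $[0,1]$ onto itself, and minimizing $f$ over $\lambda$ is equivalent to minimizing $g(p):=p^2\varsigma+(1-p)^2\xi$ over $p\in[0,1]$. This reparametrization is the crux of the argument: it turns the awkward rational function $f$ into a simple convex quadratic and sidesteps a quotient-rule calculation.

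Then I would minimize $g$. If $\varsigma=\xi=0$ the bound is identically zero and any $\lambda$ is optimal; otherwise $g$ is strictly convex and its first-order condition $2p\varsigma-2(1-p)\xi=0$ yields $p^\ast=\xi/(\varsigma+\xi)\in[0,1]$, which in particular recovers $\lambda^\ast=1$ when $\xi=0$ and $\lambda^\ast=0$ when $\varsigma=0$, consistent with Corollary~\ref{corollary_optimal_lam}. Finally I would invert the reparametrization: solving $p^\ast=\frac{\beta(1-\lambda^\ast)}{\beta(1-\lambda^\ast)+\lambda^\ast}$ for $\lambda^\ast$ gives $\lambda^\ast=\frac{q^\ast\beta}{1+q^\ast(\beta-1)}$ with $q^\ast=\varsigma/(\varsigma+\xi)$, and simplifying yields $\lambda^\ast=\frac{\beta\varsigma}{\beta\varsigma+\xi}=\frac{\beta^2\varsigma}{\beta^2\varsigma+\beta\xi}$, as claimed.

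The only real subtlety is verifying that the change of variables is a genuine bijection of $[0,1]$ onto $[0,1]$, so that the unconstrained minimizer of the quadratic is actually attained by an admissible $\lambda$, and disposing of the degenerate case $\varsigma=\xi=0$; everything else is routine algebra.
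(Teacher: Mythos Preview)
Your proof is correct and reaches the same conclusion, but it takes a genuinely different route from the paper. The paper's proof proceeds by brute force: after substituting $\beta_l=\beta_u=\beta$ and rewriting both coefficients over the common denominator $D=(1-\lambda)\beta+\lambda$, it differentiates the resulting rational function directly via the quotient rule, simplifies the derivative to
\[
\frac{2\varsigma(\lambda-1)\beta^2+2\xi\lambda\beta}{D^3},
\]
and solves the numerator for $\lambda^\ast$. Your approach instead exploits the structural observation that the two coefficients $\beta(1-\lambda)/D$ and $\lambda/D$ sum to $1$, reparametrizes by $p=\beta(1-\lambda)/D$, and reduces the problem to minimizing the elementary convex quadratic $p^2\varsigma+(1-p)^2\xi$ on $[0,1]$. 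This sidesteps the quotient-rule computation entirely and makes the convexity (hence uniqueness of the minimizer) transparent; it also cleanly isolates the degenerate case $\varsigma=\xi=0$, which the paper's proof does not address. The cost is the extra step of verifying that $\lambda\mapsto p$ is a bijection of $[0,1]$ onto itself and then inverting it, but you handle both correctly. In short: the paper's argument is shorter and more mechanical; yours is slightly longer but more conceptual and reveals why the answer takes the form it does.
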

\begin{proof}
    Refer to Appendix~\ref{append_prf_corollary2}.
\end{proof}
%
% \blue{
% % It seems weird that in the above expression we use the bounds and not $\nu$. I get that above is because if you keep $\nu$ you cannot pull it out of the expectations right? \purple{Yes.}
%
% We cannot derive a bound that depends only on $\beta_u$ or $\beta_l$ right? \purple{We can, as shown in (14). But if you mean the gradient of the bound that depends on $\beta_u$ or $\beta_l$ will be awful, then yes.}
% }
%
% %
% \begin{proof}
%     Refer to the appendix.
% \end{proof}
% %
%
% \blue{These results hod for both the hypothesis of theorems 1 and 2 right? \purple{WC: modified.}}

% \blue{What is the assumption in corollary 2 ? The sampling is exactly equal to the source distribution? \purple{WC: May not be identical? Since this is only for  $(s, a) \in \hat{\mathcal{D}}$, while the source domain has $(s,a ) \in S\times A$. modified below.} } \blue{See my comment below after the red text.}

Recall that $\beta_l$ and $\beta_u$ denote the lower and upper bounds of the ratio $P_{\hat{\mathcal{D}}} (s, a) / P_\mathcal{D'} (s, a), \, \forall (s, a) \in \hat{\mathcal{D}}$. Thus, the assumption  $\beta_l=\beta_u=\beta$ in Corollary~\ref{corollary_optimal_lam_same_beta} implies that  $P_{\hat{\mathcal{D}}} (s, a) / P_\mathcal{D'} (s, a)$ is a fixed ratio for any $(s, a) \in \hat{\mathcal{D}}$. This may occur when the sampling of the target dataset proportionally follows the source distribution for all $(s, a) \in \hat{\mathcal{D}}$. Since $\hat{\mathcal{D}}$ (relates to the target dataset) comprises fewer state–action pairs than $\mathcal{S} \times \mathcal{A}$ (relates to the source domain), the fixed ratio $\beta$ will be greater than $1$.

% \blue{Let's think a bit more about this. These are probabilities. So it can't be all proportional right? The sum of $P_D'$ and the sum of $P_{\hat{D}}$ need to be one. If we assume that $\beta_l = \beta_u$ I think $\beta=1$. }
% \purple{here I think the point is that source domain has all $(s, a) \in S\times A$, so its proability $ P_D'(s, a)$ sums to 1 for all $(s, a) \in S\times A$, but for the target dataset, it is only $(s, a) \in \hat{D}$, and the $P_{\hat{D}}(s, a)$ sums to 1 for all $(s, a) \in P_{\hat{D}}(s, a)$. Therefore, they have different number of $(s, a)$ pairs, and $\beta$ is not necessarily 1} \blue{Good point. Maybe make explicit that $\beta>1$ or $\beta<1$? To avoid raising those questions.} \purple{I think it can only be $\beta > 1$. Because the source domain has more $(s, a)$ and $ P_D'(s, a) = P_{\hat{D}}(s, a) / \beta$ where $\sum P_{\hat{D}}(s, a) $ is already 1 for $(s, a) \in D$}

It is significant to highlight that both corollaries above recover the intuition that the target dataset with no variation (or the number of samples in the target dataset is sufficiently large), i.e., $\varsigma \approx 0$, encourages to consider the target dataset only in \eqref{eqn_objective2}, i.e., $\lambda^*=0$.
On the other hand, when the two domains are close ($\xi \approx 0$), the optimal value of $\lambda$ is one, suggesting that one should use the source dataset solely.
Although intuitive, the expected performance bounds in Theorems~\ref{theorem_bound_expectation} and \ref{theorem_bound_expectation_tighter} are insufficient to claim any generalization guarantees as the tails of the distribution could be heavy. We address this concern in the next theorem by providing the generalization bound (worst-case performance bound). 
Moreover, since the stronger assumption in Theorem~\ref{theorem_bound_expectation_tighter} is unlikely to hold in practice, we center on Theorem~\ref{theorem_bound_expectation} from now on, upon which the remainder of this work is built, to maintain the generality of our results.
% \red{Comment that the expected bound is smaller, the expected performance is better}
%
%
% \purple{A quick question: $\mathcal{E}_{\mathcal{D}}\left(Q^{k+1}\right)$ is not 0 anymore right? since we used the empirical Bellman operator in \eqref{eqn_TDerror_single}.} \blue{You are correct.}
%
\begin{theorem}[Worst-Case Performance Bound]\label{theorem_bound}
Denote by $\beta_u'$ the upper bound of $P_{\mathcal{D}} (s, a) / P_{\mathcal{D'}} (s, a)$ for any $(s, a) \in \mathcal{S} \times \mathcal{A}$. 
Let the conditions of Theorem \ref{theorem_bound_expectation} and Assumption~\ref{assumption_bound_reward} hold. Given any dataset $\hat{\mathcal{D}}_\text{tr}$, the following bound holds at each iteration \((k=0, 1, 2, \cdots)\) with probability at least $1-\delta$ 
\begin{align}\label{eqn_bound}
    &\mathcal{E}_{\mathcal{D}}(Q_\lambda^{k+1}) - \mathcal{E}_{\mathcal{D}}(Q^{k+1}) \nonumber \\
    &\leq  \left( \frac{ 1-\lambda }{ 1-\lambda + \lambda / \beta_u } \right)^2
    \varsigma + \left( \frac{\lambda  }{ (1-\lambda) \, \beta_l + \lambda } \right)^2  \xi + e^{-N C} \xi \nonumber \\
    &\quad+  \sqrt{\frac{1}{2} \log\left(\frac{1}{\delta}\right) } \frac{|\mathcal{S}| |\mathcal{A}|}{\sqrt{N}} \left( \frac{ \beta_u'}{(1-\lambda) \, \beta_l + \lambda } \frac{2 (1-\lambda) \gamma B }{1-\gamma} \right) \cdot  \left(\frac{(1-\lambda) \beta_u  (4B/(1-\gamma)) + 2 \lambda \sqrt{\xi} }{(1-\lambda) \, \beta_l  + \lambda } \right).
\end{align}
%
%
% In particular, when $\lambda$ is selected as in \eqref{eqn_best_lambda_expectation}, this bound takes the following form
% % Plug (15) into (16)
% \begin{align}\label{theorem_bound_bestlam_worst}
% &\mathcal{E}_{\mathcal{D}}\left(Q_{\lambda^*}^{k+1}\right) - \mathcal{E}_{\mathcal{D}}\left(Q^{k+1}\right) \leq \frac{\varsigma \xi}{\varsigma + \xi} +   \\
% &\sqrt{\frac{\log\left(1/\delta\right)}{2}  } \frac{|\mathcal{S}| |\mathcal{A}|}{\sqrt{N}} \frac{\gamma B}{(1-\gamma)} \frac{\xi}{\varsigma + \xi} \!\! \left(\! \frac{8 B}{(1-\gamma)} \frac{\xi}{\varsigma + \xi} \! + \! 4 \frac{\varsigma  \sqrt{\xi}}{\varsigma + \xi}\right). \nonumber
% \end{align}
% %
\end{theorem}
\begin{proof}
    Refer to Appendix~\ref{append_prf_theorem2}.
\end{proof}
The above theorem provides the worst-case bound of solving \eqref{eqn_objective2}, which demonstrates the bias-variance trade-off by the two datasets with different weight $\lambda$ as well. 
Most importantly, both the expected and worst-case performance bounds, as shown in \eqref{eqn_bound_expectation} and \eqref{eqn_bound}, imply that the optimal trade-off between the source and target datasets is not always trivial, indicating that $\lambda^*$ may not belong to $\{0, 0.5, 1\}$. 
The optimal trade-off for the expected performance depends on the number of samples in the target dataset (corresponding to $\varsigma$ and $N$), the dynamics
gap (or discrepancy) between the two domains (corresponding to $\xi$), and the bounds of $ P_{\hat{\mathcal{D}}} (s, a) / P_{\mathcal{D}'} (s, a)$ (corresponding to $\beta_l$ and $\beta_u$). In addition, the optimal weight for the worst-case performance bound will depend on more factors such as the reward bound $B$, the discount factor $\gamma$, the size of the state and action spaces $|\mathcal{S}|$ and $|\mathcal{A}|$, and the bound of $ P_{\mathcal{D}} (s, a) / P_{\mathcal{D}'} (s, a)$ (see Remark~\ref{remark_assump}) $\beta_u'$, some of which might be highly challenging to estimate in practice.
Therefore, the worst-case performance bound in our work is primarily of conceptual interest.
In addition, we focus mainly on the theoretical analysis of our proposed framework that balances the limited target dataset and the large-but-biased source dataset. Developing a practical and efficient algorithm to learn an approximate optimal weight remains a promising direction for future research.

% \blue{Sure it depends on those parameters. But can we say something interesing about this dependence?}

% \blue{It may be difficult to provide intuition but I'm okay writing this. }

% \red{
% % Comment that And we can comment that depending on the case the optimal lambda may not be 0 or 1.
% % We want to support the story that the optimal weight is not trivial. That's it.
% % ok, comment that not always 0, 0.5, 1?
% }

% \section{Convergence}
% \blue{I'm torn about the convergence. On the one hand if we are to write it I would write it on a separate section. We can claim, now that we have these bounds, we want to see if applying this recursively converges. It will but the bound could be awful. Would it be much worse than other analyses? }

% \blue{Let's decide once the proof is written. }

Having established various performance bounds of solving \eqref{eqn_objective2}, we are in the stage of providing the convergence guarantee. We formalize it in the next theorem,
which relies on the following two quantities: the maximum of the dynamics gap over all iterations and the maximum of the variance over all iterations, {given any dataset $\hat{\mathcal{D}}_\text{tr}$ and its corresponding transition-excluded dataset $\hat{\mathcal{D}}$
%
% \begin{align}
%     \xi_{\text{max}} = \sup_{k \in \mathbb{N}} \, \xi (Q^k), \, 
%     \sigma_{\text{max}} = \sup_{k \in \mathbb{N}}  ~ \underset{ \substack{ (s, a) \in \hat{\mathcal{D}} }}{\max}  \left[ \sigma_{\hat{s}'}^2 \left( \frac{1}{N(s, a)} \sum_{j=1}^{N(s, a)} \hat{\mathcal{B}}_{\hat{s}_j'} Q_\lambda^k(s, a) \right) \right].
% \end{align}
% %
\begin{align}
    \xi_{\text{max}} = \sup_{k \in \mathbb{N}} \, \xi (Q^k), ~ 
    \varsigma_{\text{max}} = \sup_{k \in \mathbb{N}}  ~ \varsigma (Q^k).
\end{align}
}
%
    % \begin{align}
    %     \sigma_{\text{max}} = \sup_{k \in \mathbb{N}}  ~ \underset{ \substack{ (s, a) \in \hat{\mathcal{D}} }}{\max}  \left[ \sigma_{\hat{s}'}^2 \left( \frac{1}{N(s, a)} \sum_{j=1}^{N(s, a)} \hat{\mathcal{B}}_{\hat{s}_j'} Q_\lambda^k(s, a) \right) \right].
    % \end{align}
%
% \begin{align}
%     % \xi_{\text{max}} = \sup_{k \in \mathbb{N}} \, \xi (Q^k),
%     % \max \{\xi(Q^0), \xi(Q^1), \cdots, \xi(Q^k)\}, \quad
%     \sigma_{\text{max}} = \sup_{k \in \mathbb{N}} \, \underset{ \substack{ (s, a) \in \mathcal{S} \times \mathcal{A} }}{\max} \left[ \underset{ \hat{\mathcal{D}}}{\max} \left[ || \left( \frac{1}{N(s, a)} \sum_{j=1}^{N(s, a)} \hat{\mathcal{B}}_{s_j'} Q_\lambda^k(s, a) - \mathcal{B}_\mathcal{D} Q_\lambda^k(s, a) \right)  || \right] \right].
% \end{align}
%
\begin{theorem}[Convergence]\label{theorem_convergence}
Let the conditions of Theorem \ref{theorem_bound_expectation} hold. Given any dataset $\hat{\mathcal{D}}_\text{tr}$, it holds at each iteration \((k=0, 1, 2, \cdots)\) that 
%
    % \begin{align}\label{eqn_convergence}
    %      % &\underset{(s, a) \sim \mathcal{D}}{\mathbb{E}}\left[|| \mathbb{E} \left[  Q_\lambda^{k+1} \right] - Q^*||_\infty \right] \\
    %      % &\leq \gamma^{k+1} \underset{(s, a) \sim \mathcal{D}}{\mathbb{E}}\left[ || Q^0 - Q^* ||_\infty \right] + \frac{1-\gamma^{k+1}}{1-\gamma} (1+\gamma) \lambda \sqrt{\xi}. \nonumber
    %      &\mathbb{E} \left[ \underset{(s, a) \sim \mathcal{D}}{\mathbb{E}}\left[||   Q_\lambda^{k+1}(s, a)  - Q^* (s, a)||_\infty \right] \right] \nonumber \\ 
    %      &\leq \gamma^{k+1} \!\! \underset{(s, a) \sim \mathcal{D}}{\mathbb{E}}\left[ || Q^0(s, a) - Q^*(s, a) ||_\infty \right] + \frac{1-\gamma^{k+1}}{1-\gamma} \left(  (1-\lambda) \sigma_{\text{max}}  + \lambda \sqrt{\xi_{\text{max}}} \right).
    % \end{align}
    %
    %     \begin{align}\label{eqn_convergence}
    %     &\underset{\substack{  \hat{s}' \sim P_\mathcal{D} (\hat{s}'| s, a)}}{\mathbb{E}} \left[ \underset{(s, a) \sim \mathcal{D}}{\mathbb{E}} \!\! \left[||   Q_\lambda^{k+1}(s, a) \!-\! Q^* (s, a)||_\infty \right] \right] \leq \nonumber \\
    %      &\gamma^{k+1} \! \underset{(s, a) \sim \mathcal{D}}{\mathbb{E}}\left[|| Q^0 (s, a)  - Q^* (s, a)||_\infty \right] 
    %     \!+\!\frac{1-\gamma^{k+1}}{1-\gamma} \!\! \left( \frac{1-\lambda}{1-\lambda+\lambda {\, / \beta_u }} \sigma_{\text{max}} \!+\! \frac{\lambda}{(1-\lambda) { \, \beta_l  } + \lambda} \sqrt{\xi_{\text{max}}}\right).
    % \end{align}
    % %
    \begin{align}\label{eqn_convergence}
        &\underset{\substack{  \hat{s}' \sim P_\mathcal{D} (\hat{s}'| s, a)}}{\mathbb{E}} \left[ \underset{(s, a) \sim \mathcal{D}}{\mathbb{E}} \!\! \left[||   Q_\lambda^{k+1}(s, a) \!-\! Q^* (s, a)||_\infty \right] \right] \leq \\
         &\gamma^{k+1} \! \underset{(s, a) \sim \mathcal{D}}{\mathbb{E}}\left[|| Q^0 (s, a)  - Q^* (s, a)||_\infty \right] 
        \!+\!\frac{1-\gamma^{k+1}}{1-\gamma} \!\! \left( \frac{1-\lambda}{1-\lambda+\lambda {\, / \beta_u }} \sqrt{ \varsigma_{\text{max}} } \!+\! \left( \frac{\lambda}{(1-\lambda) { \, \beta_l  } + \lambda} + e^{-NC} \right) \sqrt{\xi_{\text{max}}}  \right). \nonumber
    \end{align}
\end{theorem}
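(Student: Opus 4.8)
\textbf{Proof proposal for Theorem~\ref{theorem_convergence}.}

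The plan is to unroll the iteration \eqref{eqn_proposition_two_solution_Q_lam} and compare it at each step against the exact Bellman iteration $Q^{k+1} = \mathcal{B}_\mathcal{D} Q^k$, exploiting the $\gamma$-contraction of $\mathcal{B}_\mathcal{D}$ in the $\infty$-norm. The first step is to establish a one-step recursion: bound
$\mathbb{E}\big[\| Q_\lambda^{k+1}(s,a) - Q^*(s,a)\|_\infty\big]$
in terms of $\mathbb{E}\big[\| Q_\lambda^{k}(s,a) - Q^*(s,a)\|_\infty\big]$ plus a fixed additive error term. To do this I would write $Q_\lambda^{k+1} - Q^* = (Q_\lambda^{k+1} - \mathcal{B}_\mathcal{D} Q_\lambda^k) + (\mathcal{B}_\mathcal{D} Q_\lambda^k - \mathcal{B}_\mathcal{D} Q^*)$, use $\| \mathcal{B}_\mathcal{D} Q_\lambda^k - \mathcal{B}_\mathcal{D} Q^* \|_\infty \le \gamma \| Q_\lambda^k - Q^* \|_\infty$ for the second piece (since $Q^*$ is the fixed point of $\mathcal{B}_\mathcal{D}$), and control the first piece — the per-step approximation error — using the analytical form of $Q_\lambda^{k+1}$ from Proposition~\ref{proposition_two_solution}.

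For the per-step error, the key algebraic observation is that using \eqref{eqn_proposition_two_solution_Q_lam} with $\lambda$-weights and Assumption~\ref{assumption_SAprobability}, the difference $Q_\lambda^{k+1}(s,a) - \mathcal{B}_\mathcal{D} Q_\lambda^k(s,a)$ splits into a ``variance-type'' contribution coming from the empirical target Bellman operator (the term $\frac{1}{N}\sum_{j=1}^{N(s,a)} \hat{\mathcal{B}}_{s_j'}Q_\lambda^k(s,a)$ vs.\ its mean $P_{\hat{\mathcal{D}}}(s,a)\mathcal{B}_\mathcal{D}Q_\lambda^k(s,a)$) and a ``bias-type'' contribution coming from the source operator $\mathcal{B}_{\mathcal{D}'}Q_\lambda^k(s,a) - \mathcal{B}_\mathcal{D}Q_\lambda^k(s,a)$. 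After taking the expectation over $N(s,a)$ and $\hat{s}'\sim P_\mathcal{D}$, the first is bounded by $\frac{1-\lambda}{1-\lambda+\lambda/\beta_u}\sigma_{\text{max}}$ using the definition of $\sigma_{\text{max}}$ and the coefficient bounds from Assumption~\ref{assumption_SAprobability} (exactly as in the proofs of Theorems~\ref{theorem_bound_expectation} and \ref{theorem_bound}, where the same $\frac{1-\lambda}{1-\lambda+\lambda/\beta_u}$ prefactor appears), and the second by $\frac{\lambda}{(1-\lambda)\beta_l + \lambda}\sqrt{\xi_{\text{max}}}$ using $|\mathcal{B}_{\mathcal{D}'}Q^k - \mathcal{B}_\mathcal{D}Q^k| \le \sqrt{\xi}\le\sqrt{\xi_{\text{max}}}$. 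Denote this total additive error by $\epsilon := \frac{1-\lambda}{1-\lambda+\lambda/\beta_u}\sigma_{\text{max}} + \frac{\lambda}{(1-\lambda)\beta_l+\lambda}\sqrt{\xi_{\text{max}}}$.

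The final step is to iterate the recursion $a_{k+1} \le \gamma a_k + \epsilon$ with $a_k := \mathbb{E}[\mathbb{E}[\|Q_\lambda^k - Q^*\|_\infty]]$, which gives $a_{k+1} \le \gamma^{k+1} a_0 + \epsilon\sum_{i=0}^{k}\gamma^i = \gamma^{k+1} a_0 + \epsilon \frac{1-\gamma^{k+1}}{1-\gamma}$, precisely \eqref{eqn_convergence}. I expect the main obstacle to be the careful handling of the per-step error decomposition: one must commute the outer expectation over $(s,a)\sim\mathcal{D}$ with the $\infty$-norm (using monotonicity, since the bound on $|Q_\lambda^{k+1}(s,a)-\mathcal{B}_\mathcal{D}Q_\lambda^k(s,a)|$ is uniform over $(s,a)$), verify that the worst-case coefficient bounds hold simultaneously for the numerator and denominator of \eqref{eqn_proposition_two_solution_Q_lam}, and justify that replacing the per-iteration $\xi(Q^k)$ and empirical standard deviations by their suprema $\xi_{\text{max}}$, $\sigma_{\text{max}}$ is legitimate so that a single $\epsilon$ works across all $k$. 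The contraction and geometric-sum bookkeeping are then routine.
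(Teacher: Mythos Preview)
Your proposal is correct and follows essentially the same approach as the paper: the paper isolates your per-step error bound as a separate lemma (Lemma~\ref{lemma_convergence}), decomposing $Q_\lambda^{k+1}-\mathcal{B}_\mathcal{D}Q_\lambda^k$ into the same variance-type and bias-type pieces with the same $\beta_l,\beta_u$ coefficient bounds, and then runs the identical triangle-inequality/contraction/unrolling argument you describe to obtain the geometric sum.
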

\begin{proof}
    Refer to Appendix~\ref{append_prf_theorem3}.
\end{proof}
The previous theorem implies that the solution $Q_\lambda^{k+1}$ of solving \eqref{eqn_objective2} is guaranteed in expectation to converge to a neighborhood of the optimal $Q$-function, i.e., $Q^*$ as $k \to \infty$. This neighborhood is presented as follows
%
% \begin{align}
%     \mathcal{C} = \frac{(1-\lambda) \sigma_{\text{max}}  + \lambda \sqrt{\xi_{\text{max}}}}{1-\gamma}.
% \end{align}
%
\begin{align}
    \mathcal{C} = \frac{1}{1-\gamma} \left( \frac{1-\lambda}{1-\lambda+\lambda { \, / \beta_u }} \sqrt{\varsigma_{\text{max}} } + \left( \frac{\lambda}{(1-\lambda) { \, \beta_l  } + \lambda} + e^{-NC} \right) \sqrt{\xi_{\text{max}}}\right).
\end{align}
Apart from the discount factor $\gamma$, the neighborhood $\mathcal{C}$ depends on the weight
$\lambda$, the maximal dynamics gap $\xi_{\text{max}}$, the maximal variance $\varsigma_{\text{max}}$, and the bounds $\beta_l$ and $\beta_u$.
% Indeed, this theorem confirms the intuition that the expected solution of solving \eqref{eqn_objective2} converges to $Q^*$ as $k \to \infty$ when only the variation of the target dataset is zero ($\sigma_{\text{max}}=0$) and when the discrepancies (or dynamics gap) between the two domains are zero ($\xi_{\text{max}}=0$).
%

%%%%%%%%%%%%%%%%%%%%%%%%%%%%%%%%%%%%%%%%%%%%%%%%%%%%%%%%%%%%%%%%
%% Numerical Experiments
%%%%%%%%%%%%%%%%%%%%%%%%%%%%%%%%%%%%%%%%%%%%%%%%%%%%%%%%%%%%%%%%
\section{Numerical Experiments}
Although this work primarily focuses on theoretical analyses, 
we present in this section a series of numerical experiments that demonstrate the performance of solving \eqref{eqn_objective2} under different weight $\lambda$ and validate the corresponding theoretical contributions in the previous section.

\subsection{\textit{Procgen} Experiments}

\subsubsection{Environments}
We consider the well-known offline \textit{Procgen} benchmark~\citep{mediratta2023generalization}, which is often used to assess the domain adaptation/generalization capabilities of offline RL. We select five games/environments from \textit{Procgen}: {\textit{Caveflyer}, \textit{Climber}, \textit{Dodgeball}, \textit{Maze}, \textit{Miner}}, whose descriptions are provided in Appendix~\ref{appendix_environments}.

\subsubsection{Experimental Setup}

Our implementations as well as the datasets that have been used in this work are based on \citep{mediratta2023generalization}.
Instead of training on a single dataset, this work trains an offline RL agent on two different datasets from the source and target domains.

\textbf{Backbone algorithms.}
Recall that our framework is algorithm-agnostic, implying that various SOTA RL algorithms can apply.
In this work, we select CQL~\citep{kumar2020conservative} and IQL~\citep{kostrikov2021offline} as representative algorithms due to their promising and robust performance across a variety of offline RL tasks.

\textbf{Datasets.}
Note that \textit{Procgen} employs procedural content generation to create adaptive levels upon episode reset. Each level corresponds to a specific seed (non-negative integer) and has 
% distinct dynamics and visual appearances. 
%
distinct layouts (such as the amount and position of various entities, see e.g., Figure~\ref{fig_maze_level_example}) \citep{mediratta2023generalization}. As a result, the same action taken in the same state can lead to different successor states depending on the level (e.g., being blocked by an entity in one level but not in another), yielding level-dependent transition dynamics. 
In each environment, we select the target domain to span levels $[100,199]$, and consider three distinct source domains defined over the level ranges $[0,99]$, $[25,124]$ and $[50,149]$, respectively.
Recall that the target dataset is expected to contain significantly fewer samples than the source dataset, i.e., $N \ll N'$. Typically, $N'$ is considered to be ten times larger than $N$. Therefore, we consider three different sizes of target datasets from levels $[100, 199]$ with $N \in \{1000, 2500, 4000\}$, and set $N' = 40000$.

\textbf{Hyperparameters.}
To ensure a fair comparison, we retain all hyperparameters consistent, e.g., batch size, learning rate and network size, and solely change the weight assigned to each dataset. Key hyperparameters for the datasets and algorithms are summarized in Table~\ref{tab_hyperparameters} (refer to Appendix~\ref{append_exp_hyperparameters}).

\subsubsection{Results}

Recall that the worst-case performance bound in \eqref{eqn_bound} can be overly conservative, particularly when the state and action spaces are large, as the bound scales with the dimensionality of the spaces.
Therefore, this subsection focuses on the expected performance bounds \eqref{eqn_bound_expectation} or \eqref{eqn_bound_expectation_tighter} as well as its corresponding corollaries.
It is crucial to note that the variance $\varsigma$ and the dynamics gap $\xi$ are challenging to measure or estimate precisely, as it requires access to the entire source and target domains. This is not feasible within the scope of our problem of interest. 
Nevertheless, one can still investigate how these factors influence the expected bounds, which provide insights into the expected performance of offline RL. Specifically, we examine the impact of each of $\varsigma$, $\xi$ and $\lambda$ on the expected performance. To achieve this, we vary one of these factors at a time while keeping the other two parameters constant. We present our findings as follows.

\textbf{Impact of the trade-off between the source and target datasets ($\lambda$).}
We consider seven discrete values of $\lambda$ from $\{0, 0.2, 0.4, 0.5, 0.6, 0.8, 1\}$, where $\lambda \in \{0, 1, 0.5\}$ represents the three trivial choices: considering the limited target dataset only, employing the large-but-biased source dataset solely, treating both datasets equally.
Notably, the expected performance bounds and Corollary~\ref{corollary_optimal_lam_same_beta} reveal that the optimal weight may not be the three trivial choices. 
To validate this, in each of the \textit{Procgen} environments, we consider a target dataset comprising $1000$ samples from levels $[100, 199]$ and a source dataset with $40000$ samples from levels $[0, 99]$. Figure~\ref{fig_change_lam} depicts the results of two offline RL algorithms under various $\lambda$: CQL (upper row) and IQL (lower row). Indeed, observe that only two out of ten environments have the optimal weight to be the trivial choice, i.e., $\lambda^*=0.5$ in \textit{Dodgeball} for both CQL and IQL. 
This further underscores the importance of striking a proper trade-off between the two datasets, and reveals that trivial balancing strategies, e.g., $\lambda \in \{0, 0.5, 1\}$ are not consistently effective and can, sometimes, be catastrophic (see e.g., $\lambda=1.0$ in \textit{Miner}).

% In addition, \textit{Miner} shows that all three trivial choices of $\lambda$ achieve worse performance compared to other weights.

\begin{figure}[ht]
\centering 
\setcounter{subfigure}{0}
\subfigure[\textit{Caveflyer}]
{
	\begin{minipage}{0.18\linewidth}
	\centering 
	\includegraphics[width=1.1\columnwidth]{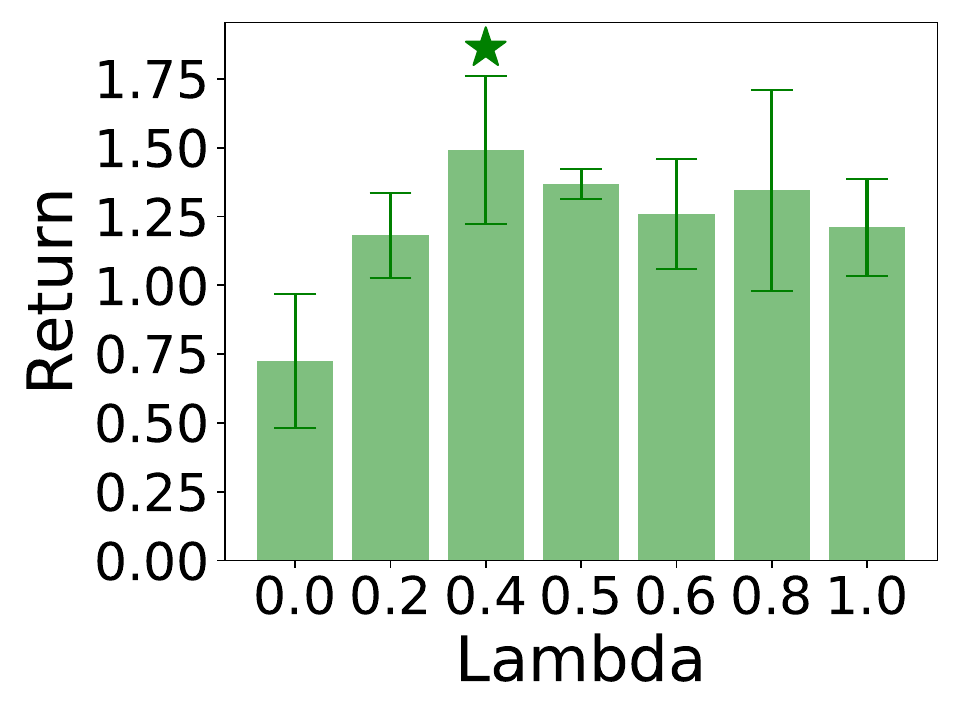}  
	\end{minipage}
}
\subfigure[\textit{Climber}]
{
	\begin{minipage}{0.18\linewidth}
	\centering 
	\includegraphics[width=1.1\columnwidth]{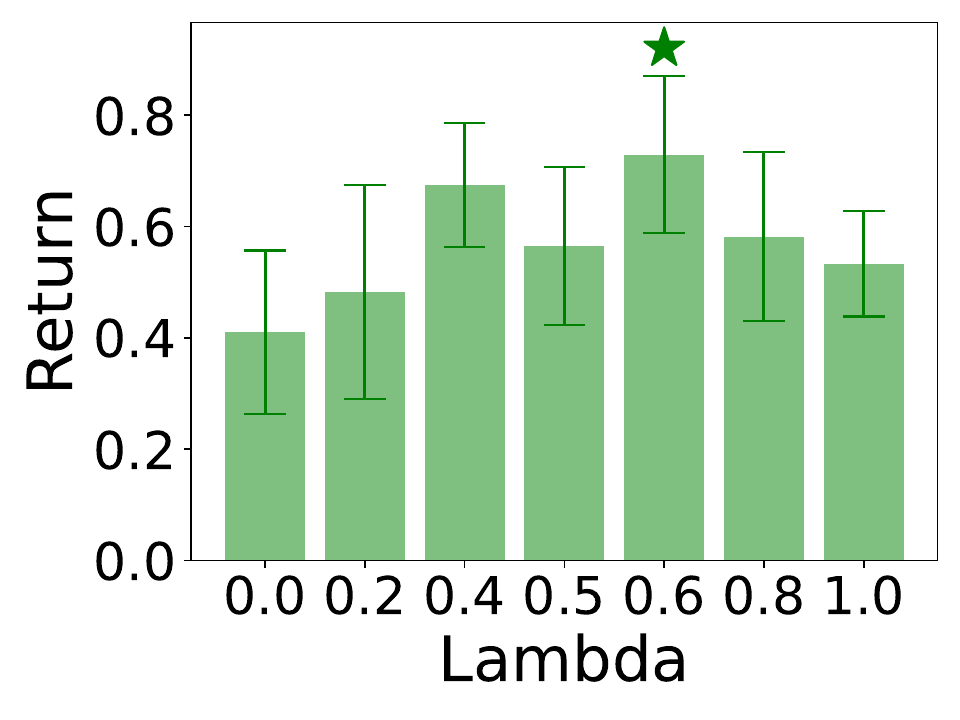}  
	\end{minipage}
}
\subfigure[\textit{Dodgeball}]
{
\begin{minipage}{0.18\linewidth}
\centering    
\includegraphics[width=1.1\columnwidth]{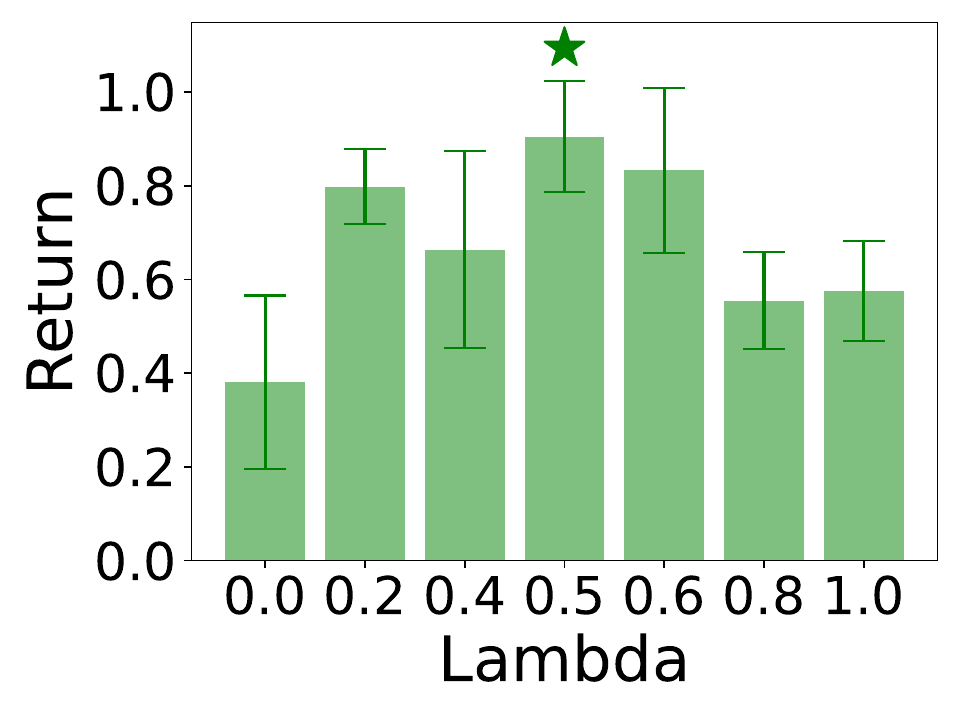}  
\end{minipage}
}
\subfigure[\textit{Maze}]
{
	\begin{minipage}{0.18\linewidth}
	\centering 
	\includegraphics[width=1.1\columnwidth]{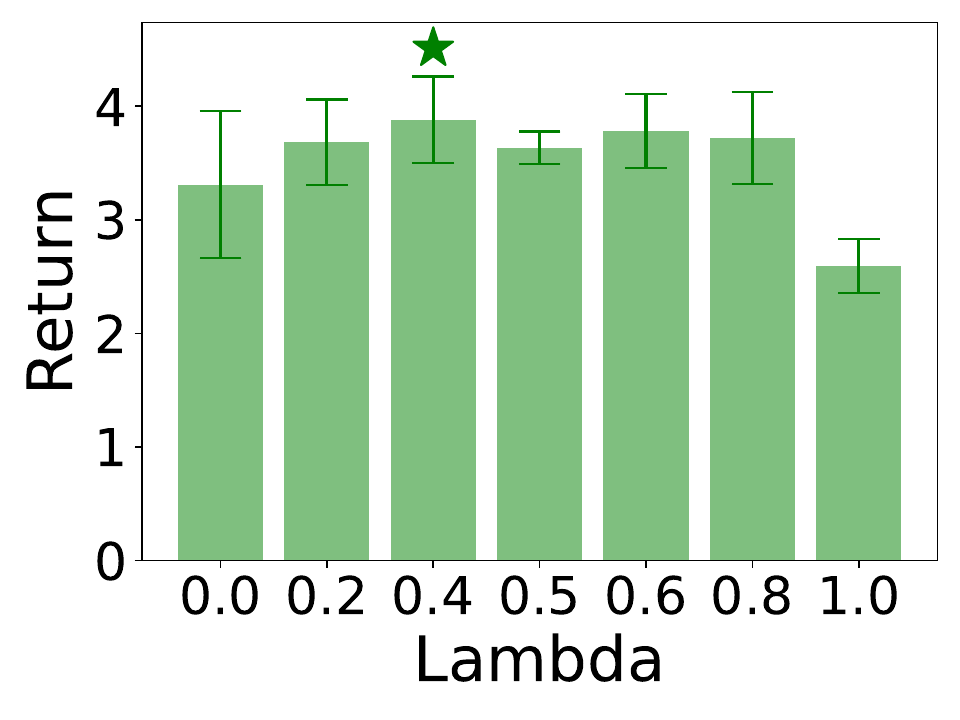}  
	\end{minipage}
}
\subfigure[\textit{Miner}]
{
	\begin{minipage}{0.18\linewidth}
	\centering 
	\includegraphics[width=1.1\columnwidth]{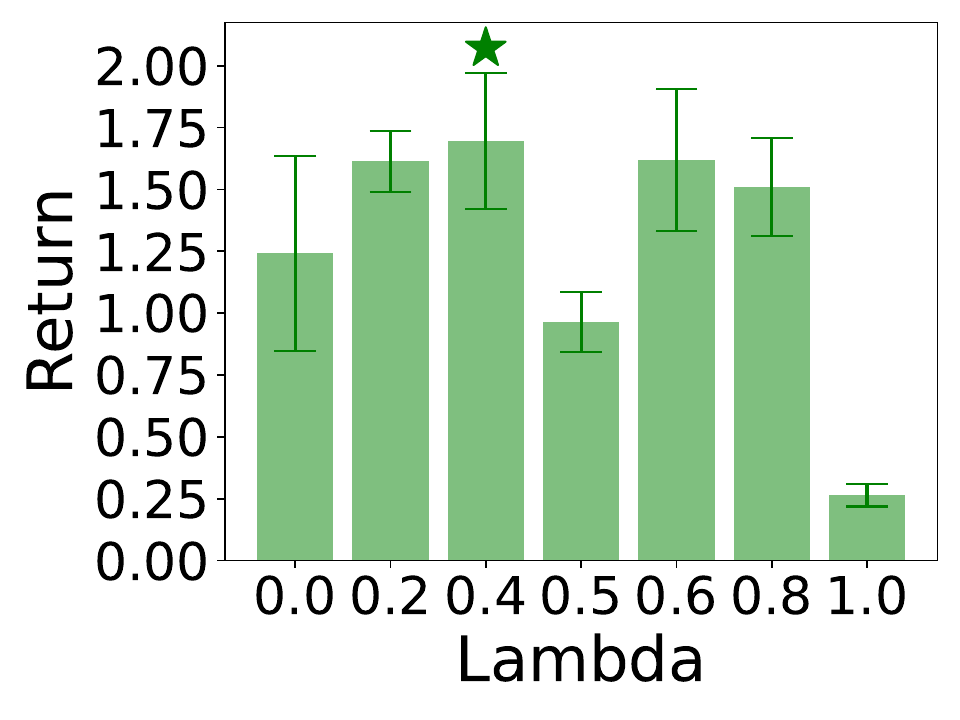}  
	\end{minipage}
}
\\
\subfigure[\textit{Caveflyer}]
{
    \begin{minipage}{0.18\linewidth}
    \centering 
    \includegraphics[width=1.1\columnwidth]{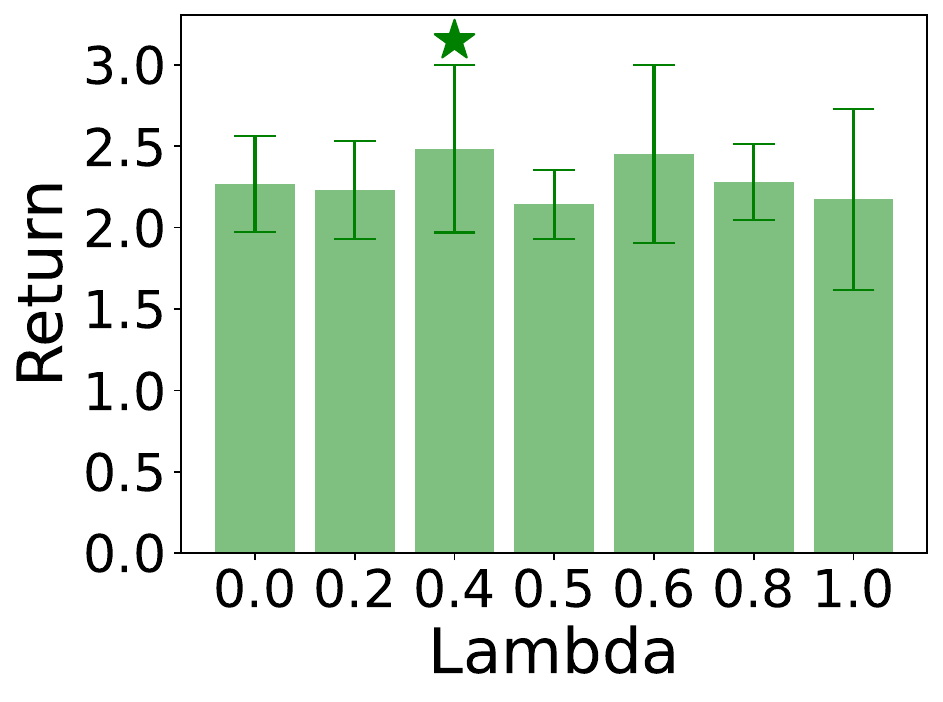}  
    \end{minipage}
}
\subfigure[\textit{Climber}]
{
    \begin{minipage}{0.18\linewidth}
    \centering 
    \includegraphics[width=1.1\columnwidth]{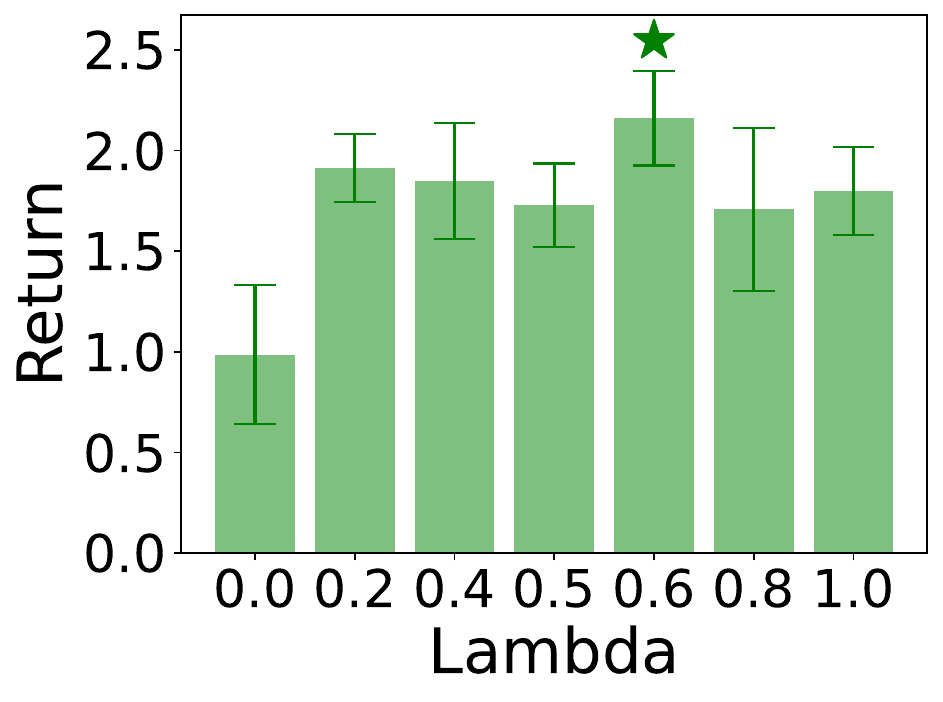}  
    \end{minipage}
}
\subfigure[\textit{Dodgeball}]
{
\begin{minipage}{0.18\linewidth}
\centering    
\includegraphics[width=1.1\columnwidth]{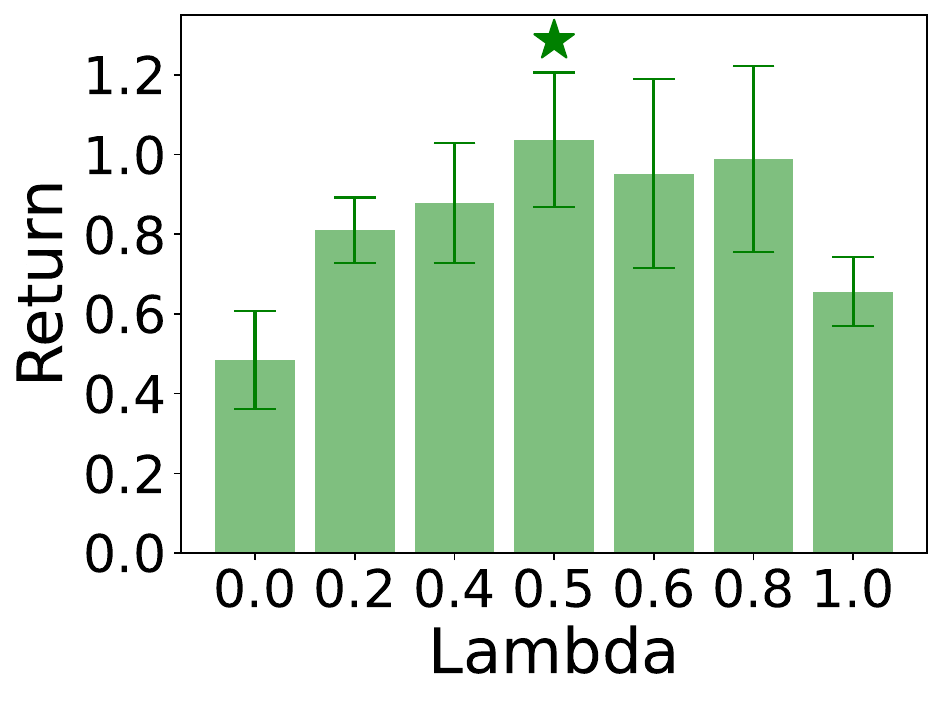}  
\end{minipage}
}
\subfigure[\textit{Maze}]
{
    \begin{minipage}{0.18\linewidth}
    \centering 
    \includegraphics[width=1.1\columnwidth]{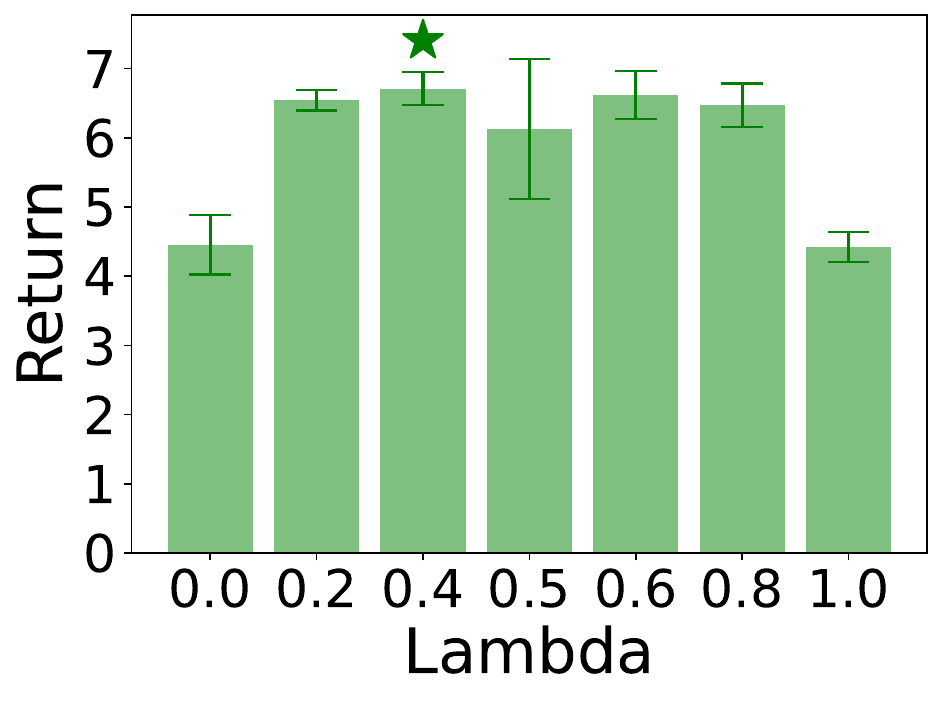}  
    \end{minipage}
}
\subfigure[\textit{Miner}]
{
    \begin{minipage}{0.18\linewidth}
    \centering 
    \includegraphics[width=1.1\columnwidth]{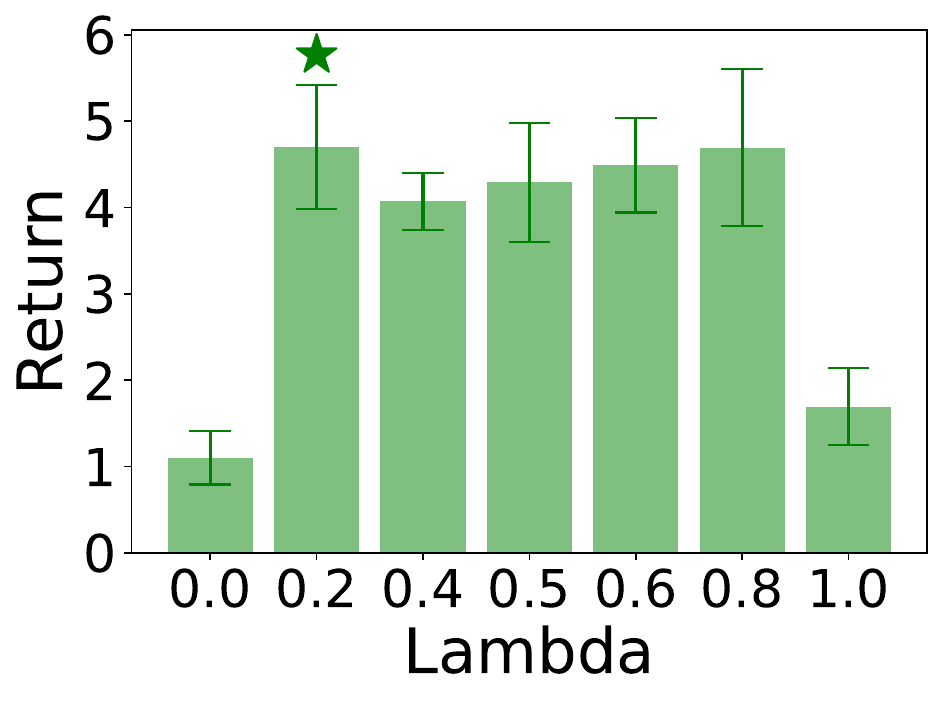}  
    \end{minipage}
}
\caption{The performance of offline RL across five \textit{Procgen} games.
The source dataset contains $40000$ samples from levels $[0, 99]$, while the target dataset comprises $1000$ samples from levels $[100, 199]$.
We consider seven weights, $\lambda \in \{0, 0.2, 0.4, 0.5, 0.6, 0.8, 1.0\}$, to trade off the source and target datasets with the star marking the optimal weight.
Upper row: CQL as the backbone; Lower row: IQL as the backbone.}
\label{fig_change_lam}
\end{figure}

\renewcommand{\arraystretch}{1.5}
\begin{table}[ht]
  \centering
  \fontsize{9.5}{10}\selectfont
  \caption{The performance across all $\lambda \in \{0, 0.2, 0.4, 0.5, 0.6, 0.8, 1.0\}$ (mean and std) with fixed $N=1000$ corresponding to different $\xi$. Left: CQL as the backbone algorithm; Right: IQL as the backbone algorithm.} 
  \label{tab_diff_xi}

  \begin{minipage}[t]{0.48\textwidth}
    \centering
    \begin{threeparttable}
      \begin{tabular}{c|ccc}
        \hline \hline
        Game & $\xi_1$ & $\xi_2$ & $\xi_3$ \\
        \hline
        \textit{Caveflyer} & $1.23 \pm 0.23$ & $1.28 \pm 0.26$ & $\mathbf{1.37 \pm 0.34}$ \\
        \textit{Climber}   & $0.57 \pm 0.10$ & $0.73 \pm 0.16$ & $\mathbf{0.85 \pm 0.21}$ \\
        \textit{Dodgeball} & $0.67 \pm 0.17$ & $0.87 \pm 0.21$ & $\mathbf{1.29 \pm 0.42}$ \\
        \textit{Maze}      & $3.52 \pm 0.41$ & $3.97 \pm 0.42$ & $\mathbf{4.35 \pm 0.52}$ \\
        \textit{Miner}     & $1.27 \pm 0.48$ & $2.05 \pm 0.55$ & $\mathbf{3.12 \pm 0.80}$ \\
        \hline \hline
      \end{tabular}
    \end{threeparttable}
  \end{minipage}
  \hfill
  \begin{minipage}[t]{0.48\textwidth}
    \centering
    \begin{threeparttable}
      \begin{tabular}{c|ccc}
        \hline \hline
        Game & $\xi_1$ & $\xi_2$ & $\xi_3$ \\
        \hline
 \textit{Caveflyer} & $2.29 \pm 0.13$ & $2.30 \pm 0.13$ & $\mathbf{2.56 \pm 0.13}$ 
 \\
         \textit{Climber} & $1.74 \pm 0.36$ & $2.14 \pm 0.63$ & $\mathbf{2.20 \pm 0.60}$  \\
         \textit{Dodgeball} & $0.83 \pm 0.20$ & $0.97 \pm 0.15$ & $\mathbf{1.39 \pm 0.47}$ \\
         \textit{Maze} & $5.91 \pm 1.02$ & $6.47 \pm 1.29$ & $\mathbf{7.33 \pm 1.37}$  \\
         \textit{Miner} & $3.58 \pm 1.51$ & $5.31 \pm 2.01$ & $\mathbf{7.24 \pm 2.81}$  \\
        \hline \hline
      \end{tabular}
    \end{threeparttable}
  \end{minipage}
\end{table}

\renewcommand{\arraystretch}{1.5}
\begin{table}[ht]
  \centering
  \fontsize{9.5}{10}\selectfont
  \caption{The performance across all $\lambda \in \{0, 0.2, 0.4, 0.5, 0.6, 0.8, 1.0\}$ (mean and std) with fixed $\xi_3$ corresponding to different $N$. Left: CQL as the backbone algorithm; Right: IQL as the backbone algorithm.} 
  \label{tab_diff_N}

  \begin{minipage}[t]{0.48\textwidth}
    \centering
    % \fontsize{9.5}{10}\selectfont
    \begin{threeparttable}
      \begin{tabular}{c|ccc}
        \hline \hline
        Game & $N=1000$ & $N=2500$ & $N=4000$ \\
        \hline
        \textit{Caveflyer}  & $1.37 \pm 0.34$ & $1.52 \pm 0.26$ & $\mathbf{1.57 \pm 0.19}$ \\
        \textit{Climber}    & $0.85 \pm 0.21$ & $0.89 \pm 0.29$ & $\mathbf{0.95 \pm 0.26}$ \\
        \textit{Dodgeball}  & $1.29 \pm 0.42$ & $1.34 \pm 0.30$ & $\mathbf{1.42 \pm 0.18}$ \\
        \textit{Maze}       & $4.35 \pm 0.52$ & $4.75 \pm 0.59$ & $\mathbf{5.21 \pm 0.69}$ \\
        \textit{Miner}      & $3.12 \pm 0.80$ & $3.28 \pm 0.59$ & $\mathbf{3.33 \pm 0.75}$ \\
        \hline \hline
      \end{tabular}
    \end{threeparttable}
  \end{minipage}
  \hfill
  \begin{minipage}[t]{0.48\textwidth}
    \centering
    % \fontsize{9.5}{10}\selectfont
    \begin{threeparttable}
      \begin{tabular}{c|ccc}
        \hline \hline
        Game & $N=1000$ & $N=2500$ & $N=4000$ \\
        \hline
  \textit{Caveflyer}  & $2.56 \pm 0.13$ &$2.82 \pm 0.30$ &$\mathbf{2.90 \pm 0.32}$ \\

         \textit{Climber}  & $2.20 \pm 0.60$ &$1.98 \pm 0.65$ &$\mathbf{2.22 \pm 0.63}$ \\

         \textit{Dodgeball}  & $1.39 \pm 0.47$ &$1.40 \pm 0.29$ &$\mathbf{1.57 \pm 0.29}$ \\

         \textit{Maze}  & $7.33 \pm 1.37$ &$7.73 \pm 1.30$ &$\mathbf{8.23 \pm 1.22}$ \\

         \textit{Miner}  & $7.24 \pm 2.81$ &$8.10 \pm 2.53$ &$\mathbf{8.18 \pm 2.20}$ \\

        \hline \hline
      \end{tabular}
    \end{threeparttable}
  \end{minipage}
\end{table}

%%%%%%%%%%%%%%%%%%%%%%%%%%%%%%%%%%%%%%%%%%%%%%%%%%%%%%
%%%%%%%%%%%%%%%%%%%%%%%%%%%%%%%%%%%%%%%%%%%%%%%%%%%%%%
%%%%%%%%%%%%%%%%%%%%%%%%%%%%%%%%%%%%%%%%%%%%%%%%%%%%%%
%%%%%%%%%%%%%%%%%%%%%%%%%%%%%%%%%%%%%%%%%%%%%%%%%%%%%%
%%%%%%%%%%%%%%%%%%%%%%%%%%%%%%%%%%%%%%%%%%%%%%%%%%%%%%
%%%%%%%%%%%%%%%%%%%%%%%%%%%%%%%%%%%%%%%%%%%%%%%%%%%%%%

% \renewcommand{\arraystretch}{1.5}
% \begin{table}[ht]
% 	\centering
% 	% \fontsize{9.6}{10}\selectfont
% 	\begin{threeparttable}
% 		\caption{The performance across all $\lambda \in \{0, 0.2, 0.4, 0.5, 0.6, 0.8, 1.0\}$ (mean and standard deviation) with fixed $N=1000$ corresponding to different $\xi$.}
% 		\label{tab_diff_xi}
% 		\begin{tabular}{c|ccccccc}
%         \hline
%         \hline
% 			 Game & $\xi_1 ([0, 99])$ 
% 			& $\xi_2 ([25, 124])$  & $\xi_3 ([50, 149])$ \cr
%         % \hline
   
%         %  \textit{Bigfish} & $1.28 \pm 0.19$ & $1.52 \pm 0.20$ & $\mathbf{1.64 \pm 0.22}$ \cr
         
%          \hline

%          \textit{Caveflyer} & $1.23 \pm 0.23$ & $1.28 \pm 0.26$ & $\mathbf{1.37 \pm 0.34}$  \cr

%          \hline

%          \textit{Climber}  & $0.57 \pm 0.10$ & $0.73 \pm 0.16$ & $\mathbf{0.85 \pm 0.21}$ \cr

%          \hline

%           \textit{Dodgeball} & $0.67 \pm 0.17$ & $0.87 \pm 0.21$ & $\mathbf{1.29 \pm 0.42}$  \cr

%          \hline

%          \textit{Maze}  & $3.52 \pm 0.41$ & $3.97 \pm 0.42$ & $\mathbf{4.35 \pm 0.52}$ \cr

%         \hline

%          \textit{Miner}  & $1.27 \pm 0.48$ & $2.05 \pm 0.55$ & $\mathbf{3.12 \pm 0.80}$  \cr

%         \hline
%         \hline
% 		\end{tabular}
% 	\end{threeparttable}
% \end{table}

\textbf{Impact of the dynamics gap between the source and target domains ($\xi$).}
We consider three source datasets of the same size but from different domains: levels $[0, 99]$, levels $[25, 124]$, $[50, 149]$, and let $\xi_1, \xi_2$ and $\xi_3$ represent the dynamics gap between levels $[0, 99]$ and $[100, 199]$, between levels $[25, 124]$ and $[100, 199]$, and between levels $[50, 149]$ and $[100, 199]$, respectively. Thus, we obtain $\xi_1 \geq \xi_2 \geq \xi_3$, as more overlap between the levels of the two datasets demonstrates smaller discrepancies between them.
Note that the bound in \eqref{eqn_bound_expectation} or \eqref{eqn_bound_expectation_tighter} decreases with smaller values of $\xi$,  implying an improved expected performance of offline RL. Our numerical results of five games across three different $\xi$ values are summarized in Table~\ref{tab_diff_xi}, which supports the implication from the bound in \eqref{eqn_bound_expectation} or \eqref{eqn_bound_expectation_tighter}.

\textbf{Impact of the size of the target dataset ($\varsigma$).}
It is worth highlighting that the normalized variance $\varsigma$ in \eqref{eqn_normalized_variance} decreases as $N(s, a)$ increases.
Given the positive proportional relationship between $N$ and $N(s, a)$, a larger $N$ practically leads to a smaller $\varsigma$ (in expectation).
Analogous to $\xi$, the bound in \eqref{eqn_bound_expectation} or \eqref{eqn_bound_expectation_tighter} decreases with smaller values of $\varsigma$ and/or $e^{-NC}$, thus indicating an enhanced expected performance of offline RL with larger $N$ (smaller $\varsigma$ and/or smaller $e^{-NC}$).
Our numerical results of five games across three different values of $N$ are summarized in Table~\ref{tab_diff_N} and Figure~\ref{fig_change_N_BestLam}, which validate the implication from the bound in \eqref{eqn_bound_expectation} or \eqref{eqn_bound_expectation_tighter}.
Nevertheless, offline RL may fail due to the coverage gaps rather than an insufficient sample size when the random sampling is non-uniform. For instance, having a large amount of data concentrated in a small portion of the target domain can lead to catastrophic performance, due to the limited generalization capability of RL~\citep{packer2018assessing, kirk2023survey}.

\begin{figure}[ht]
\centering 
% \subfigure
% {
% \begin{minipage}{1.0\linewidth}
% \centering    \quad \quad
% \includegraphics[width=0.12\linewidth]{figures/0overlap.png} \quad \quad 
% \includegraphics[width=0.12\linewidth]{figures/25overlap.png} \quad \quad
% \includegraphics[width=0.12\linewidth]{figures/50overlap.png} 
% \end{minipage}
% }\\
\setcounter{subfigure}{0}
% \subfigure[\textit{Bigfish}]
% {
% \begin{minipage}{0.3\columnwidth}
% \centering    
% \includegraphics[width=1.1\columnwidth]{}  
% \end{minipage}
% }
\subfigure[\textit{Caveflyer}]
{
	\begin{minipage}{0.18\linewidth}
	\centering 
	\includegraphics[width=1.1\columnwidth]{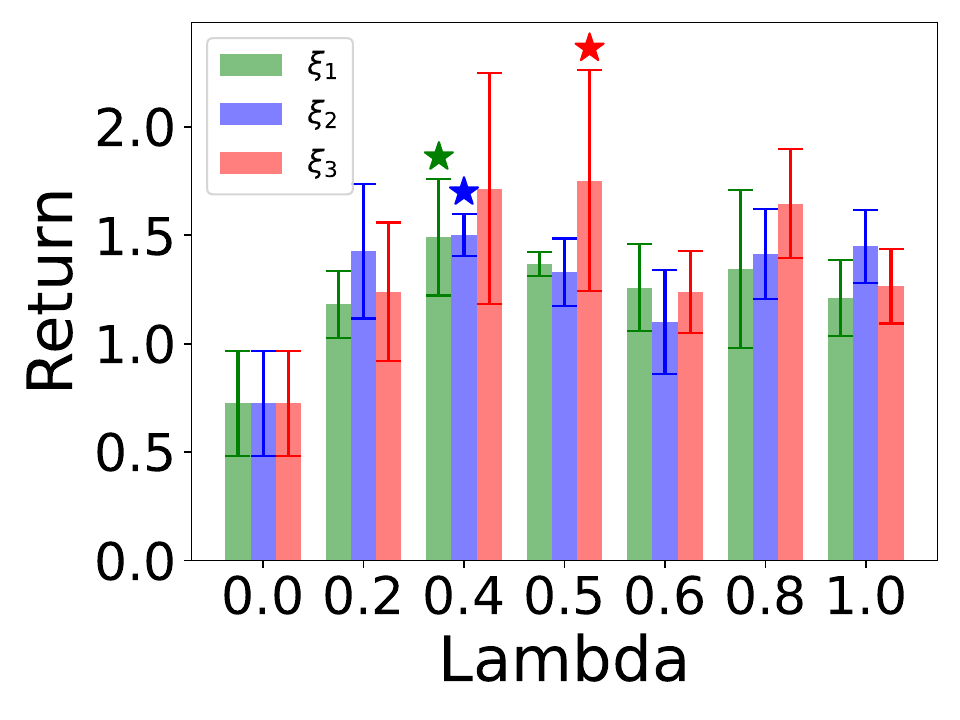}  
	\end{minipage}
}
\subfigure[\textit{Climber}]
{
	\begin{minipage}{0.18\linewidth}
	\centering 
	\includegraphics[width=1.1\columnwidth]{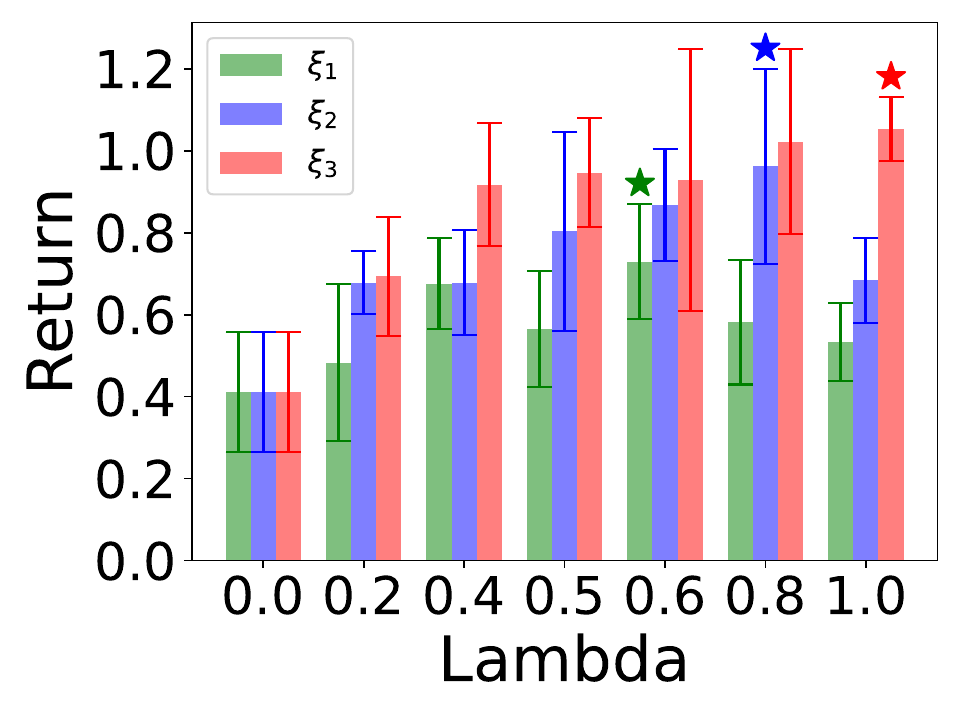}  
	\end{minipage}
}
\subfigure[\textit{Dodgeball}]
{
\begin{minipage}{0.18\linewidth}
\centering    
\includegraphics[width=1.1\columnwidth]{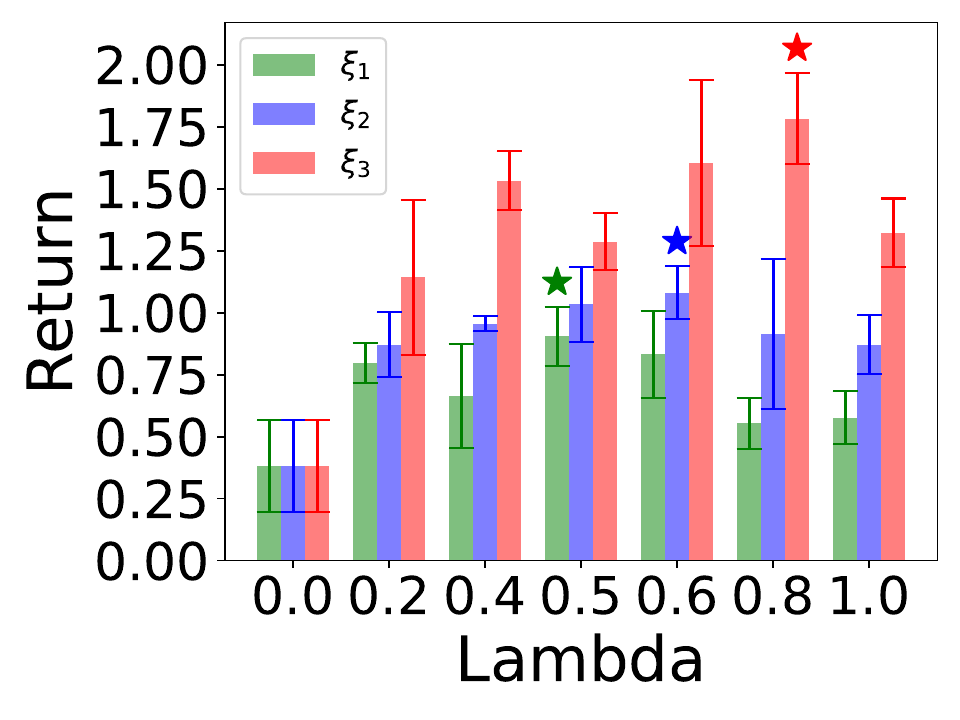}  
\end{minipage}
}
\subfigure[\textit{Maze}]
{
	\begin{minipage}{0.18\linewidth}
	\centering 
	\includegraphics[width=1.1\columnwidth]{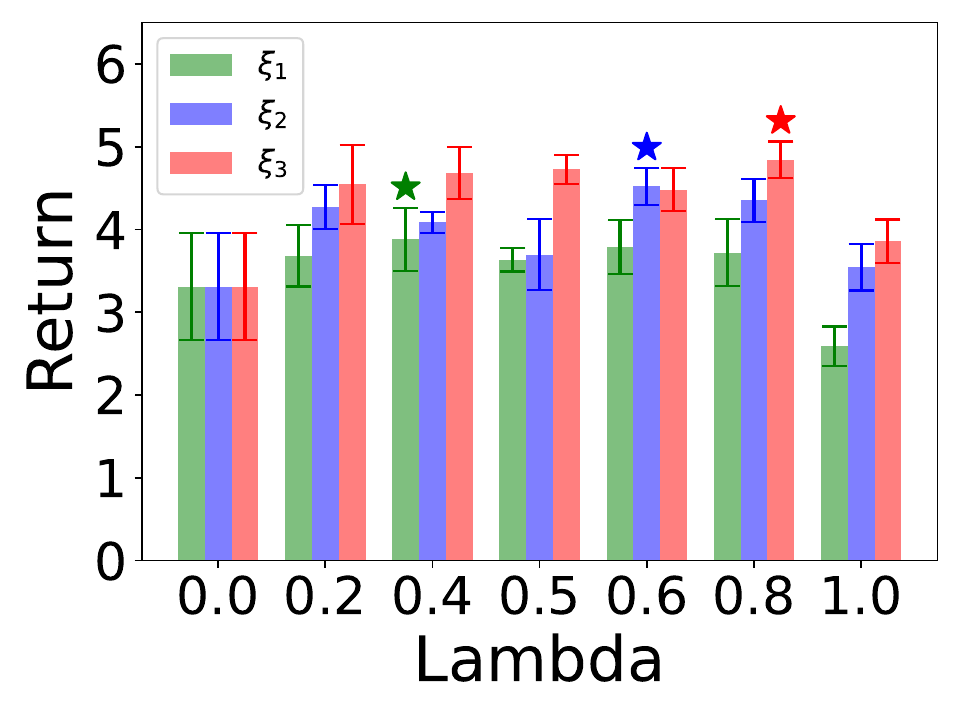}  
	\end{minipage}
}
\subfigure[\textit{Miner}]
{
	\begin{minipage}{0.18\linewidth}
	\centering 
	\includegraphics[width=1.1\columnwidth]{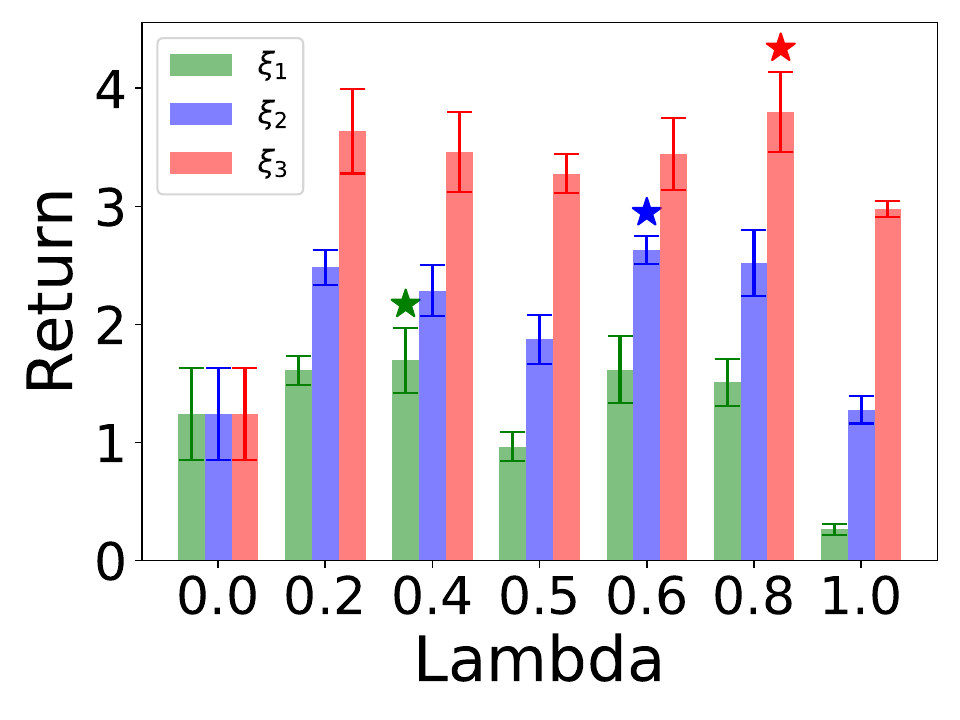}  
	\end{minipage}
}
\\
\subfigure[\textit{Caveflyer}]
{
    \begin{minipage}{0.18\linewidth}
    \centering 
    \includegraphics[width=1.1\columnwidth]{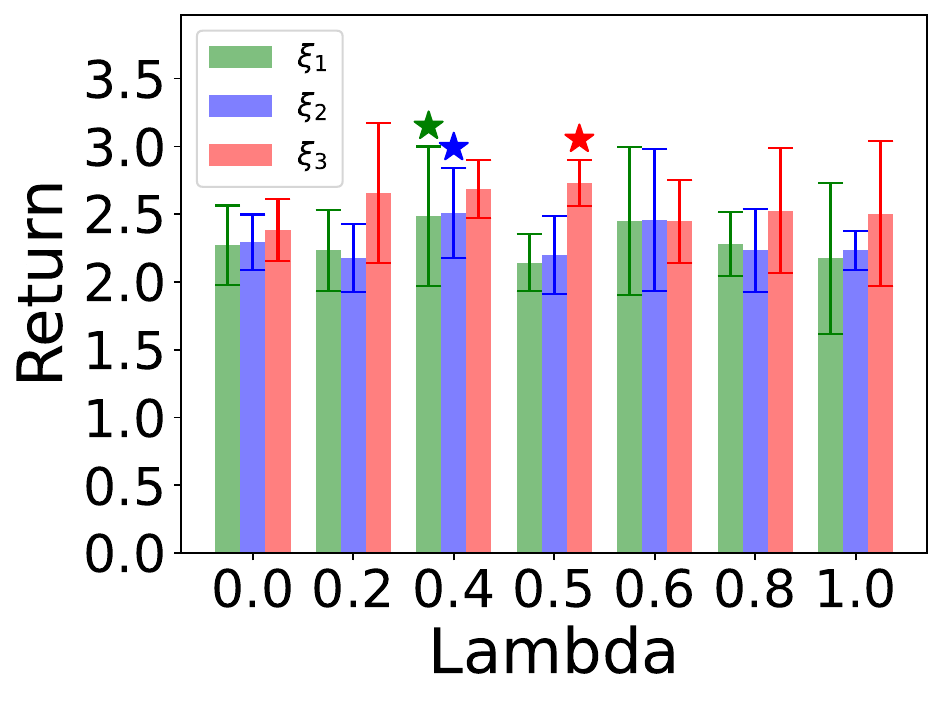}  
    \end{minipage}
}
\subfigure[\textit{Climber}]
{
    \begin{minipage}{0.18\linewidth}
    \centering 
    \includegraphics[width=1.1\columnwidth]{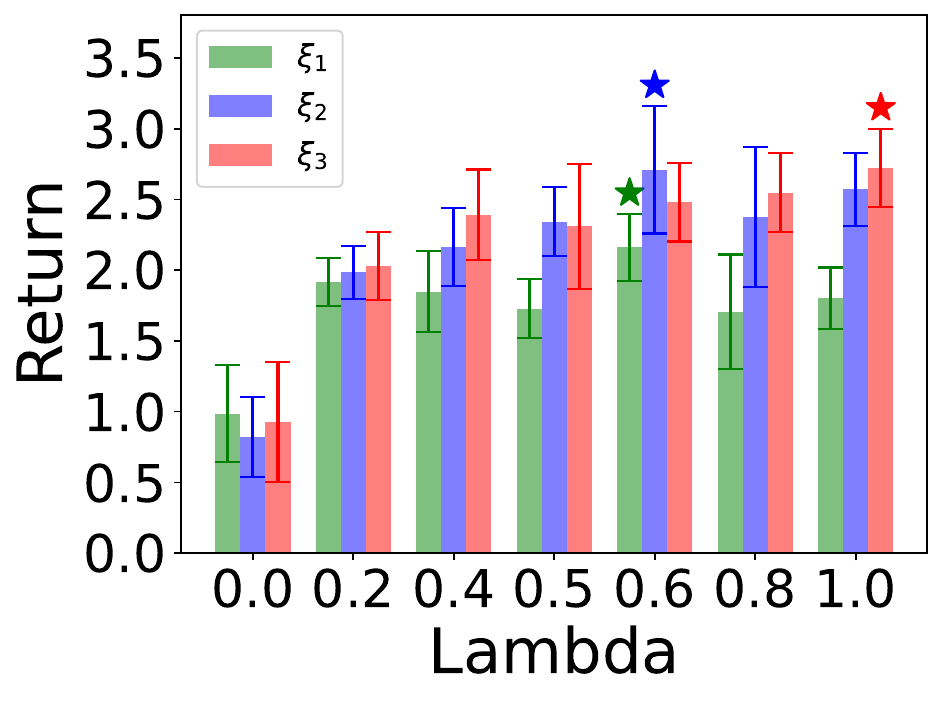}  
    \end{minipage}
}
\subfigure[\textit{Dodgeball}]
{
\begin{minipage}{0.18\linewidth}
\centering    
\includegraphics[width=1.1\columnwidth]{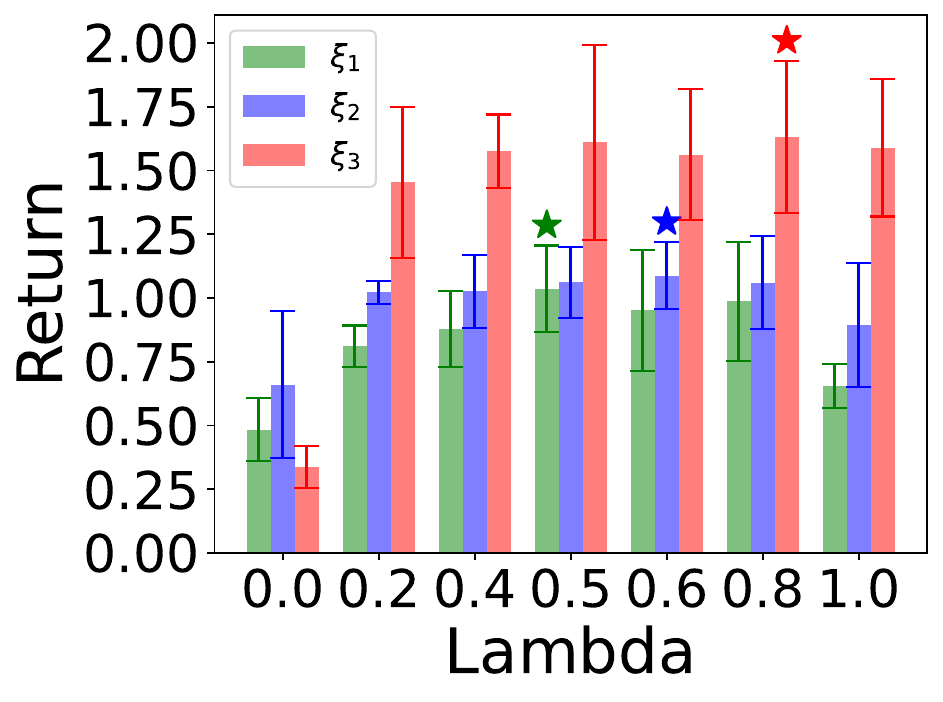}  
\end{minipage}
}
\subfigure[\textit{Maze}]
{
    \begin{minipage}{0.18\linewidth}
    \centering 
    \includegraphics[width=1.1\columnwidth]{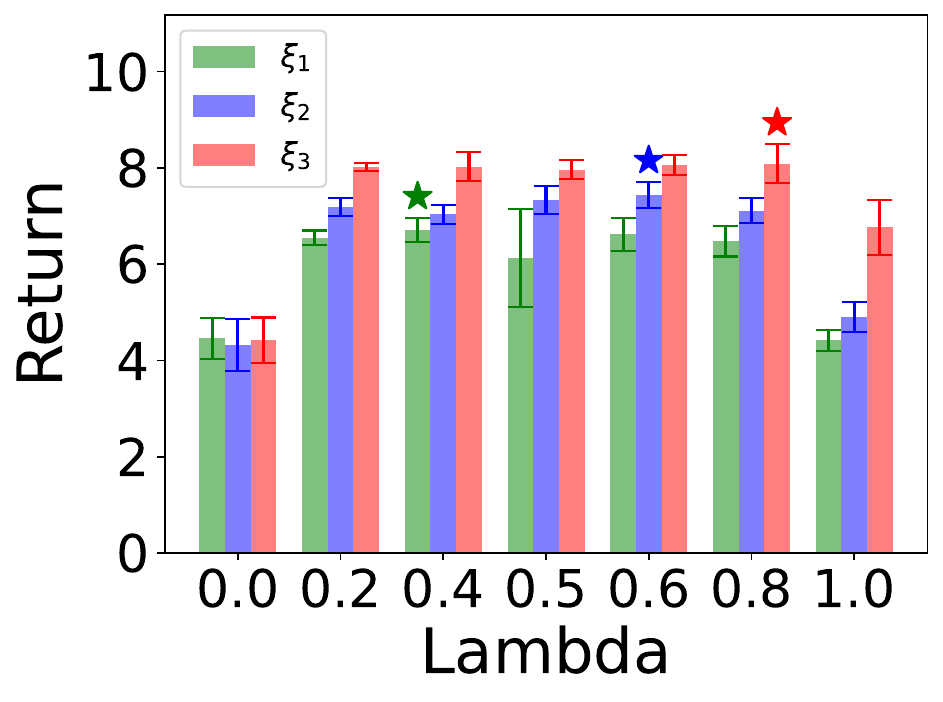}  
    \end{minipage}
}
\subfigure[\textit{Miner}]
{
    \begin{minipage}{0.18\linewidth}
    \centering 
    \includegraphics[width=1.1\columnwidth]{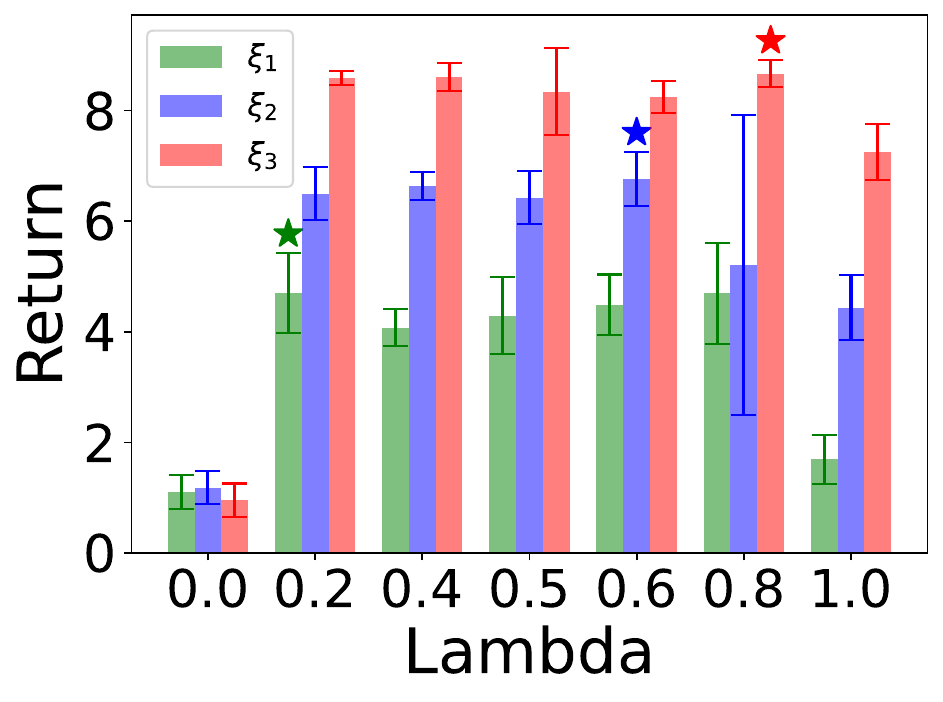}  
    \end{minipage}
}
\caption{The performance of offline RL across five \textit{Procgen} games. The target dataset comprises $1000$ samples from levels $[100, 199]$, and three source datasets are considered, each containing $40000$ samples from levels $[0, 99]$ (green, $\xi_1$), $[25, 124]$ (blue, $\xi_2$), and $[50, 149]$ (red, $\xi_3$), respectively. Seven weights, $\lambda \in \{0, 0.2, 0.4, 0.5, 0.6, 0.8, 1.0\}$, are evaluated to trade off the source and target datasets with the star marking the optimal weight for each $\xi$.
Upper row: CQL as the backbone; Lower row: IQL as the backbone.
}
\label{fig_change_xi}
\end{figure}
\begin{figure}[ht]
\centering 
% \subfigure
% {
% \begin{minipage}{1.0\linewidth}
% \centering    \quad \quad
% \includegraphics[width=0.12\linewidth]{figures/N_1000.png} \quad \quad 
% \includegraphics[width=0.12\linewidth]{figures/N_2500.png} \quad \quad
% \includegraphics[width=0.12\linewidth]{figures/N_4000.png} 
% \end{minipage} \\
% }
\setcounter{subfigure}{0}
% \subfigure[\textit{Bigfish}]
% {
% \begin{minipage}{0.3\columnwidth}
% \centering    
% \includegraphics[width=1.1\columnwidth]{figures/bigfish_change_N_BestLam.png}  
% \end{minipage}
% }
\subfigure[\textit{Caveflyer}]
{
	\begin{minipage}{0.18\linewidth}
	\centering 
	\includegraphics[width=1.1\columnwidth]{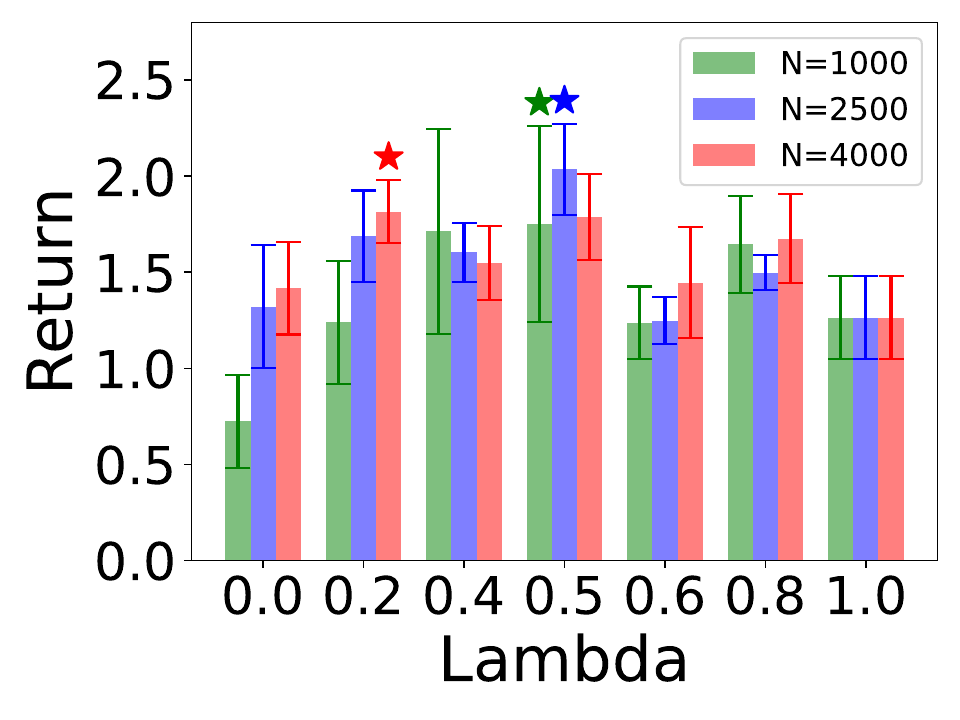}  
	\end{minipage}
}
\subfigure[\textit{Climber}]
{
	\begin{minipage}{0.18\linewidth}
	\centering 
	\includegraphics[width=1.1\columnwidth]{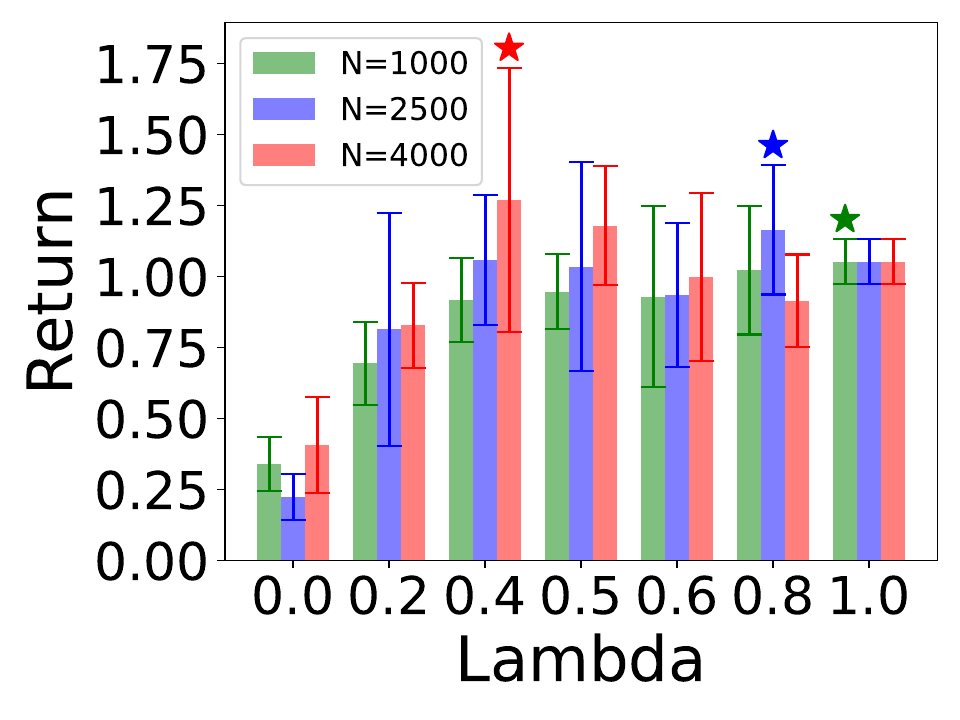}  
	\end{minipage}
}
\subfigure[\textit{Dodgeball}]
{
\begin{minipage}{0.18\linewidth}
\centering    
\includegraphics[width=1.1\columnwidth]{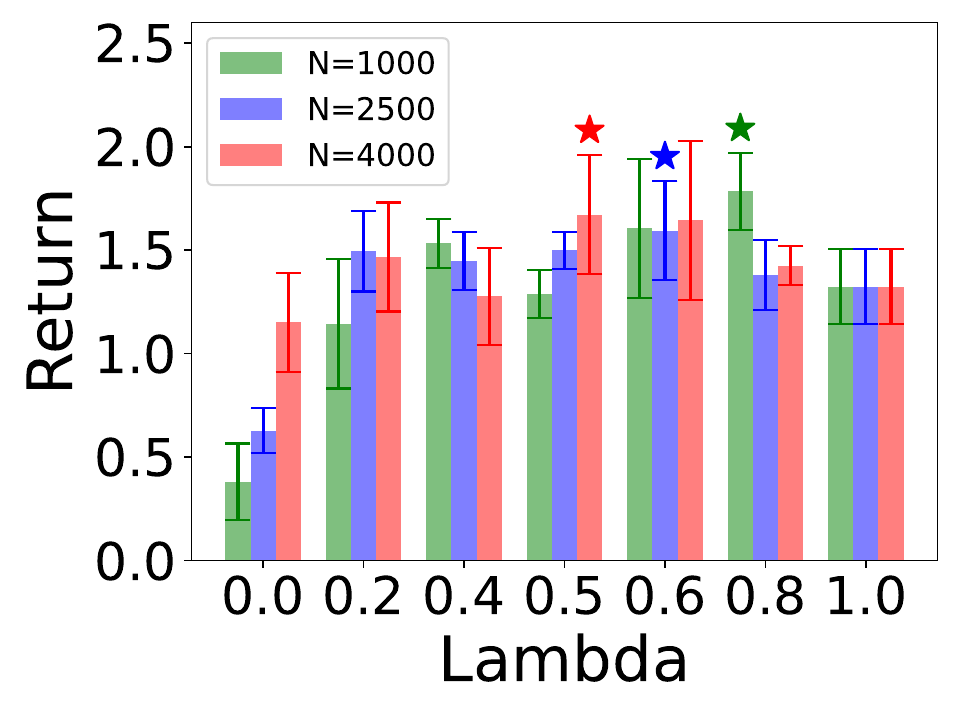}  
\end{minipage}
}
\subfigure[\textit{Maze}]
{
	\begin{minipage}{0.18\linewidth}
	\centering 
	\includegraphics[width=1.1\columnwidth]{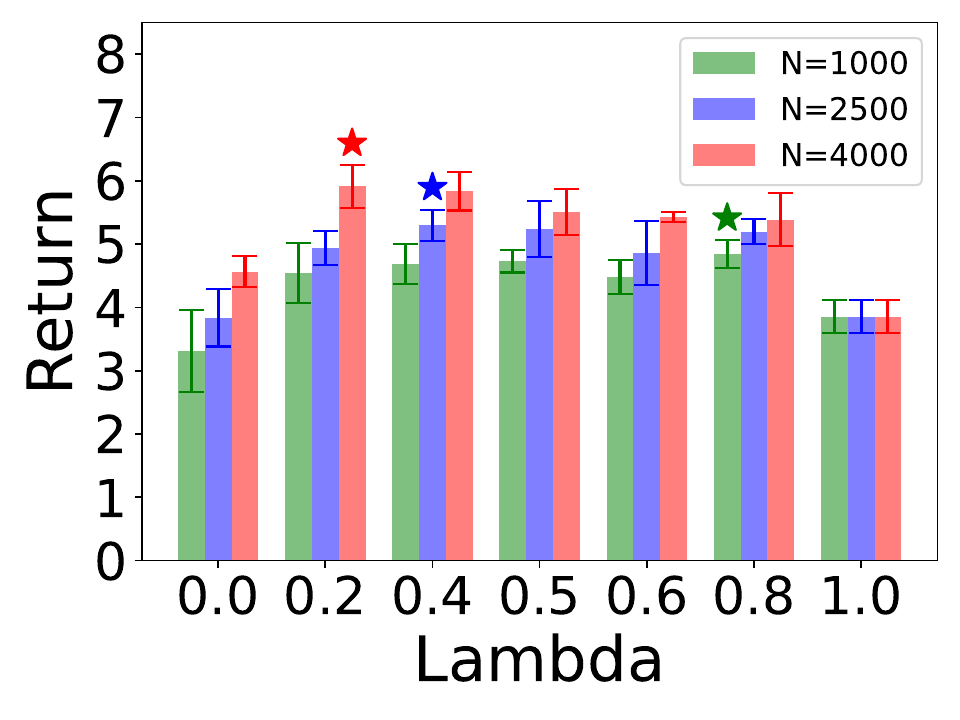}  
	\end{minipage}
}
\subfigure[\textit{Miner}]
{
	\begin{minipage}{0.18\linewidth}
	\centering 
	\includegraphics[width=1.1\columnwidth]{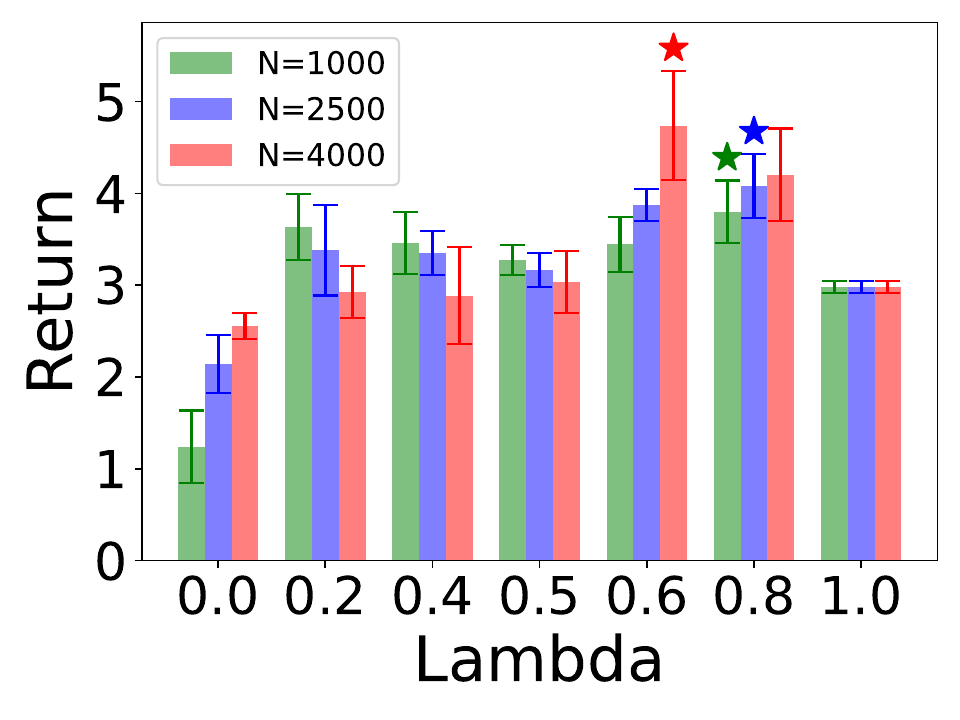}  
	\end{minipage}
}
\\
\subfigure[\textit{Caveflyer}]
{
  \begin{minipage}{0.18\linewidth}
    \centering
    \includegraphics[width=1.1\columnwidth]{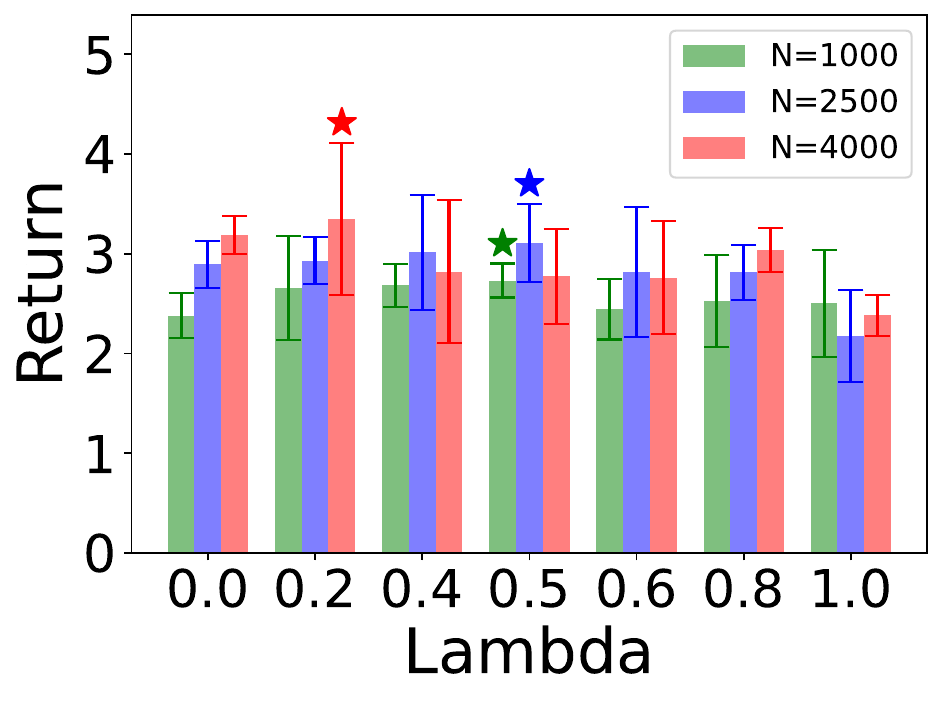}
  \end{minipage}
}
\subfigure[\textit{Climber}]
{
  \begin{minipage}{0.18\linewidth}
    \centering
    \includegraphics[width=1.1\columnwidth]{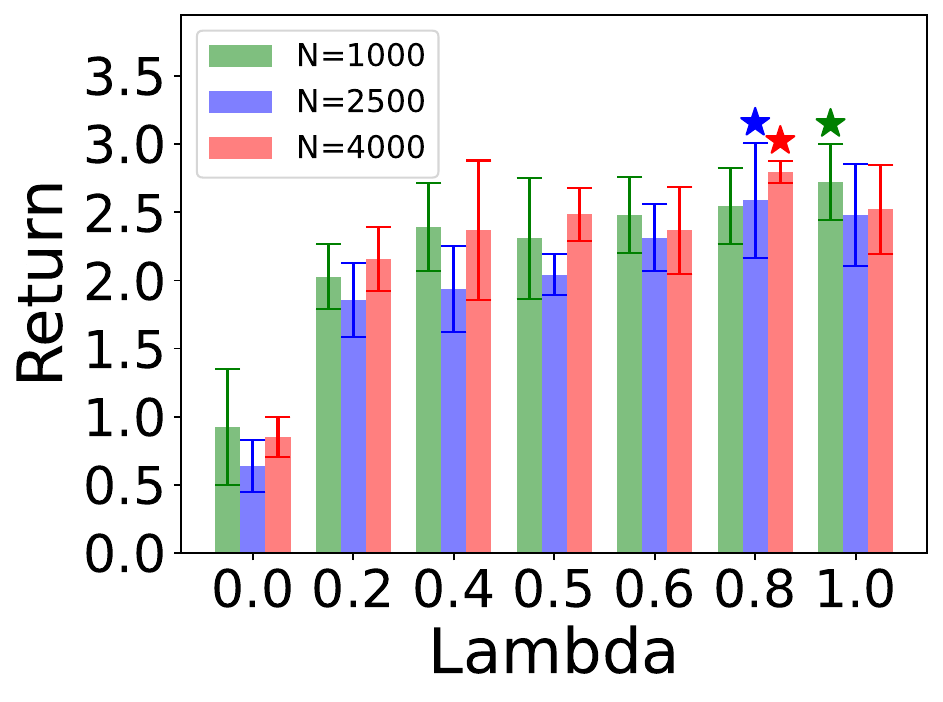}
  \end{minipage}
}
\subfigure[\textit{Dodgeball}]
{
  \begin{minipage}{0.18\linewidth}
    \centering
    \includegraphics[width=1.1\columnwidth]{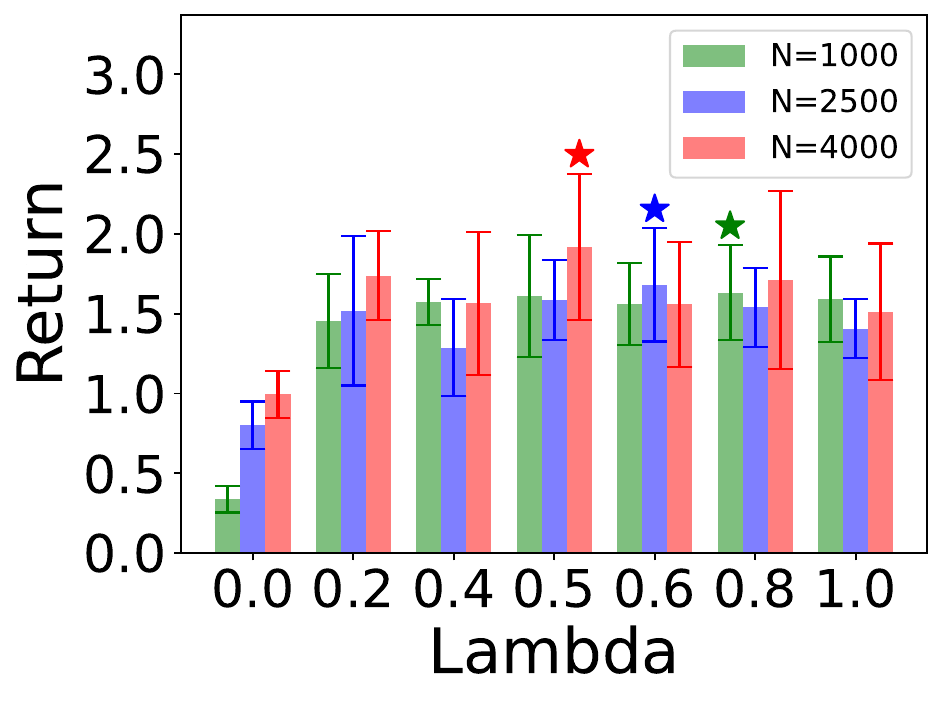}
  \end{minipage}
}
\subfigure[\textit{Maze}]
{
  \begin{minipage}{0.18\linewidth}
    \centering
    \includegraphics[width=1.1\columnwidth]{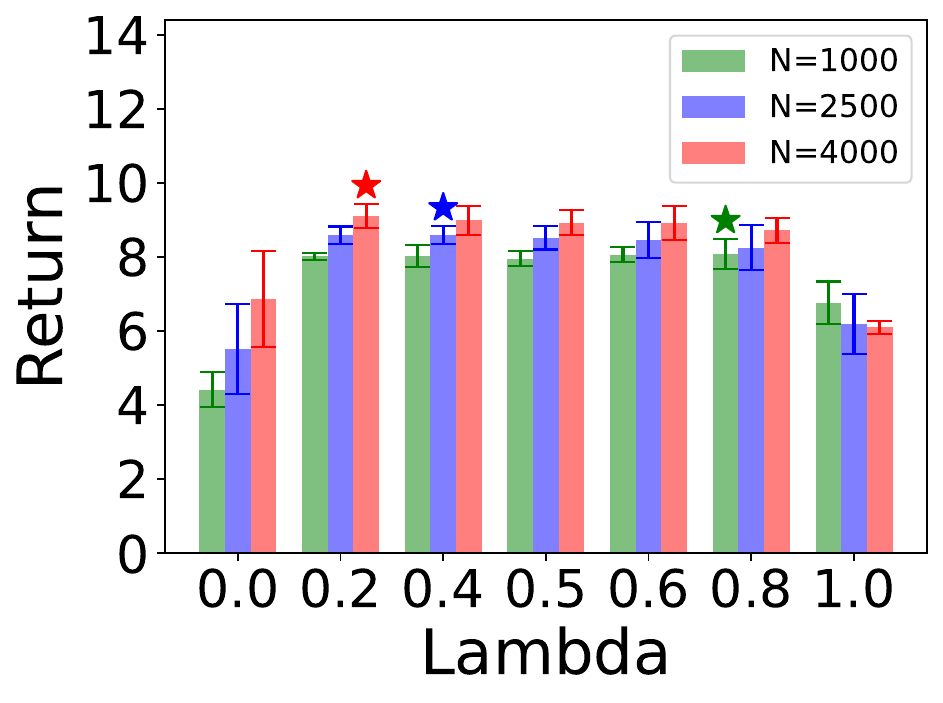}
  \end{minipage}
}
\subfigure[\textit{Miner}]
{
  \begin{minipage}{0.18\linewidth}
    \centering
    \includegraphics[width=1.1\columnwidth]{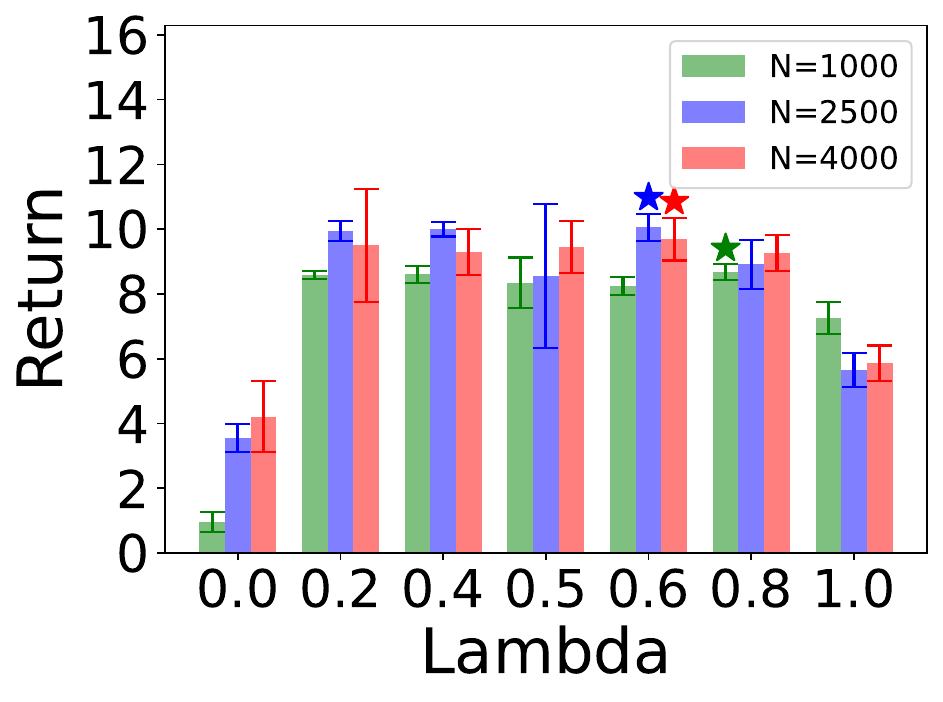}
  \end{minipage}
}

\caption{The performance of offline RL across five \textit{Procgen} games.
The source dataset comprises $40000$ samples from levels $[50, 149]$, and target datasets from levels $[100, 199]$ are considered with three different sample sizes: $N=1000$ (green), $N=2500$ (blue), and $N=4000$ (red).
Seven weights, $\lambda \in \{0, 0.2, 0.4, 0.5, 0.6, 0.8, 1.0\}$, are evaluated to trade off the source and target datasets with the star marking the optimal weight for each $N$. Upper row: CQL as the backbone algorithm; Lower row: IQL as the backbone algorithm.}
\label{fig_change_N_BestLam}
\end{figure}

\textbf{Practical optimal trade-off between the source and target datasets ($\lambda^*$).}
Notice that Theorems~\ref{theorem_bound_expectation} and \ref{theorem_bound_expectation_tighter} as well as Corollaries~\ref{corollary_optimal_lam} and \ref{corollary_optimal_lam_same_beta} explicitly demonstrate that a smaller $\xi$ drives $\lambda^*$ closer to 1, while a smaller $\varsigma$ shifts $\lambda^*$ closer to 0. In what follows, we substantiate this implication through empirical evidence.
$i)$ To explore how $\lambda^*$ varies with $\xi$, we fix $N = 1000$ in each of the five \textit{Procgen} games and select three different source datasets with $\xi_1 \geq \xi_2 \geq \xi_3$. We then implement this using seven values of $\lambda \in \{0, 0.2, 0.4, 0.5, 0.6, 0.8, 1\}$.
Our numerical results in Figure~\ref{fig_change_xi} and Table~\ref{tab_diff_xi_bestlam} demonstrate that the optimal weight $\lambda^*$ within $\{0, 0.2, 0.4, 0.5, 0.6, 0.8, 1\}$ increases (closer to 1) as $\xi$ decreases. 
This substantiates Corollary~\ref{corollary_optimal_lam_same_beta} and the intuition that greater emphasis should be placed on the source dataset when its discrepancy from the target domain is smaller. In the extreme case, where the source and target domains become identical, one should consider the large source dataset only.
$ii)$ To explore how $\lambda^*$ varies with $\varsigma$, we fix the dynamics gap to be $\xi_3$ in each of the five \textit{Procgen} games. Since a larger $N$ corresponds to a smaller $\varsigma$, we select three different values of $N \in \{1000, 2500, 4000\}$. 
It is worth noting that $N'$ is consistently maintained at least ten times larger than $N$, ensuring that the source dataset always comprises a significantly larger amount of samples than that of the target dataset.
We then implement this using seven values of $\lambda \in \{0, 0.2, 0.4, 0.5, 0.6, 0.8, 1\}$.
Our numerical results in Figure~\ref{fig_change_N_BestLam} and Table~\ref{tab_diff_N_bestlam} demonstrate that the optimal weight within $\{0, 0.2, 0.4, 0.5, 0.6, 0.8, 1\}$ decreases (closer to 0) as $N$ increases ($\varsigma$ decreases).
This validates Corollary~\ref{corollary_optimal_lam_same_beta} and the intuition that greater emphasis should be placed on the target dataset when it comprises a larger number of samples.
In the extreme case where the target dataset contains infinite samples, it becomes optimal to rely exclusively on the target dataset.

\renewcommand{\arraystretch}{1.5}
\begin{table}[ht]
	\centering
    \fontsize{9.5}{10}\selectfont
	\caption{The optimal weight $\lambda^*$ within $\{0, 0.2, 0.4, 0.5, 0.6, 0.8, 1\}$ corresponding to different $\xi$. Left: CQL as the backbone algorithm; Right: IQL as the backbone algorithm.}
	\label{tab_diff_xi_bestlam}
	\begin{minipage}{0.48\textwidth}
		\centering
		\begin{threeparttable}
			\begin{tabular}{c|ccc}
				\hline
				\hline
				Game & $\xi_1 ([0, 99])$ 
				& $\xi_2 ([25, 124])$  & $\xi_3 ([50, 149])$ \\
				\hline
				\textit{Caveflyer} & $0.4$  & $0.4$ &  $0.5$ \\
				\hline
				\textit{Climber}  & $0.6$ & $0.8$ & $1.0$ \\
				\hline
				\textit{Dodgeball}  & $0.5$ & $0.6$ & $0.8$  \\
				\hline
				\textit{Maze}  & $0.4$ & $0.6$ & $0.8$  \\
				\hline
				\textit{Miner}  &  $0.4$ & $0.6$ & $0.8$  \\
				\hline
				\hline
			\end{tabular}
		\end{threeparttable}
	\end{minipage}
	\hfill
	\begin{minipage}{0.48\textwidth}
		\centering
		\begin{threeparttable}
			\begin{tabular}{c|ccc}
				\hline
				\hline
				Game & $\xi_1 ([0, 99])$ 
				& $\xi_2 ([25, 124])$  & $\xi_3 ([50, 149])$ \\
				\hline
				\textit{Caveflyer} & $0.4$  & $0.4$  & $0.5$ \\
				\hline
				\textit{Climber} & $0.6$  & $0.6$  & $1.0$ \\
				\hline
				\textit{Dodgeball} & $0.5$  & $0.6$  & $0.8$ \\
				\hline
				\textit{Maze} & $0.4$  & $0.6$  & $0.8$ \\
				\hline
				\textit{Miner} & $0.2$  & $0.6$  & $0.8$ \\
				\hline
				\hline
			\end{tabular}
		\end{threeparttable}
	\end{minipage}
\end{table}

\renewcommand{\arraystretch}{1.5}
\begin{table}[ht]
	\centering
    \fontsize{9.5}{10}\selectfont
	\caption{The optimal weight $\lambda^*$ within $\{0, 0.2, 0.4, 0.5, 0.6, 0.8, 1\}$ corresponding to different $N$. Left: CQL as the backbone algorithm; Right: IQL as the backbone algorithm.}
	\label{tab_diff_N_bestlam}
	\begin{minipage}{0.48\textwidth}
		\centering
		\begin{threeparttable}
			\begin{tabular}{c|ccc}
				\hline
				\hline
				Game & $N = 1000$ & $N = 2500$ & $N = 4000$ \\
				\hline
				\textit{Caveflyer}  & $0.5$ & $0.5$ & $0.2$ \\
				\hline
				\textit{Climber}    & $1.0$ & $0.8$ & $0.4$ \\
				\hline
				\textit{Dodgeball}  & $0.8$ & $0.6$ & $0.5$ \\
				\hline
				\textit{Maze}       & $0.8$ & $0.4$ & $0.2$ \\
				\hline
				\textit{Miner}      & $0.8$ & $0.8$ & $0.6$ \\
				\hline
				\hline
			\end{tabular}
		\end{threeparttable}
	\end{minipage}
	\hfill
	\begin{minipage}{0.48\textwidth}
		\centering
		\begin{threeparttable}
			\begin{tabular}{c|ccc}
				\hline
				\hline
				Game & $N = 1000$ & $N = 2500$ & $N = 4000$ \\
				\hline
				\textit{Caveflyer}  & $0.5$ & $0.5$ & $0.2$ \\
				\hline
				\textit{Climber}    & $1.0$ & $0.8$ & $0.8$ \\
				\hline
				\textit{Dodgeball}  & $0.8$ & $0.6$ & $0.5$ \\
				\hline
				\textit{Maze}       & $0.8$ & $0.4$ & $0.2$ \\
				\hline
				\textit{Miner}      & $0.8$ & $0.6$ & $0.6$ \\
				\hline
				\hline
			\end{tabular}
		\end{threeparttable}
	\end{minipage}
\end{table}

% \renewcommand{\arraystretch}{1.25}
% \begin{table}[ht]
% 	\centering
% 	% \fontsize{9.5}{10}\selectfont
% 	\begin{threeparttable}
% 		\caption{The optimal weight corresponding to different $N$.}
% 		\label{tab_diff_N_bestlam}
% 		\begin{tabular}{c|ccccccc}
%         \hline
%         \hline
% 			 Game & $N=1000 $
% 			& $N=2500 $ & $N=4000 $ \cr
%         % \hline
   
%         %  \textit{Bigfish} &  &  &  \cr
         
%          \hline

%          \textit{Caveflyer}  & $0.5$ & $0.5$ & $0.2$ \cr

%          \hline

%          \textit{Climber}  &  $1.0$ & $0.8$ & $0.4$  \cr

%          \hline

%           \textit{Dodgeball} &  $0.8$ & $0.6$  & $0.5$ \cr

%          \hline

%          \textit{Maze}  &  $0.8$ & $0.4$  & $0.2$  \cr

%         \hline

%          \textit{Miner}  & $0.8$ & $0.8$ & $0.6$ \cr

%         \hline
%         \hline
% 		\end{tabular}
% 	\end{threeparttable}
% \end{table}

%%%%%%%%%%%%%%%%%%%%%%%%%%%%%%%%%%%%%%%%%%%%%%%%%%%%%%
%%%%%%%%%%%%%%%%%%%%%%%%%%%%%%%%%%%%%%%%%%%%%%%%%%%%%%
%%%%%%%%%%%%%%%%%%%%%%%%%%%%%%%%%%%%%%%%%%%%%%%%%%%%%%
%%%%%%%%%%%%%%%%%%%%%%%%%%%%%%%%%%%%%%%%%%%%%%%%%%%%%%
%%%%%%%%%%%%%%%%%%%%%%%%%%%%%%%%%%%%%%%%%%%%%%%%%%%%%%
%%%%%%%%%%%%%%%%%%%%%%%%%%%%%%%%%%%%%%%%%%%%%%%%%%%%%%

\subsection{Auxiliary \textit{MuJoCo} Experiments}

In addition to the \textit{Procgen} benchmark, we conduct auxiliary experiments on two continuous-control tasks from the \textit{MuJoCo} benchmark: \textit{HalfCheetah} and \textit{HumanoidStandup}. This study is intended to further substantiate our theoretical contributions in this work.

\textbf{Environments.}
For each task, the target domain corresponds to the standard environment. We construct three source domains ($\xi_1, \xi_2, \xi_3$) by scaling a subset of physics parameters: gravity and friction scales. 
With the target domain defined by $(\texttt{gravity\_scale}, \texttt{friction\_scale}) = (1.0, 1.0)$, the three source domains are specified by 
$(\texttt{gravity\_scale}, \texttt{friction\_scale}) \in \{(0.85, 0.85), (0.90, 0.90), (0.95, 0.95)\}$ for $\{\xi_1, \xi_2, \xi_3\}$, respectively. In this setting, $\xi_3$ exhibits the smallest dynamics gap relative to the target domain, whereas $\xi_1$ induces the largest.

% Here, a larger $\xi$ yields a smaller dynamics gap to the target, serving as an analogue of the ``source quality'' parameter in our theory.
% Concretely, we scale the gravity vector and geom friction coefficients by $\xi$ while keeping the remaining parameters unchanged (gear and mass are fixed to $1.0$). 
% We set $(\xi_1,\xi_2,\xi_3)=(0.85, 0.90, 0.95)$, i.e., $(\texttt{gravity\_scale},\texttt{friction\_scale}) \in \{(0.85,0.85),(0.90,0.90),(0.95,0.95)\}$.

\textbf{Experimental Setup.}
We select IQL as the backbone algorithm and sweep $\lambda \in \{0, 0.2, 0.4, 0.5, 0.6, 0.8, 1.0\}$. 
% The endpoints correspond to the two reference baselines: $\lambda=0$ is \emph{target-only} training and $\lambda=1$ is \emph{source-only} training.
We collect the limited target dataset with $N$ samples and the large source datasets with $N' \gg N$ in each of the source domains. In \textit{HalfCheetah}, we choose $N'=80000$ while $N$ can vary in $\{3000,5000,8000\}$. In \textit{HumanoidStandup}, we select $N'=100000$ while $N$ can vary in $\{6000,8000,10000\}$. Note that different \textit{MuJoCo} tasks may require distinct amounts of source and target data, depending on the task complexity.

% \textbf{Evaluation.}
% We evaluate each learned policy on the corresponding target environment for 100 episodes and report mean$\pm$std over random seeds (four seeds).

\textbf{Results.}
% The auxiliary \textit{MuJoCo} experiments continue to validate our theoretical analysis in this paper.
%
Figure~\ref{fig:mujoco_lambda} demonstrates that the optimal weight $\lambda^\star$ in the \textit{MuJoCo} experiments does not coincide with any of the three trivial choices $\{0, 1, 0.5\}$, further highlighting the importance of striking an appropriate trade-off between the two datasets.
%
% Although the precise dynamics gap $\xi$ and the variance term $\varsigma$ in \eqref{eqn_bound_expectation} are difficult to measure directly in continuous control, our construction provides controlled proxies: scaling physics parameters changes the dynamics gap ($\xi$), and increasing $N$ reduces the variance of the target estimate (smaller $\varsigma$). Figures~\ref{fig:mujoco_lambda}--\ref{fig:mujoco_n} summarize the results.
%
% \textbf{Impact of the trade-off between the source and target datasets ($\lambda$).}
% Fixing the farthest source domain $\xi_1$ and using the smallest target dataset size (HalfCheetah: $N=3000$; HumanoidStandup: $N=6000$), Figure~\ref{fig:mujoco_lambda} shows that the best weight is non-trivial in both tasks: $\lambda^*=0.2$ outperforms the baselines $\lambda\in\{0,1\}$.
% \textbf{Impact of the dynamics gap between the source and target domains ($\xi$).}
Figure~\ref{fig:mujoco_xi} reveals that the optimal weight $\lambda^\star$ shifts towards larger values as the dynamics gap between the source and target domains decreases, and Figure~\ref{fig:mujoco_n} exhibits that increasing $N$ shifts the optimal weight $\lambda^\star$ toward smaller values. These observations are consistent with those in the \textit{Procgen} experiments, continuing to validate our theoretical findings in this paper.

% compares the three source domains at a fixed $N$. Consistent with the implication of \eqref{eqn_bound_expectation}, as the source domain becomes closer to the target (larger $\xi$), the optimal weight shifts towards larger values: for \textit{HalfCheetah} (at $N=3000$), $\lambda^*$ increases from $0.2$ at $\xi_1$ to $0.4$ at $\xi_2$ and to $0.6$ at $\xi_3$; for \textit{HumanoidStandup} (at $N=6000$), $\lambda^*$ similarly shifts from $0.2$ to $0.4$ and to $0.8$.
%
% \textbf{Impact of the size of the target dataset ($N$).}
% Fixing the closest source domain $\xi_3$, Figure~\ref{fig:mujoco_n} shows that increasing $N$ shifts the optimum toward smaller values in both tasks (e.g., from $\lambda^*=0.6$ / $0.8$ at the smallest $N$ to $\lambda^*=0.2$ for larger $N$), reflecting a reduced variance term in the target estimate (smaller $\varsigma$) and a diminished need for source regularization.

\begin{figure}[ht]
  \centering
  \subfigure[\textit{HalfCheetah}]{\includegraphics[width=0.27\linewidth]{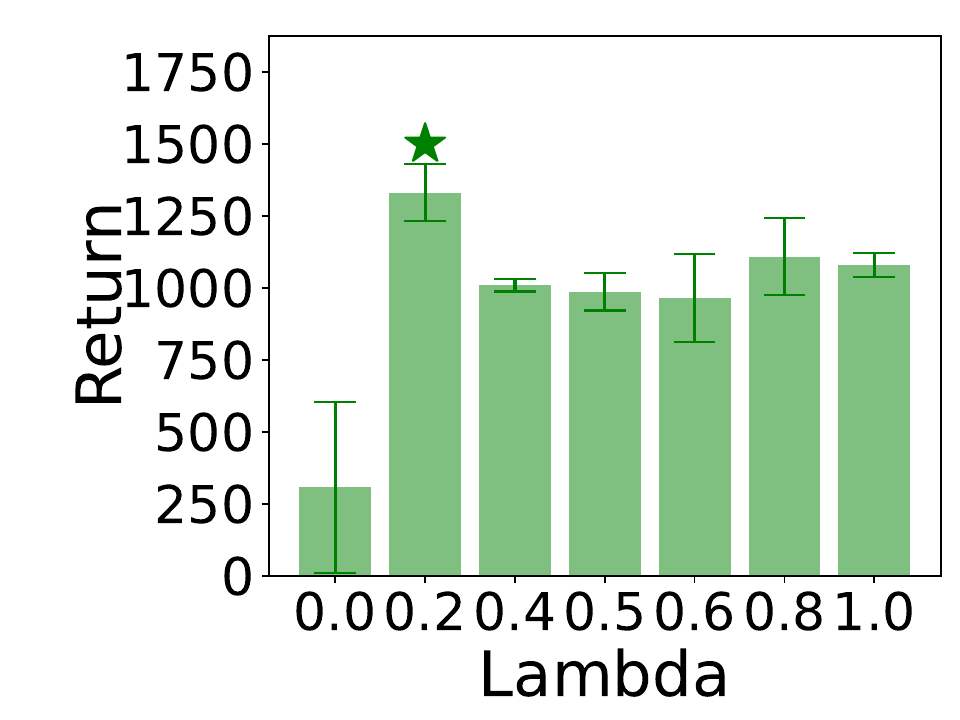}}
  \subfigure[\textit{HumanoidStandup}]{\includegraphics[width=0.27\linewidth]{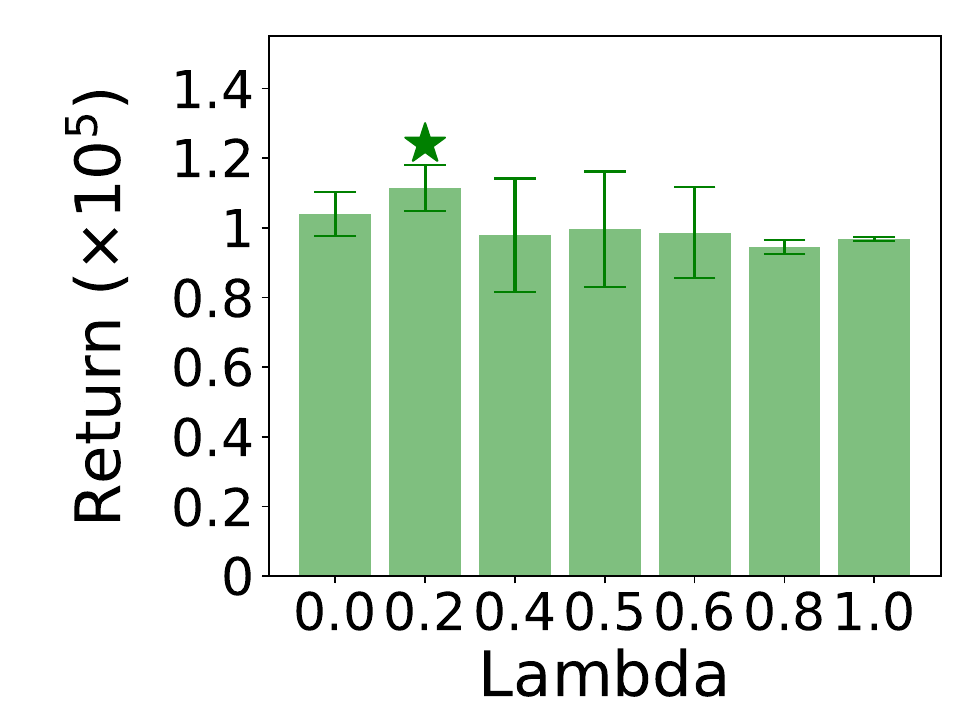}}
  \caption{The performance of offline RL across two \textit{MuJoCo} tasks.
  The source dataset contains $N'$ samples from the farthest source domain $\xi_1$, while the target dataset comprises $N$ samples from the target domain (\textit{HalfCheetah}: $N=3000$, $N'=80000$; \textit{HumanoidStandup}: $N=6000$, $N'=100000$).
  We consider seven weights, $\lambda \in \{0, 0.2, 0.4, 0.5, 0.6, 0.8, 1.0\}$, to trade off the source and target datasets with the star marking the optimal weight.}
  \label{fig:mujoco_lambda}
\end{figure}

\begin{figure}[ht]
  \centering
  \subfigure[\textit{HalfCheetah}]{\includegraphics[width=0.27\linewidth]{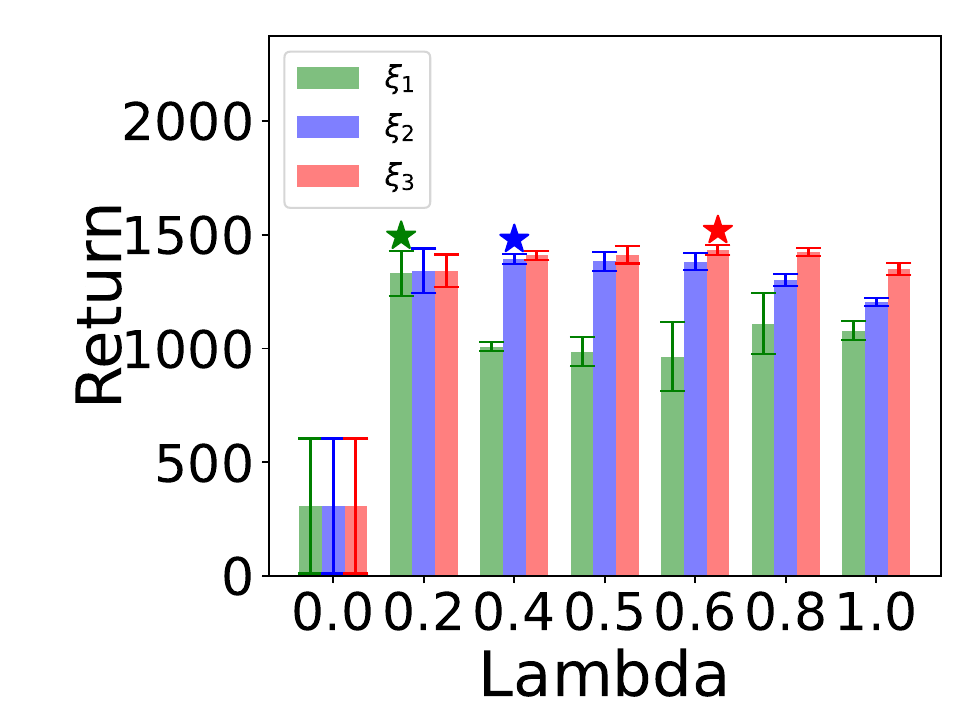}}
  \subfigure[\textit{HumanoidStandup}]{\includegraphics[width=0.27\linewidth]{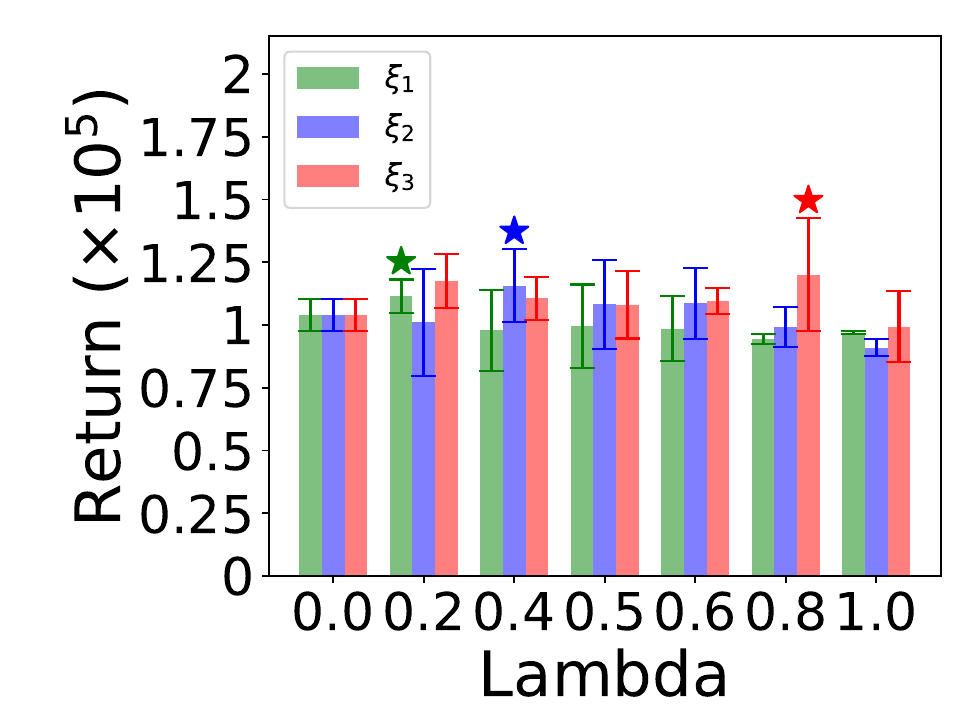}}
  \caption{The performance of offline RL across two \textit{MuJoCo} tasks.
  The target dataset comprises $N$ samples from the target domain, and three source domains are considered, each containing $N'$ samples with $(\texttt{gravity\_scale}, \texttt{friction\_scale}) \in \{(0.85, 0.85), (0.90, 0.90), (0.95, 0.95)\}$ (green--$\xi_1$; blue--$\xi_2$; red--$\xi_3$) (\textit{HalfCheetah}: $N=3000$, $N'=80000$; \textit{HumanoidStandup}: $N=6000$, $N'=100000$).
  Seven weights, $\lambda \in \{0, 0.2, 0.4, 0.5, 0.6, 0.8, 1.0\}$, are evaluated to trade off the source and target datasets with the star marking the optimal weight for each $\xi$.}
  \label{fig:mujoco_xi}
\end{figure}

\begin{figure}[ht]
  \centering
  \subfigure[\textit{HalfCheetah}]{\includegraphics[width=0.27\linewidth]{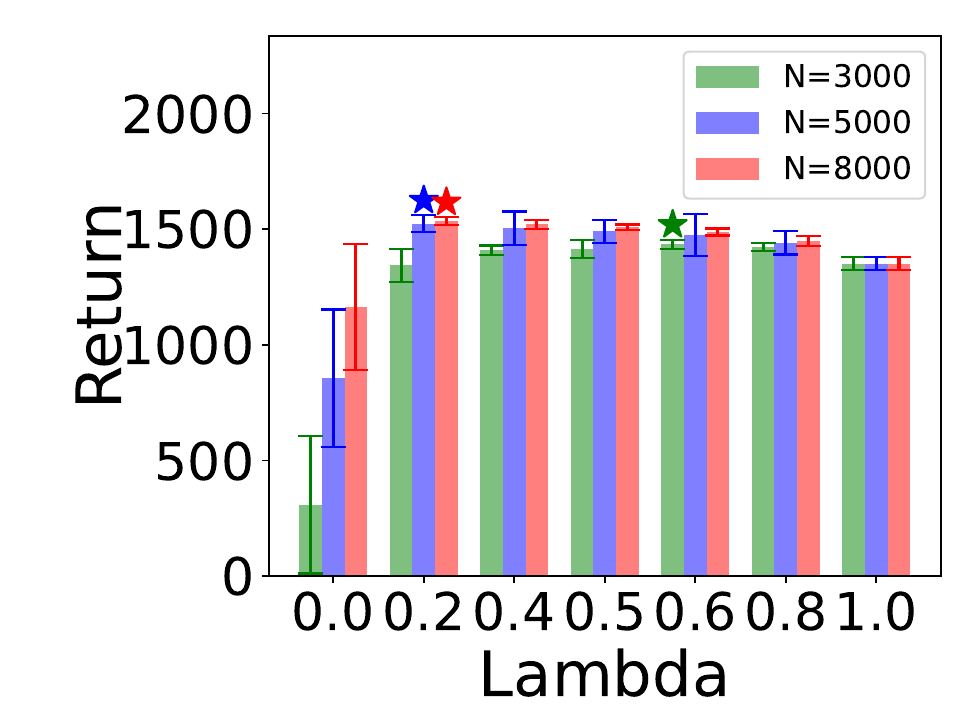}}
  \subfigure[\textit{HumanoidStandup}]{\includegraphics[width=0.27\linewidth]{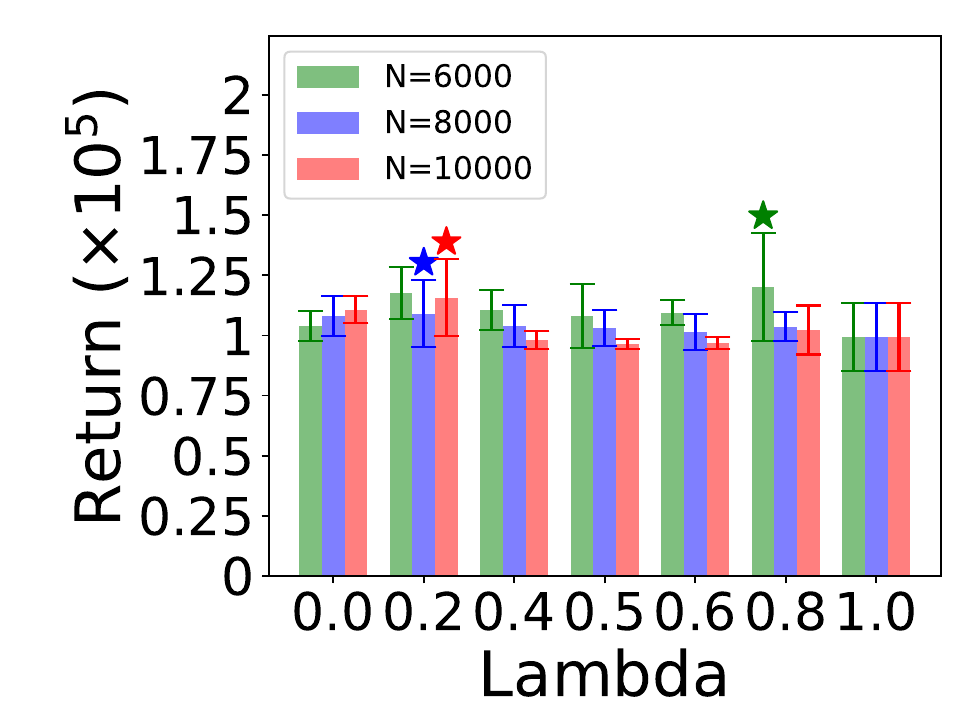}}
  \caption{The performance of offline RL across two \textit{MuJoCo} tasks.
  The source domain is fixed to the closest setting $\xi_3$, and target datasets from the target domain are considered with three different sample sizes: \textit{HalfCheetah} $N\in\{3000, 5000, 8000\}$ (green, blue, red) with $N'=80000$, and \textit{HumanoidStandup} $N\in\{6000, 8000, 10000\}$ (green, blue, red) with $N'=100000$.
  Seven weights, $\lambda \in \{0, 0.2, 0.4, 0.5, 0.6, 0.8, 1.0\}$, are evaluated to trade off source and target datasets with the star marking the optimal weight for each $N$.}
  \label{fig:mujoco_n}
\end{figure}

%%%%%%%%%%%%%%%%%%% conclusion %%%%%%%%%%%%%%%%%%%%%%%%%%%%%%%%%%%

%%%%%%%%%%%%%%%%%%% conclusion %%%%%%%%%%%%%%%%%%%%%%%%%%%%%%%%%%%

%%%%%%%%%%%%%%%%%%% conclusion %%%%%%%%%%%%%%%%%%%%%%%%%%%%%%%%%%%

%%%%%%%%%%%%%%%%%%% conclusion %%%%%%%%%%%%%%%%%%%%%%%%%%%%%%%%%%%

\section{Conclusion}

The performance of offline RL is highly dependent on the size of the target dataset. Even state-of-the-art offline RL algorithms often lack performance guarantees under a limited number of samples. To tackle offline RL with limited samples, domain adaptation can be employed, which considers related source datasets, e.g., simulators that typically offer unlimited (or a sufficiently large number of) samples. To the best of our knowledge, we propose in this work the first framework that theoretically explores the domain adaptation for offline RL with limited samples. Specifically, we establish the expected and worst-case performance bounds, as well as a convergence neighborhood under our framework. Moreover, this work provides the optimal weight for trading off the unlimited source dataset and the limited target dataset. It demonstrates that the optimal weight is not necessarily one of the trivial choices: using either dataset solely or combining the two datasets equally. Although this work centers on the theoretical analyses of our framework, we conduct a series of numerical experiments on the renowned \textit{Procgen} and \textit{MuJoCo} benchmarks, which substantiate our theoretical contributions.
Last but not least, our established optimal weight is unlikely to be computed in practice, as it depends on quantities that are challenging to estimate. Hence, developing a practical and efficient algorithm to learn an approximate optimal weight remains a promising direction for future research.

%%%%%%%%%%%%%%%%%%%%%%%%%%%%%%%%%%%%%%%%%%%%%%%%%%%%%%
%%%%%%%%%%%%%%%%%%%%%%%%%%%%%%%%%%%%%%%%%%%%%%%%%%%%%%
%%%%%%%%%%%%%%%%%%%%%%%%%%%%%%%%%%%%%%%%%%%%%%%%%%%%%%
%%%%%%%%%%%%%%%%%%%%%%%%%%%%%%%%%%%%%%%%%%%%%%%%%%%%%%
%%%%%%%%%%%%%%%%%%%%%%%%%%%%%%%%%%%%%%%%%%%%%%%%%%%%%%
%%%%%%%%%%%%%%%%%%%%%%%%%%%%%%%%%%%%%%%%%%%%%%%%%%%%%%

%%%%%%%%%%%%%%%%%%% Acknowledgments %%%%%%%%%%%%%%%%%%%%%%%%%%%%%%%%%%%

\subsubsection*{Acknowledgments}
% \label{sec:ack}
% Use unnumbered third level headings for the acknowledgments. All acknowledgments, including those to funding agencies, go at the end of the paper. Only add this information once your submission is accepted and deanonymized. The acknowledgments do not count towards the 8--12 page limit.
%
This work was sponsored by the Office of Naval Research (ONR), under contract number N00014-23-1-2377.

\clearpage

%%%%%%%%%%%%%%%%%%%%%%%%%%%%%%%%%%%%%%%%%%%%%%%%%%%%%%%%%%%%%%%%
%% Bibliography
%%%%%%%%%%%%%%%%%%%%%%%%%%%%%%%%%%%%%%%%%%%%%%%%%%%%%%%%%%%%%%%%
\bibliography{main}
\bibliographystyle{tmlr}

\clearpage

%%%%%%%%%%%%%%%%%%%%%%%%%%%%%%%%%%%%%%%%%%%%%%%%%%%%%%%%%%%%%%%%
%% NOTE: THIS MARKS THE END OF THE "MAIN TEXT"
%%%%%%%%%%%%%%%%%%%%%%%%%%%%%%%%%%%%%%%%%%%%%%%%%%%%%%%%%%%%%%%%

%%%%%%%%%%%%%%%%%%%%%%%%%%%%%%%%%%%%%%%%%%%%%%%%%%%%%%%%%%%%%%%%
%% Appendices
%%%%%%%%%%%%%%%%%%%%%%%%%%%%%%%%%%%%%%%%%%%%%%%%%%%%%%%%%%%%%%%%
\appendix

% \subsection{Proposition for the Proof of Theorem~\ref{theorem_bound_expectation}}

% %
% We start by proving Theorem~\ref{theorem_bound_expectation}. To proceed, we rely on the following proposition.

% \subsection{Proposition for Proving Theorem~\ref{theorem_bound_expectation}}

% \textbf{Proposition~\ref{proposition_two_solution}.} 
% \textit{Recall the empirical Bellman operator $\hat{\mathcal{B}}$ in \eqref{eqn_TDerror_single}. Denote by $\mathcal{B}_\mathcal{D}$ ($\mathcal{B}_\mathcal{D'}$) the Bellman operator in \eqref{eqn_q_bellman} or \eqref{eqn_ac_bellman} in which $s'$ follows the transition probability of the dataset $\mathcal{D}$ ($\mathcal{D'}$). Denote by $N(s, a)$ the number of $(s, a)$ pairs in the $N$ empirical samples considered. Recall that $Q^{k+1}$ and $Q_\lambda^{k+1}$ represent the solutions to \eqref{eqn_objective1} and \eqref{eqn_objective2}. Given any state-action pair $(s, a) \in \mathcal{D}$ and at each iteration $(k=1, 2, 3, \cdots)$, it holds that}
% \begin{align*}
%     &Q^{k+1}(s, a) = \mathcal{B}_\mathcal{D} Q^k(s, a), \\
%     &Q_\lambda^{k+1}(s, a) = \frac{1-\lambda}{N(s, a)} \sum_{j=1}^{N(s, a)} \hat{\mathcal{B}}_{s_j'} Q^k(s, a) + \lambda \mathcal{B}_\mathcal{D'} Q^k(s, a).
% \end{align*}
%
% \red{

\section{Omitted Proofs}

\subsection{Proof of Proposition~\ref{proposition_two_solution}}
\label{append_prf_proposition1}

\textbf{Proposition~\ref{proposition_two_solution}.}
Let Assumptions \ref{assumption_infinite_source} and \ref{assumption_SAprobability} hold.
Recall the empirical Bellman operator $\hat{\mathcal{B}}$ in \eqref{eqn_TDerror_single}. Denote by $\mathcal{B}_\mathcal{D}$ ($\mathcal{B}_\mathcal{D'}$) the Bellman operator in \eqref{eqn_q_bellman} or \eqref{eqn_ac_bellman} in which $s'$ follows the transition probability of the domain $\mathcal{D}$ ($\mathcal{D'}$). Note that $Q^{k+1}$ and $Q_\lambda^{k+1}$ represent the solutions to \eqref{eqn_objective1} and \eqref{eqn_objective2}.
Given any dataset $\hat{\mathcal{D}}_\text{tr}$ and its corresponding transition-excluded dataset $\hat{\mathcal{D}}$, denote by $N$ and $N(s, a)$ the total number of samples and the amount of $(s, a, \cdot)$ transition in $\hat{\mathcal{D}}_\text{tr}$. At each iteration \((k=0, 1, 2, \cdots)\), it holds that
\begin{align}
    &Q^{k+1}(s, a) = \mathcal{B}_\mathcal{D} Q^k(s, a), \, \forall (s, a) \in \mathcal{S}\times\mathcal{A}, \\
    &Q_\lambda^{k+1}(s, a) = \frac{\frac{1-\lambda}{N} \sum_{j=1}^{N(s, a)} \hat{\mathcal{B}}_{\hat{s}_j'} Q^k(s, a) + \lambda P_{\mathcal{D}'}(s, a) \mathcal{B}_\mathcal{D'} Q^k(s, a) } {(1-\lambda) P_{\hat{\mathcal{D}}}(s, a) + \lambda P_{\mathcal{D}'}(s, a) },  \, \forall (s, a) \in \mathcal{S}\times\mathcal{A}.
    % & \red{ \quad\quad\quad\quad\quad\quad \forall (s, a) \in \{  \hat{\mathcal{D}} ~~\text{or}~~  \mathcal{S}\times\mathcal{A} \setminus \hat{\mathcal{D}} ~\text{if}~ \lambda\in (0, 1] \}. \nonumber }
\end{align}
\begin{proof}
    Note that \eqref{eqn_proposition_two_solution_Q} is well-known in RL, however, we provide its proof here for completeness. For any $Q$, we note that
\begin{align}
    \mathcal{E}_\mathcal{D} (Q) 
    &\stackrel{(a)}{=}\underset{ (s, a, s') \sim \mathcal{D} }{\mathbb{E}} \left[ \left(Q(s, a)- \hat{\mathcal{B}}_{s'} Q^k(s, a) \right)^2 \right] \\
    &\stackrel{(b)}{=}\sum_{s,a}P_D(s,a)\sum_{s' \sim \mathcal{D}} P_\mathcal{D}(s'\mid s,a) \left(Q(s, a)- \hat{\mathcal{B}}_{s'} Q^k(s, a) \right)^2,
\end{align}
where $(a)$ holds by definitions \eqref{eqn_TDerror_single} and \eqref{eqn_TDerror},  $(b)$ follows from the definition of conditional expectation.

The derivative of $\mathcal{E}_\mathcal{D} (Q) $ w.r.t. $Q(s, a), \forall (s, a) \in \mathcal{S}\times\mathcal{A}$ is given by
\begin{align}
    \frac{\partial \mathcal{E}_\mathcal{D} (Q)}{\partial Q(s, a)} &\stackrel{(a)}{=} 2 P_\mathcal{D}(s, a)\sum_{s' }  P_\mathcal{D} (s'|s, a) \left(Q(s, a)- \hat{\mathcal{B}}_{s'} Q^k(s, a) \right) \\
    &\stackrel{(b)}{=} 2 P_\mathcal{D}(s, a)  \left(Q(s, a)- \mathcal{B}_\mathcal{D} Q^k(s, a) \right), \label{eqn_proposition_two_solution_derivative_Q}
\end{align}
where $(a)$ follows by taking the derivative, leveraging the fact that each $Q(s,a)$ is only present in exactly one term of the summation \eqref{eqn_proposition_two_solution_derivative_Q}, $(b)$ follows from the facts that $Q(s,a)$ is independent of $s^\prime$ and by definition $\mathcal{B}_\mathcal{D} Q^k(s, a) = \mathbb{E}_{s' \sim P_\mathcal{D}(s'|s,a)} \left[ \hat{\mathcal{B}}_{s'} Q^k(s, a)\right]$ (see \eqref{eqn_q_bellman} or \eqref{eqn_ac_bellman}).  
Since the objective \eqref{eqn_objective1} is strongly convex, its minimizer $Q^{k+1}$ is the unique point that satisfies
\begin{align}\label{eqn_proposition1_grad0}
    \frac{\partial \mathcal{E}_\mathcal{D} (Q^{k+1})}{\partial Q(s, a)} =0, \, \forall (s, a) \in \mathcal{S}\times\mathcal{A}.
\end{align}
Combining the previous equation with \eqref{eqn_proposition_two_solution_derivative_Q} and the fact that $P_\mathcal{D}(s, a)>0$ (by Assumption~\ref{assumption_SAprobability}) yields
\begin{align}
    Q^{k+1}(s, a) = \mathcal{B}_\mathcal{D} Q^k(s, a), \, \forall (s, a) \in \mathcal{S}\times\mathcal{A}.
\end{align}
This completes the proof of \eqref{eqn_proposition_two_solution_Q} in Proposition~\ref{proposition_two_solution}.

We now turn our attention to proving \eqref{eqn_proposition_two_solution_Q_lam}. %$Q_\lambda^{k+1} (s, a), \forall (s, a) \in% \mathcal{S}\times\mathcal{A}$. 
Note that
\begin{align}
    (1-\lambda) \mathcal{E}_{\hat{\mathcal{D}}}(Q) + \lambda \mathcal{E}_{\mathcal{D}'}(Q) 
    &\stackrel{(a)}{=} (1-\lambda) \frac{1}{N} \sum_{i=1}^{N} \left( Q(s_i, a_i) - \hat{\mathcal{B}}_{\hat{s}'} Q^k(s_i, a_i)\right)^2 + \lambda \mathcal{E}_{\mathcal{D}'}(Q) \label{eqn_reclassify} \\
    &\stackrel{(b)}{=} (1-\lambda) \frac{1}{N} \sum_{(s, a) \in \hat{\mathcal{D}} } \sum_{j=1}^{N(s, a)} \left( Q(s, a) - \hat{\mathcal{B}}_{\hat{s}_j'} Q^k(s, a)\right)^2 + \lambda \mathcal{E}_{\mathcal{D}'}(Q), \label{eqn_prop_second_part}
\end{align}
where $(a)$ follows from the definitions in \eqref{eqn_TDerror_single} and \eqref{eqn_TDerror_empirical}, $(b)$ re-arranges \eqref{eqn_reclassify} by summing first the $N(s,a)$ samples of the next state $\hat{s}'$ corresponding to a given $(s, a)$ pair in the dataset $\hat{\mathcal{D}}$.

The derivative of $(1-\lambda) \mathcal{E}_{\hat{\mathcal{D}}}(Q) + \lambda \mathcal{E}_{\mathcal{D}'}(Q)$ w.r.t. $Q(s, a)$ for any $(s, a) \in \mathcal{S}\times\mathcal{A}$ can be computed under two scenarios:
\begin{itemize}
    \item[(i)] For any $(s, a) \in \mathcal{S}\times\mathcal{A} \setminus \hat{\mathcal{D}}$, the derivative is given by
    \begin{align}
    \frac{\partial \left( (1-\lambda) \mathcal{E}_{\hat{\mathcal{D}}}(Q) + \lambda \mathcal{E}_{\mathcal{D}'}(Q) \right)}{\partial Q(s, a)} 
    = 2\lambda P_\mathcal{D'}(s, a)  \left(Q(s, a)- \mathcal{B}_\mathcal{D'} Q^k(s, a) \right), \label{eqn_proposition_two_solution_derivative_Q_lam_simple}
\end{align}
where the previous equation follows from an analog of \eqref{eqn_proposition_two_solution_derivative_Q} replacing $\mathcal{D}$ by $\mathcal{D'}$ and the fact that the first term in \eqref{eqn_prop_second_part} depends solely on the state-action pairs in the dataset $\hat{\mathcal{D}}$.

    \item[(ii)] For any $(s, a) \in \hat{\mathcal{D}}$, the derivative is given by
\begin{align}
    &\frac{\partial \left( (1-\lambda) \mathcal{E}_{\hat{\mathcal{D}}}(Q) + \lambda \mathcal{E}_{\mathcal{D}'}(Q) \right)}{\partial Q(s, a)} \nonumber \\
    &\stackrel{(a)}{=} (1-\lambda) \frac{2}{N} \sum_{j=1}^{N(s, a)} \left( Q(s, a) - \hat{\mathcal{B}}_{\hat{s}_j'} Q^k(s, a)\right) + 2\lambda P_\mathcal{D'}(s, a)  \left(Q(s, a)- \mathcal{B}_\mathcal{D'} Q^k(s, a) \right) \\
    &\stackrel{(b)}{=} (1-\lambda) \frac{2}{N}  \left(N(s, a) Q(s, a) - \sum_{j=1}^{N(s, a)}\hat{\mathcal{B}}_{\hat{s}_j'} Q^k(s, a)\right) + 2\lambda P_\mathcal{D'}(s, a)  \left(Q(s, a)- \mathcal{B}_\mathcal{D'} Q^k(s, a) \right), \label{eqn_proposition_two_solution_derivative_Q_lam}
    % &\stackrel{(c)}{=} 2(1-\lambda) \left(P(s, a) Q(s, a) - \frac{P(s, a)}{N(s, a)} \sum_{j=1}^{N(s, a)}\hat{\mathcal{B}}_{s_j'} Q^k(s, a)\right) + 2\lambda P_\mathcal{D'}(s, a)  \left(Q(s, a)- \mathcal{B}_\mathcal{D'} Q^k(s, a) \right) \label{eqn_proposition_two_solution_derivative_Q_lam}
\end{align}
where the first term in $(a)$ is obtained by taking the derivative, leveraging the fact that each $Q(s,a)$ is only present in exactly one term of the summation \eqref{eqn_proposition_two_solution_derivative_Q_lam}, and the second term in $(a)$ is analogous to the derivation of $\partial \mathcal{E}_\mathcal{D} (Q) /\partial Q(s, a)$ (see \eqref{eqn_proposition_two_solution_derivative_Q}) replacing $\mathcal{D}$ by $\mathcal{D'}$, (b) is due to the fact that $\sum_{j=1}^{N(s, a)}Q(s, a) = N(s, a) Q(s, a)$.

\end{itemize}

It is worth highlighting that \eqref{eqn_proposition_two_solution_derivative_Q_lam} reduces to \eqref{eqn_proposition_two_solution_derivative_Q_lam_simple} when $N(s, a) = 0$, i.e., when $(s, a) \in \mathcal{S}\times\mathcal{A} \setminus \hat{\mathcal{D}}$ (see Remark~\ref{remark_assump}). Consequently, by combining the two scenarios of $(s, a) \in \hat{\mathcal{D}}$ and $(s, a) \in \mathcal{S}\times\mathcal{A} \setminus \hat{\mathcal{D}}$, it holds for any $(s, a) \in \mathcal{S}\times\mathcal{A}$ that
\begin{align}
    &\frac{\partial \left( (1-\lambda) \mathcal{E}_{\hat{\mathcal{D}}}(Q) + \lambda \mathcal{E}_{\mathcal{D}'}(Q) \right)}{\partial Q(s, a)} \nonumber \\
    &= (1-\lambda) \frac{2}{N}  \left(N(s, a) Q(s, a) - \sum_{j=1}^{N(s, a)}\hat{\mathcal{B}}_{\hat{s}_j'} Q^k(s, a)\right) + 2\lambda P_\mathcal{D'}(s, a)  \left(Q(s, a)- \mathcal{B}_\mathcal{D'} Q^k(s, a) \right).
\end{align}

Similar to \eqref{eqn_proposition1_grad0}, since $Q_\lambda^{k+1}$ is the unique solution to the convex problem \eqref{eqn_objective2}, the gradient of the objective is zero. Equating the right hand side of the previous equation to zero follows that
\begin{align}
    Q_\lambda^{k+1}(s, a) \! \left((1-\lambda)P_{\hat{\mathcal{D}}}(s, a) + \lambda P_\mathcal{D'}(s, a) \right) \! = \! \frac{1-\lambda}{N} \sum_{j=1}^{N(s, a)} \!\! \hat{\mathcal{B}}_{\hat{s}_j'} Q^k(s, a) + \lambda P_\mathcal{D'}(s, a) \mathcal{B}_\mathcal{D'} Q^k(s, a),
\end{align}
where in the above expression we use $P_{\hat{\mathcal{D}}}(s, a)=N(s,a)/N$.
It holds by Remark~\ref{remark_proposition} that $(1-\lambda) P_{\hat{\mathcal{D}}}(s, a) + \lambda P_{\mathcal{D}'}(s, a) > 0$.
Then we further obtain
% \begin{align}
%     Q_\lambda^{k+1}(s, a) = \frac{1-\lambda}{N(s, a)} \sum_{j=1}^{N(s, a)} \hat{\mathcal{B}}_{s_j'} Q^k(s, a) + \lambda \mathcal{B}_\mathcal{D'} Q^k(s, a), \, \forall (s, a) \in \mathcal{D}.
% \end{align}
\begin{align}
    &Q_\lambda^{k+1}(s, a) = \frac{\frac{1-\lambda}{N} \sum_{j=1}^{N(s, a)} \hat{\mathcal{B}}_{\hat{s}_j'} Q^k(s, a) + \lambda P_{\mathcal{D}'}(s, a) \mathcal{B}_\mathcal{D'} Q^k(s, a) } {(1-\lambda) P_{\hat{\mathcal{D}}}(s, a) + \lambda P_{\mathcal{D}'}(s, a) }, \, \forall (s, a) \in \mathcal{S}\times\mathcal{A}.
    % \\
    % & \orange{ \quad\quad\quad\quad\quad\quad \forall (s, a) \in \{  \hat{\mathcal{D}} ~~\text{or}~~  \mathcal{S}\times\mathcal{A} \setminus \hat{\mathcal{D}} ~\text{if}~ \lambda\in (0, 1] \}. \nonumber }
\end{align}
This completes the proof of Proposition~\ref{proposition_two_solution}.
\end{proof}

\subsection{Proof of Theorem~\ref{theorem_bound_expectation}}
\label{append_prf_theorem1}

\textbf{Theorem~\ref{theorem_bound_expectation}} (Expected Performance Bound).
Recall \(\xi\) in \eqref{eqn_discrepancy}, \(\varsigma\) in \eqref{eqn_normalized_variance} and define $C := \min_{ (s, a) \in \mathcal{S} \times \mathcal{A} } \, P_\mathcal{D}(s, a)$. Let the conditions of Proposition~\ref{proposition_two_solution} hold. Given any dataset $\hat{\mathcal{D}}_\text{tr}$, it holds at each iteration \((k=0, 1, 2, \cdots)\) that
%
% % %
% \begin{align}
%     &\underset{\substack{ N(s, a),\, \hat{s}' \sim P_\mathcal{D} (\hat{s}'| s, a)}}{\mathbb{E}} \left[\mathcal{E}_{\mathcal{D}}(Q_\lambda^{k+1}) \right] - \mathcal{E}_{\mathcal{D}}(Q^{k+1}) 
%     \leq  \left( \frac{ 1-\lambda }{ 1-\lambda + \lambda / \beta_u } \right)^2
%     \varsigma + \left( \frac{\lambda  }{ (1-\lambda) \beta_l + \lambda } \right)^2  \xi.
% \end{align}
% %
% %
\begin{align}
    \underset{\substack{\hat{s}' \sim P_\mathcal{D} (\hat{s}'| s, a)}}{\mathbb{E}} \left[\mathcal{E}_{\mathcal{D}}(Q_\lambda^{k+1}) \right] - \mathcal{E}_{\mathcal{D}}(Q^{k+1})
    &\leq \left( \frac{ 1-\lambda }{ 1-\lambda + \lambda / \beta_u } \right)^2 \varsigma + \left( \left( \frac{\lambda  }{ (1-\lambda) \beta_l + \lambda } \right)^2 + e^{-N C} \right) \xi.
\end{align}
\begin{proof}
We start by writing 
$\mathcal{E}_{\mathcal{D}} (Q_\lambda^{k+1})- \mathcal{E}_{D}(Q^{k+1})$ using the definitions in \eqref{eqn_TDerror_single} and \eqref{eqn_TDerror}.
\begin{equation}
    \mathcal{E}_{\mathcal{D}} (Q_\lambda^{k+1})- \mathcal{E}_{D}(Q^{k+1}) =  
            \underset{(s, a, s') \sim \mathcal{D}}{\mathbb{E}} 
            \Big[ 
                (Q_\lambda^{k+1} (s, a) - \hat{\mathcal{B}}_{s'} Q^k(s, a))^2 -(Q^{k+1} (s, a) - \hat{\mathcal{B}}_{s'} Q^k(s, a))^2 
            \Big]. 
\end{equation}
Expanding the squares in the above expression follows that 
\begin{align}
    &\mathcal{E}_{\mathcal{D}} (Q_\lambda^{k+1})- \mathcal{E}_{D}(Q^{k+1}) \nonumber \\ 
    &= \underset{(s, a, s') \sim \mathcal{D}}{\mathbb{E}} 
            \Big[ 
                (Q_\lambda^{k+1} (s, a))^2 - (Q^{k+1} (s, a))^2 +2\hat{\mathcal{B}}_{s'} Q^k(s, a)\left( Q^{k+1} (s, a) - Q^{k+1}_\lambda(s,a)\right)
            \Big]. 
\end{align}
Conditioning on $(s,a)$, using the fact that only $\hat{B}_{s^\prime}Q^k(s,a)$ depends on $s^\prime$, and that by definition $ \mathbb{E}_{s' \sim P_\mathcal{D}(s'|s,a)} \left[ \hat{\mathcal{B}}_{s'} Q^k(s, a)\right] = \mathcal{B}_{\mathcal{D}} Q^k(s, a)$, the above equation reduces to
\begin{align}
    &\mathcal{E}_{\mathcal{D}} (Q_\lambda^{k+1})- \mathcal{E}_{D}(Q^{k+1}) \nonumber \\ 
    &= \underset{(s, a) \sim \mathcal{D}}{\mathbb{E}} 
            \Big[ 
                (Q_\lambda^{k+1} (s, a))^2 - (Q^{k+1} (s, a))^2 +2\mathcal{B}_\mathcal{D} Q^k(s, a)\left( Q^{k+1} (s, a) - Q^{k+1}_\lambda(s,a)\right)
            \Big].
\end{align}
Replacing $ Q^{k+1}(s, a)$ with $ \mathcal{B}_\mathcal{D} Q^k(s, a)$ (by \eqref{eqn_proposition_two_solution_Q} in Proposition~\ref{proposition_two_solution}) in the previous equation yields
\begin{align}
    &\mathcal{E}_{\mathcal{D}} (Q_\lambda^{k+1})- \mathcal{E}_{D}(Q^{k+1}) \nonumber \\ 
    &= \underset{(s, a) \sim \mathcal{D}}{\mathbb{E}} 
            \Big[ 
                (Q_\lambda^{k+1} (s, a))^2 - \left( \mathcal{B}_\mathcal{D} Q^k(s, a) \right)^2 +2\mathcal{B}_\mathcal{D} Q^k(s, a)\left( \mathcal{B}_\mathcal{D} Q^k(s, a) - Q^{k+1}_\lambda(s,a)\right)
            \Big] \\
    &= \underset{(s, a) \sim \mathcal{D}}{\mathbb{E}} 
        \Big[ 
            (Q_\lambda^{k+1} (s, a))^2 + \left( \mathcal{B}_\mathcal{D} Q^k(s, a) \right)^2 -2\mathcal{B}_\mathcal{D} Q^k(s, a)  Q^{k+1}_\lambda(s,a)
        \Big]. \label{eqn_obj_difference1}
\end{align} 
We next take the expectation on \eqref{eqn_obj_difference1} with respect to $\hat{s}'$ (from the dataset $\hat{\mathcal{D}}_\text{tr}$), using the fact that only $Q_\lambda^{k+1} (s, a)$ depends on $\hat{s}'$ and the conditional expectation it follows that
%, and using the fact that only $Q_\lambda^{k+1} (s, a)$ depends on $\hat{s}'$ to obtain
%
%\begin{align}
 %   &\underset{\substack{\hat{s}' \sim P_\mathcal{D} (\hat{s}'| s, a)}}{\mathbb{E}} \left[ \mathcal{E}_{\mathcal{D}} (Q_\lambda^{k+1}) \right]- \mathcal{E}_{D}(Q^{k+1}) \nonumber \\ 
  %  &= \underset{\substack{\hat{s}' \sim P_\mathcal{D} (\hat{s}'| s, a)}}{\mathbb{E}} \left[ \underset{(s, a) \sim \mathcal{D}}{\mathbb{E}} 
   %         \Big[ 
    %            (Q_\lambda^{k+1} (s, a))^2 + \left( \mathcal{B}_\mathcal{D} Q^k(s, a) \right)^2 - 2\mathcal{B}_\mathcal{D} Q^k(s, a)  Q^{k+1}_\lambda(s,a)
     %       \Big] \right].
%\end{align}
%
%
%
% Since the dataset $\hat{\mathcal{D}}$ is sampled i.i.d., we can swap the two expectations above to obtain
% %
\begin{align}
    &\underset{\substack{\hat{s}' \sim P_\mathcal{D} (\hat{s}'| s, a)}}{\mathbb{E}} \left[ \mathcal{E}_{\mathcal{D}} (Q_\lambda^{k+1}) \right]- \mathcal{E}_{D}(Q^{k+1}) \nonumber \\ 
    &= \underset{(s, a) \sim \mathcal{D}}{\mathbb{E}} 
        \Bigg[ 
            \underset{ \hat{s}' \sim P_\mathcal{D} (\hat{s}' | s, a)}{\mathbb{E}} 
            \Big[ 
                \left(Q_\lambda^{k+1} (s, a)\right)^2 
            \Big] 
            + \left( \mathcal{B}_\mathcal{D} Q^k(s, a) \right)^2  - 2 \mathcal{B}_{\mathcal{D}} Q^k(s, a) 
            \Bigg( \underset{ \hat{s}' \sim P_\mathcal{D} (\hat{s}' | s, a)}{\mathbb{E}} 
                \Big[ 
                    Q_\lambda^{k+1} (s, a) 
                \Big] 
            \Bigg)
        \Bigg]. \label{eqn_theorem1_potato3}
\end{align}
Using the definition of the variance of a random variable, we rewrite the first term on the right hand side of the above expression
\begin{align}
    \underset{ \hat{s}' \sim P_\mathcal{D} (\hat{s}' | s, a)}{\mathbb{E}} 
            \Big[ 
                \left(Q_\lambda^{k+1} (s, a)\right)^2 
            \Big] = \sigma_{\hat{s}'}^2 \left( Q_\lambda^{k+1} (s, a) \right) + \left( \underset{ \hat{s}' \sim P_\mathcal{D} (\hat{s}' | s, a)}{\mathbb{E}} 
            \Big[ 
                Q_\lambda^{k+1} (s, a) 
            \Big] \right)^2.
\end{align}
Substituting the previous expression in \eqref{eqn_theorem1_potato3} yields

\begin{align}
    &\underset{\substack{\hat{s}' \sim P_\mathcal{D} (\hat{s}'| s, a)}}{\mathbb{E}} \left[ \mathcal{E}_{\mathcal{D}} (Q_\lambda^{k+1}) \right]- \mathcal{E}_{D}(Q^{k+1}) \nonumber \\ 
    &= \underset{(s, a) \sim \mathcal{D}}{\mathbb{E}} 
        \Bigg[ 
            \sigma_{\hat{s}'}^2 \left( Q_\lambda^{k+1} (s, a) \right)  + \left( \underset{ \hat{s}' \sim P_\mathcal{D} (\hat{s}' | s, a)}{\mathbb{E}} 
            \Big[ 
                Q_\lambda^{k+1} (s, a) 
            \Big] \right)^2 
            + \left( \mathcal{B}_\mathcal{D} Q^k(s, a) \right)^2 \nonumber \\
     &\quad\quad\quad\quad\quad - 2 \mathcal{B}_{\mathcal{D}} Q^k(s, a) 
            \Bigg( \underset{ \hat{s}' \sim P_\mathcal{D} (\hat{s}' | s, a)}{\mathbb{E}} 
                \Big[ 
                    Q_\lambda^{k+1} (s, a) 
                \Big] 
            \Bigg)
        \Bigg].
\end{align}

Note that the last three terms on the right hand side of the above expression are the square of a difference. Hence, the previous equation reduces to
% \blue{You don't need to rename U1 the variance. Just use $\sigma$ in the equations below.}
\begin{align}
    &\underset{\substack{\hat{s}' \sim P_\mathcal{D} (\hat{s}'| s, a)}}{\mathbb{E}} \left[ \mathcal{E}_{\mathcal{D}} (Q_\lambda^{k+1}) \right]- \mathcal{E}_{D}(Q^{k+1}) \nonumber \\ 
    &= \underset{(s, a) \sim \mathcal{D}}{\mathbb{E}} 
        \Bigg[ 
            \sigma_{\hat{s}'}^2 \left( Q_\lambda^{k+1} (s, a) \right)  +  \left( \underbrace{ \underset{ \hat{s}' \sim P_\mathcal{D} (\hat{s}' | s, a)}{\mathbb{E}} 
            \Big[ 
                Q_\lambda^{k+1} (s, a) 
            \Big]
            - \mathcal{B}_\mathcal{D} Q^k(s, a) }_{U_1}\right)^2 
        \Bigg]. \label{eqn_obj_withoutN_u1_u2}
\end{align}
We next work on $\sigma_{\hat{s}'}^2 \left( Q_\lambda^{k+1} (s, a) \right)$ and $U_1$ separately by focusing on $\sigma_{\hat{s}'}^2 \left( Q_\lambda^{k+1} (s, a) \right)$ first.
\begin{itemize}
    \item[(i)] For any $(s, a) \in \mathcal{S}\times\mathcal{A} \setminus \hat{\mathcal{D}}$, $Q_\lambda^{k+1} (s, a)$ in \eqref{eqn_proposition_two_solution_Q_lam} reduces to $\mathcal{B}_\mathcal{D'} Q^k(s, a)$, which is independent of $\hat{s}'$. Then we have 
    \begin{align}
         \sigma_{\hat{s}'}^2 \left( Q_\lambda^{k+1} (s, a) \right) =  \sigma_{\hat{s}'}^2 \left( \mathcal{B}_\mathcal{D'} Q^k(s, a) \right) = 0.
    \end{align}

    \item[(ii)] For any $(s, a) \in \hat{\mathcal{D}}$, i.e., $N(s, a) >0$, we note that
\begin{align}
    \sigma_{\hat{s}'}^2 \left( Q_\lambda^{k+1} (s, a) \right)
    &\stackrel{(a)}{=} \sigma_{\hat{s}'}^2 \left(  \frac{\frac{1-\lambda}{N} \sum_{j=1}^{N(s, a)} \hat{\mathcal{B}}_{\hat{s}_j'} Q^k(s, a) + \lambda P_{\mathcal{D}'}(s, a) \mathcal{B}_\mathcal{D'} Q^k(s, a) } {(1-\lambda) P_{\hat{\mathcal{D}}}(s, a) + \lambda P_{\mathcal{D}'}(s, a) } \right) \\
    &\stackrel{(b)}{=} \sigma_{\hat{s}'}^2 \left(  \frac{\frac{1-\lambda}{N} \sum_{j=1}^{N(s, a)} \hat{\mathcal{B}}_{\hat{s}_j'} Q^k(s, a) } {(1-\lambda) P_{\hat{\mathcal{D}}}(s, a) + \lambda P_{\mathcal{D}'}(s, a) } \right) \\
    &= \sigma_{\hat{s}'}^2 \left(  \frac{(1-\lambda) P_{\hat{\mathcal{D}}}(s, a) \frac{1}{N(s, a)} \sum_{j=1}^{N(s, a)} \hat{\mathcal{B}}_{\hat{s}_j'} Q^k(s, a) } {(1-\lambda) P_{\hat{\mathcal{D}}}(s, a) + \lambda P_{\mathcal{D}'}(s, a) } \right) \\
    &\stackrel{(c)}{=} \left( \frac{(1-\lambda) P_{\hat{\mathcal{D}}}(s, a)}{(1-\lambda) P_{\hat{\mathcal{D}}}(s, a) + \lambda P_{\mathcal{D}'}(s, a) } \right)^2 \frac{\sigma_{\hat{s}'}^2 \left(\hat{\mathcal{B}}_{\hat{s}'} Q^k(s, a) \right) } {N(s, a)},
\end{align}
where $(a)$ follows from \eqref{eqn_proposition_two_solution_Q_lam} in Proposition~\ref{proposition_two_solution}, $(b)$
is obtained by the theorem of variance of a shifted random variable, since $\lambda P_{\mathcal{D}'}(s, a) \mathcal{B}_\mathcal{D'} Q^k(s, a)$ does not depend on $\hat{s}'$, $(c)$ is due to $\sigma^2(cX) = c^2 \sigma^2(X) $ where $X$ denotes any random variable and $c$ is a constant.

\end{itemize}

Therefore, we have
% \begin{align}
%     \sigma_{\hat{s}'}^2 \left( Q_\lambda^{k+1} (s, a) \right) = 0, \quad \forall (s, a) \in (\mathcal{S}\times\mathcal{A}) \setminus \hat{\mathcal{D}} \nonumber \\
%     \sigma_{\hat{s}'}^2 \left( Q_\lambda^{k+1} (s, a) \right)
%     &= \left( \frac{(1-\lambda) P_{\hat{\mathcal{D}}}(s, a)}{(1-\lambda) P_{\hat{\mathcal{D}}}(s, a) + \lambda P_{\mathcal{D}'}(s, a) } \right)^2 \frac{\sigma_{\hat{s}'}^2 \left(\hat{\mathcal{B}}_{\hat{s}'} Q^k(s, a) \right) } {N(s, a)}, \, \forall (s, a) \in \hat{\mathcal{D}}
% \end{align}
%
\begin{align}\label{eqn_variance_u}
\sigma_{\hat{s}'}^2 \!\left( Q_\lambda^{k+1} (s, a) \right)
=
\begin{cases}
0, & \forall (s, a) \in \mathcal{S}\times\mathcal{A} \setminus \hat{\mathcal{D}},  \\[5pt]
\left( \dfrac{(1-\lambda) P_{\hat{\mathcal{D}}}(s, a)}{(1-\lambda) P_{\hat{\mathcal{D}}}(s, a) + \lambda P_{\mathcal{D}'}(s, a)} \right)^{\!2}
\dfrac{\sigma_{\hat{s}'}^2 \!\left(\hat{\mathcal{B}}_{\hat{s}'} Q^k(s, a)\right)}{N(s, a)}, 
& \forall (s, a) \in \hat{\mathcal{D}}.
\end{cases}
\end{align}

Notice that \eqref{eqn_obj_withoutN_u1_u2} can be rewritten as
\begin{align}
    &\underset{\substack{\hat{s}' \sim P_\mathcal{D} (\hat{s}'| s, a)}}{\mathbb{E}} \left[ \mathcal{E}_{\mathcal{D}} (Q_\lambda^{k+1}) \right]- \mathcal{E}_{D}(Q^{k+1}) \nonumber \\
    &= \underset{(s, a) \sim \mathcal{D}}{\mathbb{E}} 
        \Bigg[ 
            \sigma_{\hat{s}'}^2 \left( Q_\lambda^{k+1} (s, a) \right)  \Bigg] + \underset{(s, a) \sim \mathcal{D}}{\mathbb{E}} 
        \Big[ U_1^2 
        \Big] \\
    &=\sum_{(s, a) \in \mathcal{S}\times\mathcal{A}} P_\mathcal{D}(s, a) \sigma_{\hat{s}'}^2 \left( Q_\lambda^{k+1} (s, a) \right) + \underset{(s, a) \sim \mathcal{D}}{\mathbb{E}} 
        \Big[ U_1^2 
        \Big] \\
    &=\sum_{(s, a) \in \hat{\mathcal{D}}} P_\mathcal{D}(s, a) \sigma_{\hat{s}'}^2 \left( Q_\lambda^{k+1} (s, a) \right) + \sum_{(s, a) \in \mathcal{S}\times\mathcal{A} \setminus \hat{\mathcal{D}} } P_\mathcal{D}(s, a) \sigma_{\hat{s}'}^2 \left( Q_\lambda^{k+1} (s, a) \right) + \underset{(s, a) \sim \mathcal{D}}{\mathbb{E}} 
        \Big[ U_1^2 
        \Big] 
\end{align}
Substituting \eqref{eqn_variance_u} into the previous equation yields
\begin{align}
    &\underset{\substack{\hat{s}' \sim P_\mathcal{D} (\hat{s}'| s, a)}}{\mathbb{E}} \left[ \mathcal{E}_{\mathcal{D}} (Q_\lambda^{k+1}) \right]- \mathcal{E}_{D}(Q^{k+1}) \nonumber \\
    &=\sum_{(s, a) \in \hat{\mathcal{D}}} P_\mathcal{D}(s, a) \left( \frac{(1-\lambda) P_{\hat{\mathcal{D}}}(s, a)}{(1-\lambda) P_{\hat{\mathcal{D}}}(s, a) + \lambda P_{\mathcal{D}'}(s, a) } \right)^2 \frac{\sigma_{\hat{s}'}^2 \left(\hat{\mathcal{B}}_{\hat{s}'} Q^k(s, a) \right) } {N(s, a)} + \underset{(s, a) \sim \mathcal{D}}{\mathbb{E}} 
        \Big[ U_1^2 
        \Big] 
\end{align}
It then follows from the law of total probability that
\begin{align}
    &\underset{\substack{\hat{s}' \sim P_\mathcal{D} (\hat{s}'| s, a)}}{\mathbb{E}} \left[ \mathcal{E}_{\mathcal{D}} (Q_\lambda^{k+1}) \right]- \mathcal{E}_{D}(Q^{k+1}) \nonumber \\
    &= \sum_{(s, a) \in \hat{\mathcal{D}}} \left( P_\mathcal{D}(s, a \mid (s, a) \in \hat{\mathcal{D}}) \cdot P((s, a) \in \hat{\mathcal{D}}) +  P_\mathcal{D}(s, a \mid (s, a) \in  \mathcal{S}\times\mathcal{A} \setminus \hat{\mathcal{D}} ) \cdot P((s, a) \in  \mathcal{S}\times\mathcal{A} \setminus \hat{\mathcal{D}} ) \right) \nonumber \\
    &\quad\quad\quad\quad \cdot \left( \frac{(1-\lambda) P_{\hat{\mathcal{D}}}(s, a)}{(1-\lambda) P_{\hat{\mathcal{D}}}(s, a) + \lambda P_{\mathcal{D}'}(s, a) } \right)^2  \! \frac{\sigma_{\hat{s}'}^2 \left(\hat{\mathcal{B}}_{\hat{s}'} Q^k(s, a) \right) } {N(s, a)} + \underset{(s, a) \sim \mathcal{D}}{\mathbb{E}} 
        \Big[ U_1^2 
        \Big] \\
    &= \sum_{(s, a) \in \hat{\mathcal{D}}}  P_\mathcal{D}(s, a \mid (s, a) \in \hat{\mathcal{D}}) \cdot P((s, a) \in \hat{\mathcal{D}})  \cdot \left( \frac{(1-\lambda) P_{\hat{\mathcal{D}}}(s, a)}{(1-\lambda) P_{\hat{\mathcal{D}}}(s, a) + \lambda P_{\mathcal{D}'}(s, a) } \right)^2  \! \frac{\sigma_{\hat{s}'}^2 \left(\hat{\mathcal{B}}_{\hat{s}'} Q^k(s, a) \right) } {N(s, a)} \nonumber \\
    &\quad\quad\quad\quad + \underset{(s, a) \sim \mathcal{D}}{\mathbb{E}} 
        \Big[ U_1^2 
        \Big]
\end{align}
where the last equation holds by the fact that $P_\mathcal{D}(s, a \mid (s, a) \in  \mathcal{S}\times\mathcal{A} \setminus \hat{\mathcal{D}} )  = 0$ as all $(s, a) \in \hat{\mathcal{D}}$ now.
It follows by the definition of conditional expectation and $P((s, a) \in \hat{\mathcal{D}}) \leq 1$ that
\begin{align}
    &\underset{\substack{\hat{s}' \sim P_\mathcal{D} (\hat{s}'| s, a)}}{\mathbb{E}} \left[ \mathcal{E}_{\mathcal{D}} (Q_\lambda^{k+1}) \right]- \mathcal{E}_{D}(Q^{k+1}) \nonumber \\
    &\leq \underset{(s, a) \sim \mathcal{D} \mid \hat{\mathcal{D}}}{\mathbb{E}} 
        \Bigg[ 
        \left( \frac{(1-\lambda) P_{\hat{\mathcal{D}}}(s, a)}{(1-\lambda) P_{\hat{\mathcal{D}}}(s, a) + \lambda P_{\mathcal{D}'}(s, a) } \right)^2  \frac{\sigma_{\hat{s}'}^2 \left(\hat{\mathcal{B}}_{\hat{s}'} Q^k(s, a) \right) } {N(s, a)} \Bigg]  + \underset{(s, a) \sim \mathcal{D}}{\mathbb{E}} 
        \Big[ U_1^2 
        \Big]. \label{eqn_obj_withoutN_u1_u2_u1_done}
\end{align}

We now turn our attention to $U_1$. For any $(s, a) \in \mathcal{S}\times\mathcal{A}$, we obtain
\begin{align}
    U_1 &\stackrel{(a)}{=} \underset{\hat{s}' \sim P_\mathcal{D}(\hat{s}' | s, a)}{\mathbb{E}}  \left[ \frac{\frac{1-\lambda}{N} \sum_{j=1}^{N(s, a)} \hat{\mathcal{B}}_{\hat{s}_j'} Q^k(s, a) + \lambda P_{\mathcal{D}'}(s, a) \mathcal{B}_\mathcal{D'} Q^k(s, a) } {(1-\lambda) P_{\hat{\mathcal{D}}}(s, a) + \lambda P_{\mathcal{D}'}(s, a) } \right] - \mathcal{B}_{\mathcal{D}} Q^k(s, a)  \\
    &\stackrel{(b)}{=} \frac{\frac{1-\lambda}{N} N(s, a) \mathcal{B}_{\mathcal{D}} Q^k(s, a) + \lambda P_{\mathcal{D}'}(s, a) \mathcal{B}_\mathcal{D'} Q^k(s, a) } {(1-\lambda) P_{\hat{\mathcal{D}}}(s, a) + \lambda P_{\mathcal{D}'}(s, a) } - \mathcal{B}_{\mathcal{D}} Q^k(s, a) \\
    &\stackrel{(c)}{=} \frac{(1-\lambda) P_{\hat{\mathcal{D}}}(s, a) \mathcal{B}_{\mathcal{D}} Q^k(s, a) + \lambda P_{\mathcal{D}'}(s, a) \mathcal{B}_\mathcal{D'} Q^k(s, a) } {(1-\lambda) P_{\hat{\mathcal{D}}}(s, a) + \lambda P_{\mathcal{D}'}(s, a) } - \mathcal{B}_{\mathcal{D}} Q^k(s, a) \\
    &\stackrel{(d)}{=} \frac{  \lambda P_{\mathcal{D}'}(s, a) \left(\mathcal{B}_\mathcal{D'} Q^k(s, a)-\mathcal{B}_\mathcal{D} Q^k(s, a)\right) } {(1-\lambda) P_{\hat{\mathcal{D}}}(s, a) + \lambda P_{\mathcal{D}'}(s, a) }, \label{eqn_dynamic_gap_u1}
\end{align}
where $(a)$ follows from \eqref{eqn_proposition_two_solution_Q_lam} in Proposition~\ref{proposition_two_solution}, $(b)$ follows by the definition of $\mathcal{B}_\mathcal{D} Q^k(s, a)$ (see \eqref{eqn_q_bellman} or \eqref{eqn_ac_bellman}), $(c)$ follows by the definition $P_{\hat{\mathcal{D}}}(s, a) = N(s, a) / N$, and $(d)$ follows from combining and canceling terms.

Substituting the expression of $U_1$ in \eqref{eqn_dynamic_gap_u1} into \eqref{eqn_obj_withoutN_u1_u2_u1_done} results in
\begin{align}
    \underset{\substack{\hat{s}' \sim P_\mathcal{D} (\hat{s}'| s, a)}}{\mathbb{E}} \left[\mathcal{E}_{\mathcal{D}}(Q_\lambda^{k+1}) \right] - \mathcal{E}_{\mathcal{D}}(Q^{k+1}) &\leq \underset{(s, a) \sim \mathcal{D} \mid \hat{\mathcal{D}}}{\mathbb{E}} 
        \Bigg[ 
        \left( \frac{(1-\lambda) P_{\hat{\mathcal{D}}}(s, a)}{(1-\lambda) P_{\hat{\mathcal{D}}}(s, a) + \lambda P_{\mathcal{D}'}(s, a) } \right)^2   \frac{\sigma_{\hat{s}'}^2 \left(\hat{\mathcal{B}}_{\hat{s}'} Q^k(s, a) \right) } {N(s, a)} \Bigg] + \nonumber \\
        & \underset{(s, a) \sim \mathcal{D}}{\mathbb{E}} 
        \Bigg[ \left( \frac{\lambda P_{\mathcal{D}'} (s, a)  }{ (1-\lambda) P_{\hat{\mathcal{D}}} (s, a) + \lambda P_{\mathcal{D}'} (s, a) } \right)^2 \!\! \left( \mathcal{B}_\mathcal{D} Q^k (s, a) - \mathcal{B}_\mathcal{D'} Q^k (s, a) \right)^2
         \Bigg].
\end{align}
The previous inequality can be simplified as 
\begin{align}
    &\underset{\substack{\hat{s}' \sim P_\mathcal{D} (\hat{s}'| s, a)}}{\mathbb{E}} \left[\mathcal{E}_{\mathcal{D}}(Q_\lambda^{k+1}) \right] - \mathcal{E}_{\mathcal{D}}(Q^{k+1}) \nonumber \\
    &\leq \underset{(s, a) \sim \mathcal{D} \mid \hat{\mathcal{D}}}{\mathbb{E}} 
        \Bigg[ 
        \left( \frac{(1-\lambda) P_{\hat{\mathcal{D}}}(s, a)}{(1-\lambda) P_{\hat{\mathcal{D}}}(s, a) + \lambda P_{\mathcal{D}'}(s, a) } \right)^2   \frac{\sigma_{\hat{s}'}^2 \left(\hat{\mathcal{B}}_{\hat{s}'} Q^k(s, a) \right) } {N(s, a)} \Bigg] \nonumber \\
        &\quad\quad~ + \sum_{(s, a) \in \mathcal{S}\times\mathcal{A}} P_\mathcal{D}(s, a) \left( \frac{\lambda P_{\mathcal{D}'} (s, a)  }{ (1-\lambda) P_{\hat{\mathcal{D}}} (s, a) + \lambda P_{\mathcal{D}'} (s, a) } \right)^2 \!\! \left( \mathcal{B}_\mathcal{D} Q^k (s, a) - \mathcal{B}_\mathcal{D'} Q^k (s, a) \right)^2  \\
    &= \underset{(s, a) \sim \mathcal{D} \mid \hat{\mathcal{D}}}{\mathbb{E}} 
        \Bigg[ 
        \left( \frac{(1-\lambda) P_{\hat{\mathcal{D}}}(s, a)}{(1-\lambda) P_{\hat{\mathcal{D}}}(s, a) + \lambda P_{\mathcal{D}'}(s, a) } \right)^2   \frac{\sigma_{\hat{s}'}^2 \left(\hat{\mathcal{B}}_{\hat{s}'} Q^k(s, a) \right) } {N(s, a)} \Bigg] \nonumber \\
        &\quad\quad~ + \sum_{(s, a) \in \hat{\mathcal{D}} } P_\mathcal{D}(s, a) \left( \frac{\lambda P_{\mathcal{D}'} (s, a)  }{ (1-\lambda) P_{\hat{\mathcal{D}}} (s, a) + \lambda P_{\mathcal{D}'} (s, a) } \right)^2 \!\! \left( \mathcal{B}_\mathcal{D} Q^k (s, a) - \mathcal{B}_\mathcal{D'} Q^k (s, a) \right)^2 \nonumber \\
        &\quad\quad~ + \sum_{(s, a) \in \mathcal{S}\times\mathcal{A} \setminus \hat{\mathcal{D}} } P_\mathcal{D}(s, a) \left( \frac{\lambda P_{\mathcal{D}'} (s, a)  }{ (1-\lambda) P_{\hat{\mathcal{D}}} (s, a) + \lambda P_{\mathcal{D}'} (s, a) } \right)^2 \!\! \left( \mathcal{B}_\mathcal{D} Q^k (s, a) - \mathcal{B}_\mathcal{D'} Q^k (s, a) \right)^2 
\end{align}

It follows from the law of total probability that
\begin{align}
    &\underset{\substack{\hat{s}' \sim P_\mathcal{D} (\hat{s}'| s, a)}}{\mathbb{E}} \left[\mathcal{E}_{\mathcal{D}}(Q_\lambda^{k+1}) \right] - \mathcal{E}_{\mathcal{D}}(Q^{k+1}) \nonumber \\
    &\leq \underset{(s, a) \sim \mathcal{D} \mid \hat{\mathcal{D}}}{\mathbb{E}} 
        \Bigg[ 
        \left( \frac{(1-\lambda) P_{\hat{\mathcal{D}}}(s, a)}{(1-\lambda) P_{\hat{\mathcal{D}}}(s, a) + \lambda P_{\mathcal{D}'}(s, a) } \right)^2   \frac{\sigma_{\hat{s}'}^2 \left(\hat{\mathcal{B}}_{\hat{s}'} Q^k(s, a) \right) } {N(s, a)} \Bigg] \nonumber \\
        &\quad + \sum_{(s, a) \in \hat{\mathcal{D}} } \left( P_\mathcal{D}(s, a \mid (s, a) \in \hat{\mathcal{D}}) \cdot P((s, a) \in \hat{\mathcal{D}}) +  P_\mathcal{D}(s, a \mid (s, a) \in  \mathcal{S}\times\mathcal{A} \setminus \hat{\mathcal{D}} ) \cdot P((s, a) \in  \mathcal{S}\times\mathcal{A} \setminus \hat{\mathcal{D}} ) \right) \nonumber \\
        &\quad\quad\quad\quad\quad~ \left( \frac{\lambda P_{\mathcal{D}'} (s, a)  }{ (1-\lambda) P_{\hat{\mathcal{D}}} (s, a) + \lambda P_{\mathcal{D}'} (s, a) } \right)^2 \!\! \left( \mathcal{B}_\mathcal{D} Q^k (s, a) - \mathcal{B}_\mathcal{D'} Q^k (s, a) \right)^2 \label{eqn_u1_total_prob1} \\
        &\quad + \sum_{(s, a) \in \mathcal{S}\times\mathcal{A} \setminus \hat{\mathcal{D}} } \left( P_\mathcal{D}(s, a \mid (s, a) \in \hat{\mathcal{D}}) \cdot P((s, a) \in \hat{\mathcal{D}}) +  P_\mathcal{D}(s, a \mid (s, a) \in  \mathcal{S}\times\mathcal{A} \setminus \hat{\mathcal{D}} ) \cdot P((s, a) \in  \mathcal{S}\times\mathcal{A} \setminus \hat{\mathcal{D}} ) \right) \nonumber \\
        &\quad\quad\quad\quad\quad~  \left( \frac{\lambda P_{\mathcal{D}'} (s, a)  }{ (1-\lambda) P_{\hat{\mathcal{D}}} (s, a) + \lambda P_{\mathcal{D}'} (s, a) } \right)^2 \!\! \left( \mathcal{B}_\mathcal{D} Q^k (s, a) - \mathcal{B}_\mathcal{D'} Q^k (s, a) \right)^2 \label{eqn_u1_total_prob2}
\end{align}
Notice that  $P_\mathcal{D}(s, a \mid (s, a) \in  \mathcal{S}\times\mathcal{A} \setminus \hat{\mathcal{D}} )  = 0$ for all $(s, a) \in \hat{\mathcal{D}}$ in \eqref{eqn_u1_total_prob1} and $ P_\mathcal{D}(s, a \mid (s, a) \in \hat{\mathcal{D}})  = 0$ for all $(s, a) \in \mathcal{S}\times\mathcal{A} \setminus \hat{\mathcal{D}}$ in \eqref{eqn_u1_total_prob2}. Subsequently, the previous inequality reduces to
\begin{align}
    &\underset{\substack{\hat{s}' \sim P_\mathcal{D} (\hat{s}'| s, a)}}{\mathbb{E}} \left[\mathcal{E}_{\mathcal{D}}(Q_\lambda^{k+1}) \right] - \mathcal{E}_{\mathcal{D}}(Q^{k+1}) \nonumber \\
    &\leq \underset{(s, a) \sim \mathcal{D} \mid \hat{\mathcal{D}}}{\mathbb{E}} 
        \Bigg[ 
        \left( \frac{(1-\lambda) P_{\hat{\mathcal{D}}}(s, a)}{(1-\lambda) P_{\hat{\mathcal{D}}}(s, a) + \lambda P_{\mathcal{D}'}(s, a) } \right)^2   \frac{\sigma_{\hat{s}'}^2 \left(\hat{\mathcal{B}}_{\hat{s}'} Q^k(s, a) \right) } {N(s, a)} \Bigg] \nonumber \\
        &\quad + \sum_{(s, a) \in \hat{\mathcal{D}} } \left( P_\mathcal{D}(s, a \mid (s, a) \in \hat{\mathcal{D}}) \cdot P((s, a) \in \hat{\mathcal{D}}) \right)  \left( \frac{\lambda P_{\mathcal{D}'} (s, a)  }{ (1-\lambda) P_{\hat{\mathcal{D}}} (s, a) + \lambda P_{\mathcal{D}'} (s, a) } \right)^2 \!\! \left( \mathcal{B}_\mathcal{D} Q^k (s, a) - \mathcal{B}_\mathcal{D'} Q^k (s, a) \right)^2 \nonumber \\
        &\quad + \sum_{(s, a) \in \mathcal{S}\times\mathcal{A} \setminus \hat{\mathcal{D}} } \left(  P_\mathcal{D}(s, a \mid (s, a) \in  \mathcal{S}\times\mathcal{A} \setminus \hat{\mathcal{D}} ) \cdot P((s, a) \in  \mathcal{S}\times\mathcal{A} \setminus \hat{\mathcal{D}} ) \right)  \left( \frac{\lambda P_{\mathcal{D}'} (s, a)  }{ (1-\lambda) P_{\hat{\mathcal{D}}} (s, a) + \lambda P_{\mathcal{D}'} (s, a) } \right)^2 \nonumber \\
        &\quad\quad\quad\quad\quad\quad\quad~ \cdot \left( \mathcal{B}_\mathcal{D} Q^k (s, a) - \mathcal{B}_\mathcal{D'} Q^k (s, a) \right)^2. \label{eqn_before_p_small_than_1_e} 
\end{align}

Moreover, it holds that
\begin{align}
    &P((s, a) \in \hat{\mathcal{D}} ) \leq 1, \label{eqn_p_small_than_1} \\
    &P((s, a) \in  \mathcal{S}\times\mathcal{A} \setminus \hat{\mathcal{D}} ) =(1-P_\mathcal{D}(s, a))^N  \stackrel{(a)}{\leq} e^{-N P_\mathcal{D}(s, a)} \stackrel{(b)}{\leq} e^{-N C}, \label{eqn_p_small_than_e}
\end{align}
where $N$ denotes the size of the target dataset, $(a)$ follows from $\log(1-\alpha) \leq -\alpha, \, \forall \alpha \in (0, 1)$, and $(b)$ holds by the monotonicity of $e^{-x}$ and $C := \min_{ (s, a) \in \mathcal{S} \times \mathcal{A} } \, P_\mathcal{D}(s, a)$. Substituting \eqref{eqn_p_small_than_1} and \eqref{eqn_p_small_than_e} into \eqref{eqn_before_p_small_than_1_e} yields
\begin{align}
    &\underset{\substack{\hat{s}' \sim P_\mathcal{D} (\hat{s}'| s, a)}}{\mathbb{E}} \left[\mathcal{E}_{\mathcal{D}}(Q_\lambda^{k+1}) \right] - \mathcal{E}_{\mathcal{D}}(Q^{k+1}) \nonumber \\
    &\leq \underset{(s, a) \sim \mathcal{D} \mid \hat{\mathcal{D}}}{\mathbb{E}} 
        \Bigg[ 
        \left( \frac{(1-\lambda) P_{\hat{\mathcal{D}}}(s, a)}{(1-\lambda) P_{\hat{\mathcal{D}}}(s, a) + \lambda P_{\mathcal{D}'}(s, a) } \right)^2   \frac{\sigma_{\hat{s}'}^2 \left(\hat{\mathcal{B}}_{\hat{s}'} Q^k(s, a) \right) } {N(s, a)} \Bigg] \nonumber \\
        &\quad + \sum_{(s, a) \in \hat{\mathcal{D}} } \left( P_\mathcal{D}(s, a \mid (s, a) \in \hat{\mathcal{D}}) \right)  \left( \frac{\lambda P_{\mathcal{D}'} (s, a)  }{ (1-\lambda) P_{\hat{\mathcal{D}}} (s, a) + \lambda P_{\mathcal{D}'} (s, a) } \right)^2 \!\! \left( \mathcal{B}_\mathcal{D} Q^k (s, a) - \mathcal{B}_\mathcal{D'} Q^k (s, a) \right)^2 \nonumber \\
        &\quad + \sum_{(s, a) \in \mathcal{S}\times\mathcal{A} \setminus \hat{\mathcal{D}} }  P_\mathcal{D}(s, a \mid (s, a) \in  \mathcal{S}\times\mathcal{A} \setminus \hat{\mathcal{D}} ) \cdot e^{-N C} \cdot \left( \frac{\lambda P_{\mathcal{D}'} (s, a)  }{ (1-\lambda) P_{\hat{\mathcal{D}}} (s, a) + \lambda P_{\mathcal{D}'} (s, a) } \right)^2 \nonumber \\
        &\quad\quad\quad\quad\quad\quad\quad~ \cdot \left( \mathcal{B}_\mathcal{D} Q^k (s, a) - \mathcal{B}_\mathcal{D'} Q^k (s, a) \right)^2. 
\end{align}
It holds by the definition of conditional expectation and $P_{\hat{\mathcal{D}}} (s, a) = 0$ for any $(s, a) \in \mathcal{S}\times\mathcal{A} \setminus \hat{\mathcal{D}}$ that 
\begin{align}
    &\underset{\substack{\hat{s}' \sim P_\mathcal{D} (\hat{s}'| s, a)}}{\mathbb{E}} \left[\mathcal{E}_{\mathcal{D}}(Q_\lambda^{k+1}) \right] - \mathcal{E}_{\mathcal{D}}(Q^{k+1}) \nonumber \\
    &\leq \underset{(s, a) \sim \mathcal{D} \mid \hat{\mathcal{D}}}{\mathbb{E}} 
        \Bigg[ 
        \underbrace{ \left( \frac{(1-\lambda) P_{\hat{\mathcal{D}}}(s, a)}{(1-\lambda) P_{\hat{\mathcal{D}}}(s, a) + \lambda P_{\mathcal{D}'}(s, a) } \right)^2 }_{U_2}  \frac{\sigma_{\hat{s}'}^2 \left(\hat{\mathcal{B}}_{\hat{s}'} Q^k(s, a) \right) } {N(s, a)} \Bigg] \nonumber \\
        &\quad +\underset{(s, a) \sim \mathcal{D} \mid \hat{\mathcal{D}}}{\mathbb{E}} 
        \Bigg[  \underbrace{  \left( \frac{\lambda P_{\mathcal{D}'} (s, a)  }{ (1-\lambda) P_{\hat{\mathcal{D}}} (s, a) + \lambda P_{\mathcal{D}'} (s, a) } \right)^2 }_{U_3}   \left( \mathcal{B}_\mathcal{D} Q^k (s, a) - \mathcal{B}_\mathcal{D'} Q^k (s, a) \right)^2 \Bigg] \nonumber \\
        &\quad + \underset{(s, a) \sim \mathcal{D} \mid \mathcal{S}\times\mathcal{A} \setminus \hat{\mathcal{D}} }{\mathbb{E}} 
        \Bigg[  e^{-N C}  \cdot \left( \mathcal{B}_\mathcal{D} Q^k (s, a) - \mathcal{B}_\mathcal{D'} Q^k (s, a) \right)^2 \Bigg]. 
\end{align}

Dividing both the numerator and denominator by $P_{\hat{\mathcal{D}}} (s, a)$ in $U_2$ and by $P_{\mathcal{D}'} (s, a)$ in $U_3$ yields
\begin{align}
    &\underset{\substack{\hat{s}' \sim P_\mathcal{D} (\hat{s}'| s, a)}}{\mathbb{E}} \left[\mathcal{E}_{\mathcal{D}}(Q_\lambda^{k+1}) \right] - \mathcal{E}_{\mathcal{D}}(Q^{k+1}) \nonumber \\
    &\leq \underset{(s, a) \sim \mathcal{D} \mid \hat{\mathcal{D}}}{\mathbb{E}} 
        \Bigg[ 
        \underbrace{  \left( \frac{ 1-\lambda }{ 1-\lambda + \lambda P_{\mathcal{D}'}(s, a) / P_{\hat{\mathcal{D}}} (s, a) } \right)^2 }_{U_2}  \frac{\sigma_{\hat{s}'}^2 \left(\hat{\mathcal{B}}_{\hat{s}'} Q^k(s, a) \right) } {N(s, a)} \Bigg] \nonumber \\
        &\quad +\underset{(s, a) \sim \mathcal{D} \mid \hat{\mathcal{D}}}{\mathbb{E}} 
        \Bigg[  \underbrace{ \left( \frac{\lambda  }{ (1-\lambda) P_{\hat{\mathcal{D}}} (s, a) / P_{\mathcal{D}'} (s, a) + \lambda } \right)^2 }_{U_3}   \left( \mathcal{B}_\mathcal{D} Q^k (s, a) - \mathcal{B}_\mathcal{D'} Q^k (s, a) \right)^2 \Bigg] \nonumber \\
        &\quad + \underset{(s, a) \sim \mathcal{D} \mid \mathcal{S}\times\mathcal{A} \setminus \hat{\mathcal{D}} }{\mathbb{E}} 
        \Bigg[  e^{-N C}  \cdot \left( \mathcal{B}_\mathcal{D} Q^k (s, a) - \mathcal{B}_\mathcal{D'} Q^k (s, a) \right)^2 \Bigg].  \label{eqn_theorem1_final_bound1}
\end{align}

Assumption~\ref{assumption_SAprobability} implies that both $P_{\mathcal{D}'}(s, a) / P_{\hat{\mathcal{D}}} (s, a)$ and $P_{\hat{\mathcal{D}}} (s, a) / P_{\mathcal{D}'} (s, a)$ are bounded, i.e.,
% \orange{
% \begin{align}
%     \frac{\beta_l'}{\hat{\beta}_u} \leq \frac{P_{\mathcal{D}'} (s, a)}{P_{\hat{\mathcal{D}}} (s, a)} \leq \frac{\beta_u'}{\hat{\beta}_l}, \quad \frac{\hat{\beta}_l}{\beta_u'} \leq \frac{P_{\hat{\mathcal{D}}} (s, a)}{P_{\mathcal{D}'} (s, a)} \leq \frac{\hat{\beta}_u}{\beta_l'}.
% \end{align}
% }
%
\begin{align}
    \frac{1}{\beta_u} \leq \frac{P_{\mathcal{D}'} (s, a)}{P_{\hat{\mathcal{D}}} (s, a)} \leq \frac{1}{\beta_l}, ~ \beta_l \leq \frac{P_{\hat{\mathcal{D}}} (s, a)}{P_{\mathcal{D}'} (s, a)} \leq \beta_u, ~ \forall (s, a) \in \hat{\mathcal{D}}.
\end{align}
Substituting the previous expressions into \eqref{eqn_theorem1_final_bound1} yields
\begin{align}
    \underset{\substack{\hat{s}' \sim P_\mathcal{D} (\hat{s}'| s, a)}}{\mathbb{E}} \left[\mathcal{E}_{\mathcal{D}}(Q_\lambda^{k+1}) \right] - \mathcal{E}_{\mathcal{D}}(Q^{k+1}) 
    &\leq \underset{(s, a) \sim \mathcal{D} \mid \hat{\mathcal{D}}}{\mathbb{E}} 
        \Bigg[ 
        \left( \frac{ 1-\lambda }{ 1-\lambda + \lambda / \beta_u } \right)^2  \frac{\sigma_{\hat{s}'}^2 \left(\hat{\mathcal{B}}_{\hat{s}'} Q^k(s, a) \right) } {N(s, a)} \Bigg] \nonumber \\
        &\quad +\underset{(s, a) \sim \mathcal{D} \mid \hat{\mathcal{D}}}{\mathbb{E}} 
        \Bigg[   \left( \frac{\lambda  }{ (1-\lambda) \beta_l + \lambda } \right)^2    \left( \mathcal{B}_\mathcal{D} Q^k (s, a) - \mathcal{B}_\mathcal{D'} Q^k (s, a) \right)^2 \Bigg] \nonumber \\
        &\quad + \underset{(s, a) \sim \mathcal{D} \mid \mathcal{S}\times\mathcal{A} \setminus \hat{\mathcal{D}} }{\mathbb{E}} 
        \Bigg[  e^{-N C}  \cdot \left( \mathcal{B}_\mathcal{D} Q^k (s, a) - \mathcal{B}_\mathcal{D'} Q^k (s, a) \right)^2 \Bigg] \\
    &= \left( \frac{ 1-\lambda }{ 1-\lambda + \lambda / \beta_u } \right)^2 \cdot \underset{(s, a) \sim \mathcal{D} \mid \hat{\mathcal{D}}}{\mathbb{E}} 
        \Bigg[ \frac{\sigma_{\hat{s}'}^2 \left(\hat{\mathcal{B}}_{\hat{s}'} Q^k(s, a) \right) } {N(s, a)} \Bigg] \nonumber \\
        &\quad + \left( \frac{\lambda  }{ (1-\lambda) \beta_l + \lambda } \right)^2 \cdot \underset{(s, a) \sim \mathcal{D} \mid \hat{\mathcal{D}}}{\mathbb{E}} 
        \Big[ \left( \mathcal{B}_\mathcal{D} Q^k (s, a) - \mathcal{B}_\mathcal{D'} Q^k (s, a) \right)^2 \Big] \nonumber \\
        &\quad + e^{-N C} \cdot \underset{(s, a) \sim \mathcal{D} \mid \mathcal{S}\times\mathcal{A} \setminus \hat{\mathcal{D}} }{\mathbb{E}} 
        \Big[  \cdot \left( \mathcal{B}_\mathcal{D} Q^k (s, a) - \mathcal{B}_\mathcal{D'} Q^k (s, a) \right)^2 \Big]. \label{eqn_theorem1_final_bound1_replace_xi_varsigma}
\end{align}

Given any dataset $\hat{\mathcal{D}}_\text{tr}$ and its corresponding transition-excluded dataset $\hat{\mathcal{D}}$,
% Notice that for any $(s, a) \in \mathcal{S} \times \mathcal{A}$, 
it holds that 
\begin{align}
    &\underset{(s, a) \sim \mathcal{D} \mid \hat{\mathcal{D}}}{\mathbb{E}} 
        \Bigg[ \frac{\sigma_{\hat{s}'}^2 \left(\hat{\mathcal{B}}_{\hat{s}'} Q^k(s, a) \right) } {N(s, a)} \Bigg] 
    %     \leq \underset{ \substack{ (s, a) \in \hat{\mathcal{D}} }}{\max} 
    % \Bigg[ 
    %  \frac{\sigma_{\hat{s}'}^2 \left(\hat{\mathcal{B}}_{\hat{s}'} Q^k(s, a) \right) } {N(s, a)} \Bigg] 
     \leq  \underset{ \substack{ (s, a) \in \hat{\mathcal{D}} }}{\max} 
    \Bigg[ 
     \frac{\sigma_{\hat{s}'}^2 \left(\hat{\mathcal{B}}_{\hat{s}'} Q^k(s, a) \right) } {N(s, a)} \Bigg], \\
    &\underset{(s, a) \sim \mathcal{D} \mid \hat{\mathcal{D}}}{\mathbb{E}} 
        \Big[ \left( \mathcal{B}_\mathcal{D} Q^k (s, a) - \mathcal{B}_\mathcal{D'} Q^k (s, a) \right)^2 \Big] \leq \underset{(s, a) \in \hat{\mathcal{D}} }{\max} \left[ \left(\mathcal{B}_\mathcal{D} Q^k(s, a) - \mathcal{B}_\mathcal{D'} Q^k(s, a) \right)^2 \right] \\
    &\quad\quad\quad\quad\quad\quad\quad\quad\quad\quad\quad\quad\quad\quad\quad\quad\quad~~ \leq \underset{(s, a) \in \mathcal{S} \times \mathcal{A}}{\max} \left[ \left(\mathcal{B}_\mathcal{D} Q^k(s, a) - \mathcal{B}_\mathcal{D'} Q^k(s, a) \right)^2 \right], \\
    &\underset{(s, a) \sim \mathcal{D} \mid \mathcal{S}\times\mathcal{A} \setminus \hat{\mathcal{D}} }{\mathbb{E}} 
        \Big[ \left( \mathcal{B}_\mathcal{D} Q^k (s, a) - \mathcal{B}_\mathcal{D'} Q^k (s, a) \right)^2 \Big] \leq \underset{(s, a) \in \mathcal{S}\times\mathcal{A} \setminus \hat{\mathcal{D}} }{\max} \left[ \left(\mathcal{B}_\mathcal{D} Q^k(s, a) - \mathcal{B}_\mathcal{D'} Q^k(s, a) \right)^2 \right] \\
    &\quad\quad\quad\quad\quad\quad\quad\quad\quad\quad\quad\quad\quad\quad\quad\quad\quad\quad\quad\quad \leq \underset{(s, a) \in \mathcal{S} \times \mathcal{A}}{\max} \left[ \left(\mathcal{B}_\mathcal{D} Q^k(s, a) - \mathcal{B}_\mathcal{D'} Q^k(s, a) \right)^2 \right].
\end{align}

Then by the definitions of $\xi$ in \eqref{eqn_discrepancy} and $\varsigma$ in \eqref{eqn_normalized_variance}, \eqref{eqn_theorem1_final_bound1_replace_xi_varsigma} is equivalent to 
\begin{align}
    \underset{\substack{\hat{s}' \sim P_\mathcal{D} (\hat{s}'| s, a)}}{\mathbb{E}} \left[\mathcal{E}_{\mathcal{D}}(Q_\lambda^{k+1}) \right] - \mathcal{E}_{\mathcal{D}}(Q^{k+1}) 
    &\leq \left( \frac{ 1-\lambda }{ 1-\lambda + \lambda / \beta_u } \right)^2 \varsigma + \left( \frac{\lambda  }{ (1-\lambda) \beta_l + \lambda } \right)^2 \xi + e^{-N C} \xi, \\
    &\leq \left( \frac{ 1-\lambda }{ 1-\lambda + \lambda / \beta_u } \right)^2 \varsigma + \left( \left( \frac{\lambda  }{ (1-\lambda) \beta_l + \lambda } \right)^2 + e^{-N C} \right) \xi.
\end{align}

This completes the proof of Theorem~\ref{theorem_bound_expectation}.
\end{proof}

%%%%%%%%%%%%%%%%%%%%%%%%%%%%%%%%%%%%%%%%%%%%%%%
%%%%%%%%%%%%%%%%%%%%%%%%%%%%%%%%%%%%%%%%%%%%%%%
%%%%%%%%%%%%%%%%%%%%%%%%%%%%%%%%%%%%%%%%%%%%%%%
%%%%%%%%%%%%%%%%%%%%%%%%%%%%%%%%%%%%%%%%%%%%%%%
%%%%%%%%%%%%%%%%%%%%%%%%%%%%%%%%%%%%%%%%%%%%%%%
%%%%%%%%%%%%%%%%%%%%%%%%%%%%%%%%%%%%%%%%%%%%%%%

\subsection{Proof of Theorem~\ref{theorem_bound_expectation_tighter}}
\label{append_prf_theorem_bound_expectation_tighter}

\textbf{Theorem~\ref{theorem_bound_expectation_tighter}} (Tighter Expected Performance Bound).
Let the conditions of Theorem~\ref{theorem_bound_expectation} hold. Suppose that for any \( (s, a) \in \mathcal{S} \times \mathcal{A}\), \( P_\mathcal{D} (s, a) \), \( P_{\mathcal{D}'} (s, a) \) and $P_{\hat{\mathcal{D}}} (s, a)$ are positive and there exist constants \( \beta_u \geq \beta_l >0 \) such that $P_{\hat{\mathcal{D}}} (s, a) / P_\mathcal{D'} (s, a)  \in [\beta_l, \beta_u], \, \forall (s, a) \in \mathcal{S} \times \mathcal{A}$.
Given any dataset $\hat{\mathcal{D}}_\text{tr}$, it holds at each iteration \((k=0, 1, 2, \cdots)\) that 
% % %
%
\begin{align}
    \underset{\substack{\hat{s}' \sim P_\mathcal{D} (\hat{s}'| s, a)}}{\mathbb{E}} \left[\mathcal{E}_{\mathcal{D}}(Q_\lambda^{k+1}) \right] - \mathcal{E}_{\mathcal{D}}(Q^{k+1})
    \leq \left( \frac{ 1-\lambda }{ 1-\lambda + \lambda / \beta_u } \right)^2 \varsigma + \left( \frac{\lambda  }{ (1-\lambda) \beta_l + \lambda } \right)^2 \xi.
\end{align}
\begin{proof}
    Analogous to the proof of Proposition~\ref{proposition_two_solution}, it holds with  the stricter version of Assumption~\ref{assumption_SAprobability} that
    \begin{align}
    &Q_\lambda^{k+1}(s, a) = \frac{\frac{1-\lambda}{N} \sum_{j=1}^{N(s, a)} \hat{\mathcal{B}}_{\hat{s}_j'} Q^k(s, a) + \lambda P_{\mathcal{D}'}(s, a) \mathcal{B}_\mathcal{D'} Q^k(s, a) } {(1-\lambda) P_{\hat{\mathcal{D}}}(s, a) + \lambda P_{\mathcal{D}'}(s, a) }, \, \forall (s, a) \in \mathcal{S}\times\mathcal{A}.
\end{align}
It follows from \eqref{eqn_obj_withoutN_u1_u2} in the proof of Theorem~\ref{theorem_bound_expectation} that 
\begin{align}
    &\underset{\substack{\hat{s}' \sim P_\mathcal{D} (\hat{s}'| s, a)}}{\mathbb{E}} \left[ \mathcal{E}_{\mathcal{D}} (Q_\lambda^{k+1}) \right]- \mathcal{E}_{D}(Q^{k+1}) \nonumber \\ 
    &= \underset{(s, a) \sim \mathcal{D}}{\mathbb{E}} 
        \Bigg[ 
            \sigma_{\hat{s}'}^2 \left( Q_\lambda^{k+1} (s, a) \right)  +  \left( \underbrace{ \underset{ \hat{s}' \sim P_\mathcal{D} (\hat{s}' | s, a)}{\mathbb{E}} 
            \Big[ 
                Q_\lambda^{k+1} (s, a) 
            \Big]
            - \mathcal{B}_\mathcal{D} Q^k(s, a) }_{U_1}\right)^2 
        \Bigg],
\end{align}
where
\begin{align}
    &\sigma_{\hat{s}'}^2 \left( Q_\lambda^{k+1} (s, a) \right)
    = \left( \frac{(1-\lambda) P_{\hat{\mathcal{D}}}(s, a)}{(1-\lambda) P_{\hat{\mathcal{D}}}(s, a) + \lambda P_{\mathcal{D}'}(s, a) } \right)^2 \frac{\sigma_{\hat{s}'}^2 \left(\hat{\mathcal{B}}_{\hat{s}'} Q^k(s, a) \right) } {N(s, a)}, \, \forall (s, a) \in \mathcal{S}\times\mathcal{A}, \\
    &U_1 = \frac{  \lambda P_{\mathcal{D}'}(s, a) \left(\mathcal{B}_\mathcal{D'} Q^k(s, a)-\mathcal{B}_\mathcal{D} Q^k(s, a)\right) } {(1-\lambda) P_{\hat{\mathcal{D}}}(s, a) + \lambda P_{\mathcal{D}'}(s, a) }, \, \forall (s, a) \in \mathcal{S}\times\mathcal{A}.
\end{align}

It then holds for any $(s, a) \in \mathcal{S}\times\mathcal{A}$ that
\begin{align}
    \underset{\substack{\hat{s}' \sim P_\mathcal{D} (\hat{s}'| s, a)}}{\mathbb{E}} \left[\mathcal{E}_{\mathcal{D}}(Q_\lambda^{k+1}) \right] - \mathcal{E}_{\mathcal{D}}(Q^{k+1}) &= \underset{(s, a) \sim \mathcal{D} }{\mathbb{E}} 
        \Bigg[ 
        \left( \frac{(1-\lambda) P_{\hat{\mathcal{D}}}(s, a)}{(1-\lambda) P_{\hat{\mathcal{D}}}(s, a) + \lambda P_{\mathcal{D}'}(s, a) } \right)^2   \frac{\sigma_{\hat{s}'}^2 \left(\hat{\mathcal{B}}_{\hat{s}'} Q^k(s, a) \right) } {N(s, a)} \Bigg] + \nonumber \\
        & \underset{(s, a) \sim \mathcal{D}}{\mathbb{E}} 
        \Bigg[ \left( \frac{\lambda P_{\mathcal{D}'} (s, a)  }{ (1-\lambda) P_{\hat{\mathcal{D}}} (s, a) + \lambda P_{\mathcal{D}'} (s, a) } \right)^2 \!\! \left( \mathcal{B}_\mathcal{D} Q^k (s, a) - \mathcal{B}_\mathcal{D'} Q^k (s, a) \right)^2
         \Bigg].
\end{align}

Since $P_{\hat{\mathcal{D}}} (s, a) > 0$ and $P_{\mathcal{D}'} (s, a) > 0$ for any $(s, a) \in \mathcal{S}\times\mathcal{A}$, dividing both the numerator and denominator by $P_{\hat{\mathcal{D}}} (s, a)$ in the first term above and by $P_{\mathcal{D}'} (s, a)$ in the second term above yields
\begin{align}
    &\underset{\substack{\hat{s}' \sim P_\mathcal{D} (\hat{s}'| s, a)}}{\mathbb{E}} \left[\mathcal{E}_{\mathcal{D}}(Q_\lambda^{k+1}) \right] - \mathcal{E}_{\mathcal{D}}(Q^{k+1}) \nonumber \\
    &= \underset{(s, a) \sim \mathcal{D} }{\mathbb{E}} 
        \Bigg[ 
        \left( \frac{ 1-\lambda }{ 1-\lambda + \lambda P_{\mathcal{D}'}(s, a) / P_{\hat{\mathcal{D}}} (s, a) } \right)^2  \frac{\sigma_{\hat{s}'}^2 \left(\hat{\mathcal{B}}_{\hat{s}'} Q^k(s, a) \right) } {N(s, a)} \Bigg] \nonumber \\
        &\quad +\underset{(s, a) \sim \mathcal{D} }{\mathbb{E}} 
        \Bigg[  \left( \frac{\lambda  }{ (1-\lambda) P_{\hat{\mathcal{D}}} (s, a) / P_{\mathcal{D}'} (s, a) + \lambda } \right)^2   \left( \mathcal{B}_\mathcal{D} Q^k (s, a) - \mathcal{B}_\mathcal{D'} Q^k (s, a) \right)^2 \Bigg].
\end{align}

Theorem~\ref{theorem_bound_expectation_tighter} assumes that both $P_{\mathcal{D}'}(s, a) / P_{\hat{\mathcal{D}}} (s, a)$ and $P_{\hat{\mathcal{D}}} (s, a) / P_{\mathcal{D}'} (s, a)$ are bounded, i.e.,
\begin{align}
    \frac{1}{\beta_u} \leq \frac{P_{\mathcal{D}'} (s, a)}{P_{\hat{\mathcal{D}}} (s, a)} \leq \frac{1}{\beta_l}, ~ \beta_l \leq \frac{P_{\hat{\mathcal{D}}} (s, a)}{P_{\mathcal{D}'} (s, a)} \leq \beta_u,  ~ \forall (s, a) \in \mathcal{S}\times\mathcal{A}.
\end{align}

It then holds that
\begin{align}
    &\underset{\substack{\hat{s}' \sim P_\mathcal{D} (\hat{s}'| s, a)}}{\mathbb{E}} \left[\mathcal{E}_{\mathcal{D}}(Q_\lambda^{k+1}) \right] - \mathcal{E}_{\mathcal{D}}(Q^{k+1}) \nonumber \\ 
    &\leq \underset{(s, a) \sim \mathcal{D} }{\mathbb{E}} 
        \Bigg[ 
        \left( \frac{ 1-\lambda }{ 1-\lambda + \lambda / \beta_u } \right)^2  \frac{\sigma_{\hat{s}'}^2 \left(\hat{\mathcal{B}}_{\hat{s}'} Q^k(s, a) \right) } {N(s, a)} \Bigg] +\underset{(s, a) \sim \mathcal{D} }{\mathbb{E}} 
        \Bigg[   \left( \frac{\lambda  }{ (1-\lambda) \beta_l + \lambda } \right)^2    \left( \mathcal{B}_\mathcal{D} Q^k (s, a) - \mathcal{B}_\mathcal{D'} Q^k (s, a) \right)^2 \Bigg] \nonumber \\
    &= \left( \frac{ 1-\lambda }{ 1-\lambda + \lambda / \beta_u } \right)^2 \cdot \underset{(s, a) \sim \mathcal{D} }{\mathbb{E}} 
        \Bigg[ \frac{\sigma_{\hat{s}'}^2 \left(\hat{\mathcal{B}}_{\hat{s}'} Q^k(s, a) \right) } {N(s, a)} \Bigg]  + \left( \frac{\lambda  }{ (1-\lambda) \beta_l + \lambda } \right)^2 \cdot \underset{(s, a) \sim \mathcal{D} }{\mathbb{E}} 
        \Big[ \left( \mathcal{B}_\mathcal{D} Q^k (s, a) - \mathcal{B}_\mathcal{D'} Q^k (s, a) \right)^2 \Big]. \label{eqn_theorem1_final_bound1_replace_xi_varsigma_tighter}
\end{align}

Given any dataset $\hat{\mathcal{D}}_\text{tr}$ and its corresponding transition-excluded dataset $\hat{\mathcal{D}}$, it holds that 
\begin{align}
    &\underset{(s, a) \sim \mathcal{D} }{\mathbb{E}} 
        \Bigg[ \frac{\sigma_{\hat{s}'}^2 \left(\hat{\mathcal{B}}_{\hat{s}'} Q^k(s, a) \right) } {N(s, a)} \Bigg] 
    %     \leq \underset{ \substack{ (s, a) \in \hat{\mathcal{D}} }}{\max} 
    % \Bigg[ 
    %  \frac{\sigma_{\hat{s}'}^2 \left(\hat{\mathcal{B}}_{\hat{s}'} Q^k(s, a) \right) } {N(s, a)} \Bigg] 
     \leq  \underset{ \substack{ (s, a) \in \hat{\mathcal{D}} }}{\max} 
    \Bigg[ 
     \frac{\sigma_{\hat{s}'}^2 \left(\hat{\mathcal{B}}_{\hat{s}'} Q^k(s, a) \right) } {N(s, a)} \Bigg], \\
    &\underset{(s, a) \sim \mathcal{D} }{\mathbb{E}} 
        \Big[ \left( \mathcal{B}_\mathcal{D} Q^k (s, a) - \mathcal{B}_\mathcal{D'} Q^k (s, a) \right)^2 \Big] 
        \leq \underset{(s, a) \in \mathcal{S} \times \mathcal{A}}{\max} \left[ \left(\mathcal{B}_\mathcal{D} Q^k(s, a) - \mathcal{B}_\mathcal{D'} Q^k(s, a) \right)^2 \right].
\end{align}

Then by the definitions of $\xi$ in \eqref{eqn_discrepancy} and $\varsigma$ in \eqref{eqn_normalized_variance}, \eqref{eqn_theorem1_final_bound1_replace_xi_varsigma_tighter} is equivalent to 
\begin{align}
    \underset{\substack{\hat{s}' \sim P_\mathcal{D} (\hat{s}'| s, a)}}{\mathbb{E}} \left[\mathcal{E}_{\mathcal{D}}(Q_\lambda^{k+1}) \right] - \mathcal{E}_{\mathcal{D}}(Q^{k+1}) 
    &\leq \left( \frac{ 1-\lambda }{ 1-\lambda + \lambda / \beta_u } \right)^2 \varsigma + \left( \frac{\lambda  }{ (1-\lambda) \beta_l + \lambda } \right)^2 \xi.
\end{align}

This completes the proof of Theorem~\ref{theorem_bound_expectation_tighter}.

\end{proof}

%%%%%%%%%%%%%%%%%%%%%%%%%%%%%%%%%%%%%%%%%%%%%%%
%%%%%%%%%%%%%%%%%%%%%%%%%%%%%%%%%%%%%%%%%%%%%%%
%%%%%%%%%%%%%%%%%%%%%%%%%%%%%%%%%%%%%%%%%%%%%%%
%%%%%%%%%%%%%%%%%%%%%%%%%%%%%%%%%%%%%%%%%%%%%%%
%%%%%%%%%%%%%%%%%%%%%%%%%%%%%%%%%%%%%%%%%%%%%%%
%%%%%%%%%%%%%%%%%%%%%%%%%%%%%%%%%%%%%%%%%%%%%%%

\subsection{Proof of Corollary~\ref{corollary_optimal_lam}} 
\label{append_prf_corollary1}

\textbf{Corollary~\ref{corollary_optimal_lam}}.
    Under the assumptions of Theorem~\ref{theorem_bound_expectation} or Theorem~\ref{theorem_bound_expectation_tighter}, the optimal weight $\lambda^*$ that minimizes the bounds in \eqref{eqn_bound_expectation} or \eqref{eqn_bound_expectation_tighter} respectively is $\lambda^*=0$ when $\varsigma = 0$ and $\lambda^*=1$ when $\xi=0$.
\begin{proof}
Notice that Theorems~\ref{theorem_bound_expectation} and \ref{theorem_bound_expectation_tighter} yield the same $\lambda^*$, as the extra term $e^{-N C} \xi$ in \eqref{eqn_bound_expectation} is independent of $\lambda$. Hence, we, herein, present the proof for \eqref{eqn_bound_expectation}, noting that the proof with respect to \eqref{eqn_bound_expectation_tighter} follows identically.

We start the proof by considering $\varsigma=0$. In this case, the bound in \eqref{eqn_bound_expectation} reduces to
\begin{align}
    \left( \frac{\lambda  }{ (1-\lambda) \beta_l + \lambda } \right)^2 \xi + e^{-N C} \xi,
\end{align}
where $e^{-N C} \xi$ is a constant. Since the minimum of the squared term is $0$, achieved by $\lambda=0$, it holds that letting $\lambda = 0$ minimizes the bound
$\left( \frac{\lambda  }{ (1-\lambda) \beta_l + \lambda } \right)^2 \xi$, thus minimizing the bound in \eqref{eqn_bound_expectation}.

We now turn our attention to the scenario of $\xi = 0$. Likewise, the bound in \eqref{eqn_bound_expectation} reduces to 
\begin{align}
    \left( \frac{ 1-\lambda }{ 1-\lambda + \lambda / \beta_u } \right)^2 \varsigma.
\end{align}
Therefore, setting $\lambda=1$ minimizes the bound above as it makes the squared term equal to $0$, thus minimizing the bound in \eqref{eqn_bound_expectation}.
These complete the proof of Corollary~\ref{corollary_optimal_lam}.

\end{proof}

\subsection{Proof of Corollary~\ref{corollary_optimal_lam_same_beta}} 
\label{append_prf_corollary2}

\textbf{Corollary~\ref{corollary_optimal_lam_same_beta}.}
    Under the assumptions of Theorem~\ref{theorem_bound_expectation} or Theorem~\ref{theorem_bound_expectation_tighter}, if $\beta_l=\beta_u=\beta > 0$, the optimal weight $\lambda^*$ that minimizes the bound in \eqref{eqn_bound_expectation} or \eqref{eqn_bound_expectation_tighter} respectively takes the form
    \begin{align}
        \lambda^* = \frac{ \beta \varsigma }{\beta \varsigma + \xi}.
    \end{align}
\begin{proof}
Notice that Theorems~\ref{theorem_bound_expectation} and \ref{theorem_bound_expectation_tighter} yield the same $\lambda^*$, as the extra term $e^{-N C} \xi$ in \eqref{eqn_bound_expectation} is independent of $\lambda$. Hence, we, herein, present the proof for \eqref{eqn_bound_expectation}, noting that the proof with respect to \eqref{eqn_bound_expectation_tighter} follows identically.

    Note when $\beta_l = \beta_u = \beta$ that the right hand side of \eqref{eqn_bound_expectation} reduces to
\begin{align}
    \left( \frac{ 1-\lambda }{ 1-\lambda + \lambda / \beta } \right)^2
    \varsigma + \left( \frac{\lambda  }{ (1-\lambda) \beta + \lambda } \right)^2  \xi +  e^{-N C} \xi,
\end{align}
i.e., 
\begin{align}
    \left( \frac{ (1-\lambda) \beta }{ (1-\lambda) \beta + \lambda } \right)^2
    \varsigma + \left( \frac{\lambda  }{ (1-\lambda) \beta + \lambda } \right)^2  \xi +  e^{-N C} \xi.
\end{align}
Taking the derivative of the previous bound with respect to $\lambda$ yields
\begin{align}
    \text{Derivative} &= 2\varsigma \left( \frac{ (1-\lambda) \beta }{ (1-\lambda) \beta + \lambda } \right) \frac{-\beta \left( (1-\lambda) \beta + \lambda \right) - (1-\lambda)\beta (1-\beta)}{\left( (1-\lambda) \beta + \lambda \right)^2} \nonumber \\
    &\quad + 2\xi \left( \frac{\lambda  }{ (1-\lambda) \beta + \lambda } \right) \frac{(1-\lambda) \beta + \lambda - \lambda (1-\beta)}{\left( (1-\lambda) \beta + \lambda \right)^2}.
\end{align}
By combining and canceling terms the previous derivative reduces to
\begin{align}
    \text{Derivative} &= 2\varsigma  \frac{ (\lambda-1) \beta^2 }{\left( (1-\lambda) \beta + \lambda \right)^3}  + 2\xi  \frac{\lambda \beta }{\left( (1-\lambda) \beta + \lambda \right)^3}.
\end{align}
Notice that in the previous equation $(1-\lambda) \beta + \lambda > 0$ due to that $\beta >0$ and $\lambda \in [0, 1]$. We further solve the optimal weight $\lambda^*$ by letting the above derivative be zero, i.e.,
\begin{align}
    \lambda^* = \frac{ \beta \varsigma }{\beta \varsigma + \xi}.
\end{align}

This completes the proof of Corollary~\ref{corollary_optimal_lam_same_beta}.

\end{proof}

%%%%%%%%%%%%%%%%%%%%%%%%%%%%%%%%%%%%%%%%%%%%%%%
%%%%%%%%%%%%%%%%%%%%%%%%%%%%%%%%%%%%%%%%%%%%%%%
%%%%%%%%%%%%%%%%%%%%%%%%%%%%%%%%%%%%%%%%%%%%%%%
%%%%%%%%%%%%%%%%%%%%%%%%%%%%%%%%%%%%%%%%%%%%%%%
%%%%%%%%%%%%%%%%%%%%%%%%%%%%%%%%%%%%%%%%%%%%%%%
%%%%%%%%%%%%%%%%%%%%%%%%%%%%%%%%%%%%%%%%%%%%%%%

\subsection{Proof of Theorem~\ref{theorem_bound}} 
\label{append_prf_theorem2}

\textbf{Theorem~\ref{theorem_bound}} (Worst-Case Performance Bound).
Denote by $\beta_u'$ the upper bound of $P_{\mathcal{D}} (s, a) / P_{\mathcal{D'}} (s, a)$ for any $(s, a) \in \mathcal{S} \times \mathcal{A}$. 
Let the conditions of Theorem \ref{theorem_bound_expectation} and Assumption~\ref{assumption_bound_reward} hold. Given any dataset $\hat{\mathcal{D}}_\text{tr}$, the following bound holds at each iteration \((k=0, 1, 2, \cdots)\) with probability at least $1-\delta$ 
%
% %
% \begin{align}
%     &\mathcal{E}_{\mathcal{D}}(Q_\lambda^{k+1}) - \mathcal{E}_{\mathcal{D}}(Q^{k+1}) \nonumber \\
%     &\leq \left( \frac{ 1-\lambda }{ 1-\lambda + \lambda / \beta_u } \right)^2
%     \varsigma + \left( \frac{\lambda  }{ (1-\lambda) \beta_l + \lambda } \right)^2  \xi \nonumber \\
%     &\quad+ \sqrt{\frac{1}{2} \log(\frac{1}{\delta}) } \frac{|\mathcal{S}| |\mathcal{A}|}{\sqrt{N}}  \left( \frac{8 \gamma B^2}{(1-\gamma)^2} \frac{\beta_u (1-\lambda)^2}{(1-\lambda) \beta_l^2 + \lambda \beta_l} + \frac{4 \gamma B}{1-\gamma} \frac{\lambda (1-\lambda)\sqrt{\xi}}{(1-\lambda) \beta_l^2 + \lambda \beta_l} \right).
% \end{align}
% % %
\begin{align}
    &\mathcal{E}_{\mathcal{D}}(Q_\lambda^{k+1}) - \mathcal{E}_{\mathcal{D}}(Q^{k+1}) \nonumber \\
    &\leq  \left( \frac{ 1-\lambda }{ 1-\lambda + \lambda / \beta_u } \right)^2
    \varsigma + \left( \frac{\lambda  }{ (1-\lambda) \, \beta_l + \lambda } \right)^2  \xi + e^{-N C} \xi \nonumber \\
    &\quad+  \sqrt{\frac{1}{2} \log\left(\frac{1}{\delta}\right) } \frac{|\mathcal{S}| |\mathcal{A}|}{\sqrt{N}} \left( \frac{ \beta_u'}{(1-\lambda) \, \beta_l + \lambda } \frac{2 (1-\lambda) \gamma B }{1-\gamma} \right) \cdot  \left(\frac{(1-\lambda) \beta_u  (4B/(1-\gamma)) + 2 \lambda \sqrt{\xi} }{(1-\lambda) \, \beta_l  + \lambda } \right).
\end{align}
% %
\begin{proof}
Note that
\begin{align}
    &\mathcal{E}_{\mathcal{D}}(Q_\lambda^{k+1}) - \mathcal{E}_{\mathcal{D}}(Q^{k+1}) \nonumber \\
    &=\mathcal{E}_{\mathcal{D}}(Q_\lambda^{k+1}) -\underset{\substack{  \hat{s}' \sim P_\mathcal{D} (\hat{s}'| s, a)}}{\mathbb{E}} \!\left[\mathcal{E}_{\mathcal{D}}(Q_\lambda^{k+1}) \right] + \underset{\substack{  \hat{s}' \sim P_\mathcal{D} (\hat{s}'| s, a)}}{\mathbb{E}}\! \left[\mathcal{E}_{\mathcal{D}}(Q_\lambda^{k+1}) \right] - \mathcal{E}_{\mathcal{D}}(Q^{k+1}) \\
    &\leq \mathcal{E}_{\mathcal{D}}(Q_\lambda^{k+1}) - \underset{\substack{  \hat{s}' \sim P_\mathcal{D} (\hat{s}'| s, a)}}{\mathbb{E}} \! \left[\mathcal{E}_{\mathcal{D}}(Q_\lambda^{k+1}) \right]  +  \left( \frac{ 1-\lambda }{ 1-\lambda + \lambda / \beta_u } \right)^2
    \varsigma + \left( \frac{\lambda  }{ (1-\lambda) \, \beta_l + \lambda } \right)^2  \xi + e^{-N C} \xi,
\end{align}
where the previous inequality follows from Theorem~\ref{theorem_bound_expectation}. Thus, we are left to bound $ \mathcal{E}_{\mathcal{D}}(Q_\lambda^{k+1}) -\underset{\substack{ \hat{s}' \sim P_\mathcal{D} (\hat{s}'| s, a)}}{\mathbb{E}} \left[\mathcal{E}_{\mathcal{D}}(Q_\lambda^{k+1}) \right] $. To proceed, we rely on the following technical lemma.

\begin{lemma}[\textit{McDiarmid Inequality}]\label{lemma_McDiarmid}
    Let $\tau_1, \cdots, \tau_n$ be independent random variables taking on values in a set $H$ and let $c_1, \cdots, c_n$ be positive real constants. If $\varphi: H^n \to \Real$ satisfies
    \begin{align}
        \sup_{\tau_1, \cdots, \tau_n, \tau_i' \in H} | \varphi(\tau_1, \cdots, \tau_i, \cdots, \tau_n) - \varphi(\tau_1, \cdots, \tau_i', \cdots, \tau_n) | \leq c_i,
    \end{align}
    for $1 \leq i\leq n$, then it holds that
    \begin{align}
        P\left( \varphi(\tau_1, \cdots, \tau_n) - \mathbb{E}\left[ \varphi(\tau_1, \cdots, \tau_n)  \right] \geq \epsilon \right) \leq \exp \left( \frac{-2\epsilon^2}{\sum_{i=1}^n c_i^2}\right).
    \end{align}
\end{lemma}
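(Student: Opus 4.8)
The plan is to prove the bounded-differences (McDiarmid) inequality by the \emph{method of bounded martingale differences}: I would combine a Doob martingale decomposition of $\varphi(\tau_1,\ldots,\tau_n)$ with Hoeffding's lemma and the exponential (Chernoff) moment method. The overall strategy reduces the concentration of a general function $\varphi$ to the concentration of a sum of conditionally bounded, zero-mean martingale increments.

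First I would introduce the Doob martingale. Set $V_0 = \mathbb{E}[\varphi(\tau_1,\ldots,\tau_n)]$ and, for $1\le i\le n$, define $V_i = \mathbb{E}[\varphi(\tau_1,\ldots,\tau_n)\mid \tau_1,\ldots,\tau_i]$, so that $V_n = \varphi(\tau_1,\ldots,\tau_n)$. Writing $D_i = V_i - V_{i-1}$ for the martingale increments, the telescoping identity gives $\varphi(\tau_1,\ldots,\tau_n) - \mathbb{E}[\varphi] = \sum_{i=1}^n D_i$, and by the tower property each $D_i$ has zero conditional mean given $\tau_1,\ldots,\tau_{i-1}$. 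The next and most delicate step is to show that, conditioned on $\tau_1,\ldots,\tau_{i-1}$, the increment $D_i$ ranges over an interval of width at most $c_i$. To do this I would introduce $g(x) = \mathbb{E}[\varphi\mid \tau_1,\ldots,\tau_{i-1},\tau_i=x]$ and its conditional infimum $L_i = \inf_x g(x) - V_{i-1}$ and supremum $U_i = \sup_x g(x) - V_{i-1}$; using independence of the $\tau_j$ (so that integrating out $\tau_{i+1},\ldots,\tau_n$ commutes with perturbing the $i$-th coordinate) together with the bounded-differences hypothesis $|\varphi(\ldots,\tau_i,\ldots) - \varphi(\ldots,\tau_i',\ldots)|\le c_i$, one obtains $U_i - L_i \le c_i$, so $D_i$ lies conditionally in an interval of length $c_i$.

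With this range control in hand, I would apply Hoeffding's lemma conditionally: for a zero-mean random variable supported on an interval of length $c_i$, its conditional moment generating function satisfies $\mathbb{E}[e^{sD_i}\mid \tau_1,\ldots,\tau_{i-1}] \le \exp(s^2 c_i^2/8)$ for every $s>0$. Peeling off the conditioning one coordinate at a time via the tower property then yields $\mathbb{E}\bigl[e^{s\sum_{i=1}^n D_i}\bigr] \le \exp\bigl(\tfrac{s^2}{8}\sum_{i=1}^n c_i^2\bigr)$. Finally, Markov's inequality applied to $e^{s(\varphi-\mathbb{E}[\varphi])}$ gives $P(\varphi-\mathbb{E}[\varphi]\ge\epsilon) \le \exp\bigl(-s\epsilon + \tfrac{s^2}{8}\sum_{i=1}^n c_i^2\bigr)$, and optimizing the exponent over $s>0$ with the choice $s = 4\epsilon/\sum_{i=1}^n c_i^2$ produces the claimed bound $\exp\bigl(-2\epsilon^2/\sum_{i=1}^n c_i^2\bigr)$.

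The main obstacle is the conditional-boundedness step: one must argue carefully that the randomness of $\tau_i$ alone, with the future coordinates $\tau_{i+1},\ldots,\tau_n$ integrated out and the past $\tau_1,\ldots,\tau_{i-1}$ held fixed, moves $V_i$ only within a window of width $c_i$. This hinges essentially on independence, which lets the integration over future coordinates be performed identically for two values $\tau_i=x$ and $\tau_i=x'$, after which the bounded-differences hypothesis controls the pointwise gap. Once that range estimate is established, Hoeffding's lemma and the Chernoff optimization are entirely routine.
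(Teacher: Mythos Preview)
Your proposal is correct and follows the standard Doob-martingale plus Hoeffding's-lemma proof of McDiarmid's inequality. Note, however, that the paper does not actually prove this lemma at all: it is stated as a known technical tool (``we rely on the following technical lemma'') and invoked without proof in the argument for Theorem~\ref{theorem_bound}. So your write-up supplies strictly more than the paper does; there is nothing to compare against on the paper's side beyond the bare statement.
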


% For any $(s, a)$, given two sets of empirical next states: $(s_1', \cdots, s_i', \cdots, s_{N(s, a)}')$ and $(\hat{s}_1', \cdots, \hat{s}_i', \cdots, \hat{s}_{N(s, a)}')$ such that $s_j' = \hat{s}_j'$ if and only if $j \neq i$.
To obtain the similar generalization bound akin to the above, we aim to compute the bound of $\left| \mathcal{E}_{\mathcal{D}}(Q_\lambda^{k+1}) - \mathcal{E}_{\mathcal{D}}(\hat{Q}_\lambda^{k+1}) \right|$, where $Q_\lambda^{k+1}$ and $\hat{Q}_\lambda^{k+1}$ (both see \eqref{eqn_proposition_two_solution_Q_lam}) differ in a single sample of $\hat{s}'$ only. More specifically, they take the sequences of random samples $(\hat{s}'_1, \cdots, \hat{s}'_i, \cdots, \hat{s}'_{N})$ and $(\hat{s}''_1, \cdots, \hat{s}''_i, \cdots, \hat{s}''_{N})$ respectively, where $\hat{s}'_j = \hat{s}''_j$ for any $j \in \{1, 2, \cdots, N\}$ and $j\neq i$. 
Note that
\begin{align}
    &\left| \mathcal{E}_{\mathcal{D}}(Q_\lambda^{k+1})  - \mathcal{E}_{\mathcal{D}}(\hat{Q}_\lambda^{k+1}) \right| \nonumber \\
    &\stackrel{(a)}{=} \left| \underset{(s, a, s') \sim \mathcal{D}}{\mathbb{E}} \left[ \left( Q_\lambda^{k+1}(s, a) - \hat{\mathcal{B}}_{s'} Q^k(s, a) \right)^2 - \left( \hat{Q}_\lambda^{k+1}(s, a) - \hat{\mathcal{B}}_{s'} Q^k(s, a) \right)^2 \right] \right| \\
    % &\stackrel{(b)}{=} \left| \underset{(s, a, s') \sim \mathcal{D}}{\mathbb{E}} \left[ (Q_\lambda^{k+1}(s, a))^2 - (\hat{Q}_\lambda^{k+1}(s, a))^2 - 2 Q_\lambda^{k+1}(s, a) \hat{\mathcal{B}} Q^k(s, a) + 2 \hat{Q}_\lambda^{k+1}(s, a) \hat{\mathcal{B}} Q^k(s, a) \right] \right| \\
    &\stackrel{(b)}{=} \left| \underset{(s, a, s') \sim \mathcal{D}}{\mathbb{E}} \left[ (Q_\lambda^{k+1}(s, a))^2 - (\hat{Q}_\lambda^{k+1}(s, a))^2 + 2 \hat{\mathcal{B}}_{s'} Q^k(s, a) \left(\hat{Q}_\lambda^{k+1}(s, a) - Q_\lambda^{k+1}(s, a) \right) \right] \right|,
\end{align}
where $(a)$ follows from the definition in \eqref{eqn_TDerror_single} and \eqref{eqn_TDerror},  $(b)$ follows from expanding the square and canceling the terms. 

Notice that $\hat{\mathcal{B}}_{s'} Q^k(s, a)$ is the only term in the previous expression that depends on $s'$. To proceed, we condition on $(s, a)$ in the previous equation, i.e.,
\begin{align}
    &\left| \mathcal{E}_{\mathcal{D}}(Q_\lambda^{k+1})  - \mathcal{E}_{\mathcal{D}}(\hat{Q}_\lambda^{k+1}) \right| \nonumber \\
    % &= \left| \underset{(s, a) \sim \mathcal{D}}{\mathbb{E}} \left[ (Q_\lambda^{k+1}(s, a))^2 - (\hat{Q}_\lambda^{k+1}(s, a))^2 + 2 \underset{s' \sim P_\mathcal{D} (s'|s, a)}{\mathbb{E}} \left[\hat{\mathcal{B}} Q^k(s, a) \right] \left(\hat{Q}_\lambda^{k+1}(s, a) - Q_\lambda^{k+1}(s, a) \right) \right] \right| \\
    &\stackrel{(a)}{=} \left| \underset{(s, a) \sim \mathcal{D}}{\mathbb{E}} \left[ (Q_\lambda^{k+1}(s, a))^2 - (\hat{Q}_\lambda^{k+1}(s, a))^2 + 2 \mathcal{B}_\mathcal{D} Q^k(s, a) \left(\hat{Q}_\lambda^{k+1}(s, a) - Q_\lambda^{k+1}(s, a) \right) \right] \right| \\
    &\stackrel{(b)}{=} \left| \underset{(s, a) \sim \mathcal{D}}{\mathbb{E}} \left[  \left(Q_\lambda^{k+1}(s, a) - \hat{Q}_\lambda^{k+1}(s, a)  \right) \left(Q_\lambda^{k+1}(s, a) + \hat{Q}_\lambda^{k+1}(s, a) -2 \mathcal{B}_\mathcal{D} Q^k(s, a) \right) \right] \right|, \\
    &\stackrel{(c)}{\leq}  \underset{(s, a) \sim \mathcal{D}}{\mathbb{E}} \left[  \underbrace{\left| \left(Q_\lambda^{k+1}(s, a) - \hat{Q}_\lambda^{k+1}(s, a)  \right) \right|}_{G_1} \cdot \underbrace{\left| \left(Q_\lambda^{k+1}(s, a) + \hat{Q}_\lambda^{k+1}(s, a) -2 \mathcal{B}_\mathcal{D} Q^k(s, a) \right) \right|}_{G_2} \right], \label{eqn_lemma1_bound_temp1}
\end{align}
where $(a)$ follows by definition $\mathcal{B}_\mathcal{D} Q^k(s, a) = \mathbb{E}_{s' \sim P_\mathcal{D}(s'|s,a)} \left[ \hat{\mathcal{B}}_{s'} Q^k(s, a)\right]$ (see \eqref{eqn_q_bellman} or \eqref{eqn_ac_bellman}), $(b)$ follows by combining the terms, and $(c)$ follows by Jensen's inequality.

% Recall from \eqref{eqn_proposition_two_solution_Q_lam} that 
% \begin{align}
%     Q_\lambda^{k+1}(s, a) = \frac{\frac{1-\lambda}{N} \sum_{j=1}^{N(s, a)} \hat{\mathcal{B}}_{\hat{s}_j'} Q^k(s, a) + \lambda P_{\mathcal{D}'}(s, a) \mathcal{B}_\mathcal{D'} Q^k(s, a) } {(1-\lambda) P_{\hat{\mathcal{D}}}(s, a) + \lambda P_{\mathcal{D}'}(s, a) },  \, \forall (s, a) \in \mathcal{S}\times\mathcal{A}. 
% \end{align}

We next work on $G_1$ and $G_2$ individually.
Substituting the expressions of $Q_\lambda^{k+1}(s, a)$ and $\hat{Q}_\lambda^{k+1}(s, a)$ from \eqref{eqn_proposition_two_solution_Q_lam} into $G_1$ and $G_2$ yields
\begin{align}
    G_1 &= \left| \frac{ (1-\lambda) / N \left( \hat{\mathcal{B}}_{\hat{s}_i'} Q^k(s, a) - \hat{\mathcal{B}}_{\hat{s}_i''} Q^k(s, a) \right) } {(1-\lambda) P_{\hat{\mathcal{D}}}(s, a) + \lambda P_{\mathcal{D}'}(s, a) } \right|.
\end{align}

\begin{align}
    G_2 \! = \! \left| \frac{ \frac{1-\lambda}{N}  \sum_{j=1}^{N(s, a)} \!\left(\hat{\mathcal{B}}_{\hat{s}_j'} Q^k(s, a) + \hat{\mathcal{B}}_{\hat{s}_j''} Q^k(s, a) \right) + 2 \lambda P_{\mathcal{D}'}(s, a) \mathcal{B}_\mathcal{D'} Q^k(s, a)}{(1-\lambda) P_{\hat{\mathcal{D}}}(s, a) + \lambda P_{\mathcal{D}'}(s, a)} -2 \mathcal{B}_\mathcal{D} Q^k(s, a) \right|. 
\end{align}

By combining and canceling the terms $G_2$ can be further simplified as
\begin{align}
    G_2 &=  \Bigg|\frac{ \frac{1-\lambda}{N}  \sum_{j=1}^{N(s, a)} \left(\hat{\mathcal{B}}_{\hat{s}_j'} Q^k(s, a) + \hat{\mathcal{B}}_{\hat{s}_j''} Q^k(s, a) \right) - 2(1-\lambda) P_{\hat{\mathcal{D}}}(s, a) \mathcal{B}_\mathcal{D} Q^k(s, a) }{(1-\lambda) P_{\hat{\mathcal{D}}}(s, a) + \lambda P_{\mathcal{D}'}(s, a)} \nonumber \\
    &\quad\quad+ \frac{ 2 \lambda P_{\mathcal{D}'}(s, a) \left( \mathcal{B}_\mathcal{D'} Q^k(s, a) -  \mathcal{B}_\mathcal{D} Q^k(s, a) \right)}{(1-\lambda) P_{\hat{\mathcal{D}}}(s, a) + \lambda P_{\mathcal{D}'}(s, a)} \Bigg|.
\end{align}

Note that Assumption~\ref{assumption_bound_reward} and \eqref{eqn_bound_q_value} combining with \eqref{eqn_q_bellman} or \eqref{eqn_ac_bellman} implies
\begin{align}
    &\left| \hat{\mathcal{B}}_{\hat{s}_i'} Q^k(s, a) - \hat{\mathcal{B}}_{\hat{s}_i''} Q^k(s, a) \right| \leq \gamma (\frac{B}{1-\gamma} + \frac{B}{1-\gamma}) = \frac{2\gamma B}{1-\gamma}, \, \forall i \\
    &\left|\hat{\mathcal{B}}_{\hat{s}_j'} Q^k(s, a) \right| \leq B + \gamma \frac{B}{1-\gamma} = \frac{B}{1-\gamma}, \, \forall j \\
    &\left| \sum_{j=1}^{N(s, a)} \hat{\mathcal{B}}_{\hat{s}_j'} Q^k(s, a) \right| \leq \sum_{j=1}^{N(s, a)} \left| \hat{\mathcal{B}}_{\hat{s}_j'} Q^k(s, a) \right| \leq N(s, a) \frac{B}{1-\gamma}.
\end{align}
Applying the above bounds to $G_1$ and $G_2$ yields
\begin{align}
    G_1 \leq \left| \frac{ (1-\lambda) / N } {(1-\lambda) P_{\hat{\mathcal{D}}}(s, a) + \lambda P_{\mathcal{D}'}(s, a) } \frac{2\gamma B}{1-\gamma} \right|.
\end{align}

\begin{align}
    G_2 &\leq  \Bigg|\frac{ 2 \frac{1-\lambda}{N} N(s, a) \frac{B}{1-\gamma} - 2(1-\lambda) P_{\hat{\mathcal{D}}}(s, a) \mathcal{B}_\mathcal{D} Q^k(s, a) }{(1-\lambda) P_{\hat{\mathcal{D}}}(s, a) + \lambda P_{\mathcal{D}'}(s, a)} \nonumber \\
    &\quad\quad+ \frac{ 2 \lambda P_{\mathcal{D}'}(s, a) \left( \mathcal{B}_\mathcal{D'} Q^k(s, a) -  \mathcal{B}_\mathcal{D} Q^k(s, a) \right)}{(1-\lambda) P_{\hat{\mathcal{D}}}(s, a) + \lambda P_{\mathcal{D}'}(s, a)} \Bigg| \\
    &\leq \Bigg|\frac{ 2(1-\lambda) P_{\hat{\mathcal{D}}}(s, a) ( \frac{B}{1-\gamma} + \frac{B}{1-\gamma}) }{(1-\lambda) P_{\hat{\mathcal{D}}}(s, a) + \lambda P_{\mathcal{D}'}(s, a)} + \frac{ 2 \lambda P_{\mathcal{D}'}(s, a) \left( \mathcal{B}_\mathcal{D'} Q^k(s, a) -  \mathcal{B}_\mathcal{D} Q^k(s, a) \right)}{(1-\lambda) P_{\hat{\mathcal{D}}}(s, a) + \lambda P_{\mathcal{D}'}(s, a)} \Bigg| \\
    &=\left| \frac{(1-\lambda) P_{\hat{\mathcal{D}}}(s, a) (4B/(1-\gamma)) + 2 \lambda P_{\mathcal{D}'}(s, a) \left(\mathcal{B}_\mathcal{D'} Q^k(s, a) - \mathcal{B}_\mathcal{D} Q^k(s, a) \right)}{(1-\lambda) P_{\hat{\mathcal{D}}}(s, a) + \lambda P_{\mathcal{D}'}(s, a)}  \right|.
\end{align}

Substituting $G_1$ and $G_2$ back into \eqref{eqn_lemma1_bound_temp1} yields
\begin{align}
    &\left| \mathcal{E}_{\mathcal{D}}(Q_\lambda^{k+1})  - \mathcal{E}_{\mathcal{D}}(\hat{Q}_\lambda^{k+1}) \right| \nonumber \\
    &\leq \underset{(s, a) \sim \mathcal{D}}{\mathbb{E}} \Bigg[  \left| \frac{(1-\lambda) / N}{(1-\lambda) P_{\hat{\mathcal{D}}}(s, a) + \lambda P_{\mathcal{D}'}(s, a)} \frac{2 \gamma B}{1-\gamma} \right| \nonumber \\
    &\quad \cdot \left| \frac{(1-\lambda) P_{\hat{\mathcal{D}}}(s, a) (4B/(1-\gamma)) + 2 \lambda P_{\mathcal{D}'}(s, a) \left|\mathcal{B}_\mathcal{D'} Q^k(s, a) - \mathcal{B}_\mathcal{D} Q^k(s, a) \right|}{(1-\lambda) P_{\hat{\mathcal{D}}}(s, a) + \lambda P_{\mathcal{D}'}(s, a)} \right| \Bigg].
\end{align}

We rewrite the previous inequality using the definition of the expectation
\begin{align}
        &\left| \mathcal{E}_{\mathcal{D}}(Q_\lambda^{k+1})  - \mathcal{E}_{\mathcal{D}}(\hat{Q}_\lambda^{k+1}) \right| \nonumber \\
    &\leq \sum_{(s, a)  \in \mathcal{S} \times \mathcal{A}} P_{\mathcal{D}}(s, a)  \left( \frac{(1-\lambda) / N}{(1-\lambda) P_{\hat{\mathcal{D}}}(s, a) + \lambda P_{\mathcal{D}'}(s, a)} \frac{2 \gamma B}{1-\gamma} \right) \nonumber \\
    &\quad \cdot \left| \left(\frac{(1-\lambda) P_{\hat{\mathcal{D}}}(s, a) (4B/(1-\gamma)) + 2 \lambda P_{\mathcal{D}'}(s, a) \left|\mathcal{B}_\mathcal{D'} Q^k(s, a) - \mathcal{B}_\mathcal{D} Q^k(s, a)) \right|}{(1-\lambda) P_{\hat{\mathcal{D}}}(s, a) + \lambda P_{\mathcal{D}'}(s, a)} \right) \right|  \\
    &= \frac{1}{N} \sum_{(s, a)  \in \mathcal{S} \times \mathcal{A}}   \left( \frac{ P_{\mathcal{D}}(s, a)}{(1-\lambda) P_{\hat{\mathcal{D}}}(s, a) + \lambda P_{\mathcal{D}'}(s, a)} \frac{2 (1-\lambda) \gamma B }{1-\gamma} \right) \nonumber \\
    &\quad \cdot \left| \left(\frac{(1-\lambda) P_{\hat{\mathcal{D}}}(s, a) (4B/(1-\gamma)) + 2 \lambda P_{\mathcal{D}'}(s, a) \left|\mathcal{B}_\mathcal{D'} Q^k(s, a) - \mathcal{B}_\mathcal{D} Q^k(s, a)) \right|}{(1-\lambda) P_{\hat{\mathcal{D}}}(s, a) + \lambda P_{\mathcal{D}'}(s, a)} \right) \right|.
\end{align}

Dividing both the numerator and denominator by $P_{\mathcal{D'}} (s, a)$ in the right hand side of the previous expression, and re-ordering terms yields
\begin{align}
    &\left| \mathcal{E}_{\mathcal{D}}(Q_\lambda^{k+1})  - \mathcal{E}_{\mathcal{D}}(\hat{Q}_\lambda^{k+1}) \right| \nonumber \\
    &\leq \frac{1}{N} \sum_{(s, a)  \in \mathcal{S} \times \mathcal{A}}   \left( \frac{ P_{\mathcal{D}}(s, a) / P_{\mathcal{D'}}(s, a)}{(1-\lambda) P_{\hat{\mathcal{D}}}(s, a) / P_{\mathcal{D'}}(s, a) + \lambda } \frac{2 (1-\lambda) \gamma B }{1-\gamma} \right) \nonumber \\
    &\cdot \left| \left(\frac{(1-\lambda) P_{\hat{\mathcal{D}}}(s, a) / P_{\mathcal{D}'}(s, a) (4B/(1-\gamma)) + 2 \lambda   \left|\mathcal{B}_\mathcal{D'} Q^k(s, a) - \mathcal{B}_\mathcal{D} Q^k(s, a)) \right|}{(1-\lambda) P_{\hat{\mathcal{D}}}(s, a) / P_{\mathcal{D}'}(s, a) + \lambda } \right) \right|.
\end{align}

By using Assumption~\ref{assumption_SAprobability} and the definition of $\beta_u'$ the previous inequality reduces to
\begin{align}
    &\left| \mathcal{E}_{\mathcal{D}}(Q_\lambda^{k+1})  - \mathcal{E}_{\mathcal{D}}(\hat{Q}_\lambda^{k+1}) \right| \nonumber \\
    &\leq \frac{1}{N} \sum_{(s, a)  \in \mathcal{S} \times \mathcal{A}}   \left( \frac{ \beta_u'}{(1-\lambda) \,  \beta_l + \lambda } \frac{2 (1-\lambda) \gamma B }{1-\gamma} \right) \nonumber \\
    &\cdot \left| \left(\frac{(1-\lambda) \beta_u (4B/(1-\gamma)) + 2 \lambda   \left|\mathcal{B}_\mathcal{D'} Q^k(s, a) - \mathcal{B}_\mathcal{D} Q^k(s, a)) \right|}{(1-\lambda) \, \beta_l  + \lambda } \right) \right| \\
    % &\leq \frac{1}{N} \sum_{(s, a)  \in \mathcal{S} \times \mathcal{A}}   \left( \frac{ 1}{(1-\lambda) \, \hat{\beta}_l + \lambda \beta_l'} \frac{2 (1-\lambda) \gamma B }{1-\gamma} \right) \nonumber \\
    % &\cdot \left| \left(\frac{(1-\lambda) \hat{\beta}_u / \beta_l' (4B/(1-\gamma)) + 2 \lambda \underset{(s, a) \in \mathcal{S} \times \mathcal{A} }{\max}  \left|\mathcal{B}_\mathcal{D'} Q^k(s, a) - \mathcal{B}_\mathcal{D} Q^k(s, a)) \right|}{(1-\lambda) \, \hat{\beta}_l / \beta_u' + \lambda } \right) \right|.
    &\leq \frac{1}{N} \sum_{(s, a)  \in \mathcal{S} \times \mathcal{A}}   \left( \frac{ \beta_u'}{(1-\lambda) \,  \beta_l + \lambda } \frac{2 (1-\lambda) \gamma B }{1-\gamma} \right) \nonumber \\
    &\cdot \left| \left(\frac{(1-\lambda) \beta_u (4B/(1-\gamma)) + 2 \lambda  \underset{(s, a) \in \mathcal{S} \times \mathcal{A} }{\max} \left|\mathcal{B}_\mathcal{D'} Q^k(s, a) - \mathcal{B}_\mathcal{D} Q^k(s, a)) \right|}{(1-\lambda) \, \beta_l  + \lambda } \right) \right|
\end{align}
By the definition of $\xi$ as in \eqref{eqn_discrepancy}, the above expression is equivalent to 
\begin{align}
     &\left| \mathcal{E}_{\mathcal{D}}(Q_\lambda^{k+1})  - \mathcal{E}_{\mathcal{D}}(\hat{Q}_\lambda^{k+1}) \right| \nonumber \\
    &\leq \frac{1}{N} \sum_{(s, a)  \in \mathcal{S} \times \mathcal{A}}   \left( \frac{ \beta_u'}{(1-\lambda) \, \beta_l + \lambda } \frac{2 (1-\lambda) \gamma B }{1-\gamma} \right) \cdot \left| \left(\frac{(1-\lambda) \beta_u (4B/(1-\gamma)) + 2 \lambda \sqrt{\xi} }{(1-\lambda) \, \beta_l + \lambda } \right) \right| \\
     % &\leq \frac{|\mathcal{S}| |\mathcal{A}|}{N} \left( \frac{8 \gamma B^2}{(1-\gamma)^2} \frac{\beta_u (1-\lambda)^2}{(1-\lambda) \beta_l^2 + \lambda \beta_l} + \frac{4 \gamma B}{1-\gamma} \frac{\lambda (1-\lambda)\sqrt{\xi}}{(1-\lambda) \beta_l^2 + \lambda \beta_l} \right) \\
     &= \frac{|\mathcal{S}| |\mathcal{A}|}{N}  \left( \frac{ \beta_u'}{(1-\lambda) \, \beta_l + \lambda } \frac{2 (1-\lambda) \gamma B }{1-\gamma} \right) \cdot  \left(\frac{(1-\lambda) \beta_u  (4B/(1-\gamma)) + 2 \lambda \sqrt{\xi} }{(1-\lambda) \, \beta_l + \lambda } \right)  \\
    &=c,
\end{align}
where $c$ is the bound as shown in Lemma~\ref{lemma_McDiarmid}. Lemma~\ref{lemma_McDiarmid} implies that

\begin{align}
    P\left( \mathcal{E}_{\mathcal{D}}(Q_\lambda^{k+1}) - \underset{\substack{   \hat{s}' \sim P_\mathcal{D} (\hat{s}'| s, a)}}{\mathbb{E}} \left[\mathcal{E}_{\mathcal{D}}(Q_\lambda^{k+1}) \right]  \geq \epsilon \right) \leq \exp \frac{-2\epsilon^2}{\sum_{i=1}^{N} c^2}
\end{align}
i.e.,
\begin{align}
    P\left( \mathcal{E}_{\mathcal{D}}(Q_\lambda^{k+1}) - \underset{\substack{   \hat{s}' \sim P_\mathcal{D} (\hat{s}'| s, a)}}{\mathbb{E}} \left[\mathcal{E}_{\mathcal{D}}(Q_\lambda^{k+1}) \right]  < \epsilon \right) \geq 1-\exp \frac{-2\epsilon^2}{N c^2}.
\end{align}
Let the right hand side of the previous expression to be $1-\delta$. Then,
\begin{align}
    \epsilon &= \sqrt{\frac{1}{2} \log(\frac{1}{\delta}) N} c \\
    % &=\sqrt{\frac{1}{2} \log(\frac{1}{\delta}) } \frac{|\mathcal{S}| |\mathcal{A}|}{\sqrt{N}}  \left( \frac{8 \gamma B^2}{(1-\gamma)^2} \frac{\beta_u (1-\lambda)^2}{(1-\lambda) \beta_l^2 + \lambda \beta_l} + \frac{4 \gamma B}{1-\gamma} \frac{\lambda (1-\lambda)\sqrt{\xi}}{(1-\lambda) \beta_l^2 + \lambda \beta_l} \right).
    &= \sqrt{\frac{1}{2} \log(\frac{1}{\delta}) } \frac{|\mathcal{S}| |\mathcal{A}|}{\sqrt{N}} \left( \frac{ \beta_u'}{(1-\lambda) \, \beta_l + \lambda } \frac{2 (1-\lambda) \gamma B }{1-\gamma} \right) \cdot  \left(\frac{(1-\lambda) \beta_u  (4B/(1-\gamma)) + 2 \lambda \sqrt{\xi} }{(1-\lambda) \, \beta_l  + \lambda } \right).
\end{align}
Therefore, the following bound holds with probability at least $1-\delta$
\begin{align}
    &\mathcal{E}_{\mathcal{D}}(Q_\lambda^{k+1}) - \mathcal{E}_{\mathcal{D}}(Q^{k+1}) \nonumber \\
    &\leq  \left( \frac{ 1-\lambda }{ 1-\lambda + \lambda / \beta_u } \right)^2
    \varsigma + \left( \frac{\lambda  }{ (1-\lambda) \, \beta_l + \lambda } \right)^2  \xi + e^{-N C} \xi \nonumber \\
    &\quad+  \sqrt{\frac{1}{2} \log(\frac{1}{\delta}) } \frac{|\mathcal{S}| |\mathcal{A}|}{\sqrt{N}} \left( \frac{ \beta_u'}{(1-\lambda) \, \beta_l + \lambda } \frac{2 (1-\lambda) \gamma B }{1-\gamma} \right) \cdot  \left(\frac{(1-\lambda) \beta_u  (4B/(1-\gamma)) + 2 \lambda \sqrt{\xi} }{(1-\lambda) \, \beta_l  + \lambda } \right),
\end{align}
which completes the proof of Theorem~\ref{theorem_bound}.

\end{proof}

\subsection{Proof of Theorem~\ref{theorem_convergence}}
\label{append_prf_theorem3}

To proceed, we rely on the following technical lemma.
\begin{lemma}\label{lemma_convergence}
    Given any dataset $\hat{\mathcal{D}}_\text{tr}$ and its corresponding transition-excluded dataset $\hat{\mathcal{D}}$, 
    % recall $\xi(Q^k)$ with respect to $Q^k$ as in \eqref{eqn_discrepancy}, and 
    let us define $\xi_{\text{max}} = \sup_{k \in \mathbb{N}} \, \xi (Q^k)$ and $\varsigma_{\text{max}} = \sup_{k \in \mathbb{N}}  ~ \varsigma (Q^k)$.
    % %
    % \begin{align}
    %     \sigma_{\text{max}} = \sup_{k \in \mathbb{N}}  ~ \underset{ \substack{ (s, a) \in \hat{\mathcal{D}} }}{\max}  \left[ \sigma_{\hat{s}'}^2 \left( \frac{1}{N(s, a)} \sum_{j=1}^{N(s, a)} \hat{\mathcal{B}}_{\hat{s}_j'} Q_\lambda^k(s, a) \right) \right].
    % \end{align}
    % %
    %
    Then, it holds that
    % \begin{align}
    %    \underset{\substack{ N(s, a),\, \hat{s}' \sim P_\mathcal{D} (\hat{s}'| s, a)}}{\mathbb{E}} \left[ \underset{(s, a) \sim \mathcal{D}}{\mathbb{E}}\left[||   Q_\lambda^{k+1}(s, a) - \mathcal{B}_{\mathcal{D}}  Q_\lambda^{k}(s, a) ||_\infty \right] \right] \leq (1-\lambda) \sigma_{\text{max}} + \lambda \sqrt{\xi_{\text{max}}},~ \forall k \in \mathbb{N}.
    % \end{align}
    % %
    \begin{align}
        \underset{\substack{ \hat{s}' \sim P_\mathcal{D} (\hat{s}'| s, a)}}{\mathbb{E}} \left[ \underset{(s, a) \sim \mathcal{D}}{\mathbb{E}}\left[||   Q_\lambda^{k+1}(s, a) - \mathcal{B}_{\mathcal{D}}  Q_\lambda^{k}(s, a) ||_\infty \right] \right] &\leq \frac{1-\lambda}{1-\lambda+\lambda { \, / \beta_u }} \sqrt{\varsigma_{\text{max}} } +  \frac{\lambda}{(1-\lambda) { \, \beta_l } + \lambda} \sqrt{\xi_{\text{max}}} \nonumber \\
        &\quad + e^{-NC} \sqrt{\xi_{\text{max}}}, ~ \forall k \in \mathbb{N}.
    \end{align}
\end{lemma}
\begin{proof}
    Given any dataset $\hat{\mathcal{D}}_\text{tr}$ and its corresponding transition-excluded dataset $\hat{\mathcal{D}}$, it holds that
    \begin{align}
        &\underset{\substack{  \hat{s}' \sim P_\mathcal{D} (\hat{s}'| s, a)}}{\mathbb{E}} \left[ \underset{(s, a) \sim \mathcal{D}}{\mathbb{E}}\left[||   Q_\lambda^{k+1}(s, a) - \mathcal{B}_{\mathcal{D}}  Q_\lambda^{k}(s, a) ||_\infty \right] \right] \nonumber \\
        &\stackrel{(a)}{\leq}  \underset{(s, a) \sim \mathcal{D}}{\mathbb{E}}\left[ \underset{\substack{  \hat{s}' \sim P_\mathcal{D} (\hat{s}'| s, a)}}{\mathbb{E}} \left[ ||   Q_\lambda^{k+1}(s, a) - \mathcal{B}_{\mathcal{D}}  Q_\lambda^{k}(s, a) || \right] \right]  \\
        &\stackrel{(b)}{=}  \underset{(s, a) \sim \mathcal{D}}{\mathbb{E}} \!\! \left[ \underset{\substack{ \hat{s}' \sim P_\mathcal{D} (\hat{s}'| s, a)}}{\mathbb{E}} \! \left[  || \frac{\frac{1-\lambda}{N} \sum_{j=1}^{N(s, a)} \! \hat{\mathcal{B}}_{\hat{s}_j'} Q_\lambda^k(s, a) \!+\! \lambda P_{\mathcal{D}'}(s, a) \mathcal{B}_\mathcal{D'} Q_\lambda^k(s, a) } {(1-\lambda) P_{\hat{\mathcal{D}}}(s, a) + \lambda P_{\mathcal{D}'}(s, a) }  \!-\! \mathcal{B}_{\mathcal{D}}  Q_\lambda^{k}(s, a) || \right] \! \right]  \\
        &\stackrel{(c)}{=}  \underset{(s, a) \sim \mathcal{D}}{\mathbb{E}}\Bigg[ \underset{\substack{  \hat{s}' \sim P_\mathcal{D} (\hat{s}'| s, a)}}{\mathbb{E}} \Bigg[ || \frac{\frac{1-\lambda}{N} \sum_{j=1}^{N(s, a)} \hat{\mathcal{B}}_{\hat{s}_j'} Q_\lambda^k(s, a) - (1-\lambda) P_{\hat{\mathcal{D}}}(s, a) \mathcal{B}_\mathcal{D} Q_\lambda^k(s, a)) } {(1-\lambda) P_{\hat{\mathcal{D}}}(s, a) + \lambda P_{\mathcal{D}'}(s, a) } \nonumber  \\
        &\quad\quad\quad\quad\quad\quad \quad\quad\quad\quad\quad+ \frac{ \lambda P_{\mathcal{D}'}(s, a) (\mathcal{B}_\mathcal{D'} Q_\lambda^k(s, a) - \mathcal{B}_\mathcal{D} Q_\lambda^k(s, a)) } {(1-\lambda) P_{\hat{\mathcal{D}}}(s, a) + \lambda P_{\mathcal{D}'}(s, a) } || \Bigg] \Bigg]
        % &\stackrel{(c)}{=}  \underset{(s, a) \sim \mathcal{D}}{\mathbb{E}}\left[ \mathbb{E} \left[ || (1-\lambda) \left( \frac{1}{N(s, a)} \sum_{j=1}^{N(s, a)}\hat{\mathcal{B}}_{s_j'} Q_\lambda^{k}(s, a) - \mathcal{B}_{\mathcal{D}}  Q_\lambda^{k}(s, a)  \right) + \lambda \left(\mathcal{B}_\mathcal{D'} Q_\lambda^{k}(s, a) - \mathcal{B}_{\mathcal{D}}  Q_\lambda^{k}(s, a) \right) || \right] \right],
    \end{align}
where $(a)$ follows from swapping the two expectations and the fact that $|| \cdot ||_\infty \leq || \cdot ||$, $(b)$ follows by definition of $Q_\lambda^{k+1}(s, a)$ (see \eqref{eqn_proposition_two_solution_Q_lam}) that operates over $Q_\lambda^{k}(s, a)$, $(c)$ follows by combining the terms.

By using the triangle inequality the previous expression reduces to
    \begin{align}
        &\underset{\substack{  \hat{s}' \sim P_\mathcal{D} (\hat{s}'| s, a)}}{\mathbb{E}} \left[ \underset{(s, a) \sim \mathcal{D}}{\mathbb{E}}\left[||   Q_\lambda^{k+1}(s, a) - \mathcal{B}_{\mathcal{D}}  Q_\lambda^{k}(s, a) ||_\infty \right] \right] \nonumber \\
        &\leq  \underset{(s, a) \sim \mathcal{D}}{\mathbb{E}}\left[ \underset{\substack{ \hat{s}' \sim P_\mathcal{D} (\hat{s}'| s, a)}}{\mathbb{E}} \left[ || \frac{\frac{1-\lambda}{N} \sum_{j=1}^{N(s, a)} \hat{\mathcal{B}}_{\hat{s}_j'} Q_\lambda^k(s, a) - (1-\lambda) P_{\hat{\mathcal{D}}}(s, a) \mathcal{B}_\mathcal{D} Q_\lambda^k(s, a)) } {(1-\lambda) P_{\hat{\mathcal{D}}}(s, a) + \lambda P_{\mathcal{D}'}(s, a) } || \right] \right] \nonumber  \\
        &\quad+ \underset{(s, a) \sim \mathcal{D}}{\mathbb{E}}\left[ \underset{\substack{ \hat{s}' \sim P_\mathcal{D} (\hat{s}'| s, a)}}{\mathbb{E}} \left[ || \frac{ \lambda P_{\mathcal{D}'}(s, a) (\mathcal{B}_\mathcal{D'} Q_\lambda^k(s, a) - \mathcal{B}_\mathcal{D} Q_\lambda^k(s, a)) } {(1-\lambda) P_{\hat{\mathcal{D}}}(s, a) + \lambda P_{\mathcal{D}'}(s, a) } || \right] \right].
    \end{align}

% \red{noooo, start here}
%     \begin{align}
%         &\mathbb{E} \left[ \underset{(s, a) \sim \mathcal{D}}{\mathbb{E}}\left[||   Q_\lambda^{k+1}(s, a) - \mathcal{B}_{\mathcal{D}}  Q_\lambda^{k}(s, a) ||_\infty \right] \right] \nonumber \\
%         &\leq \underset{(s, a) \sim \mathcal{D}}{\mathbb{E}}\left[  \mathbb{E} \left[ (1-\lambda) ||  \left( \frac{1}{N(s, a)} \sum_{j=1}^{N(s, a)}\hat{\mathcal{B}}_{s_j'} Q_\lambda^{k}(s, a) - \mathcal{B}_{\mathcal{D}}  Q_\lambda^{k}(s, a)  \right) || + \lambda ||\mathcal{B}_\mathcal{D'} Q_\lambda^{k}(s, a) - \mathcal{B}_{\mathcal{D}}  Q_\lambda^{k}(s, a)  || \right] \right] \\
%         &= \underset{(s, a) \sim \mathcal{D}}{\mathbb{E}}\left[ (1-\lambda) \mathbb{E} \left[ ||  \left( \frac{1}{N(s, a)} \sum_{j=1}^{N(s, a)}\hat{\mathcal{B}}_{s_j'} Q_\lambda^{k}(s, a) - \mathcal{B}_{\mathcal{D}}  Q_\lambda^{k}(s, a)  \right) || \right] + \lambda ||\mathcal{B}_\mathcal{D'} Q_\lambda^{k}(s, a) - \mathcal{B}_{\mathcal{D}}  Q_\lambda^{k}(s, a)  || \right] \\
%         &= (1-\lambda) \underset{(s, a) \sim \mathcal{D}}{\mathbb{E}}\left[  \mathbb{E} \left[ || \!\! \left( \frac{1}{N(s, a)} \sum_{j=1}^{N(s, a)}\hat{\mathcal{B}}_{s_j'} Q_\lambda^{k}(s, a) - \mathcal{B}_{\mathcal{D}}  Q_\lambda^{k}(s, a)  \right) \!\! || \right] \right]  +  \lambda \underset{(s, a) \sim \mathcal{D}}{\mathbb{E}}\left[  ||\mathcal{B}_\mathcal{D} Q_\lambda^{k}(s, a) - \mathcal{B}_{\mathcal{D'}}  Q_\lambda^{k}(s, a)  || \right],
%     \end{align}
% where the last two equations follow from algebraic manipulations. 

By the definition $P_{\hat{\mathcal{D}}}(s, a) = N(s, a) / N$, the previous inequality is equivalent to 
    \begin{align}
        &\underset{\substack{  \hat{s}' \sim P_\mathcal{D} (\hat{s}'| s, a)}}{\mathbb{E}} \left[ \underset{(s, a) \sim \mathcal{D}}{\mathbb{E}}\left[||   Q_\lambda^{k+1}(s, a) - \mathcal{B}_{\mathcal{D}}  Q_\lambda^{k}(s, a) ||_\infty \right] \right] \nonumber \\
        &\leq \! \underset{(s, a) \sim \mathcal{D}}{\mathbb{E}} \!\! \left[ \underset{\substack{  \hat{s}' \sim P_\mathcal{D} (\hat{s}'| s, a)}}{\mathbb{E}} \!\! \left[ \! || \frac{(1-\lambda)  \frac{P_{\hat{\mathcal{D}}}(s, a)}{N(s, a)} \! \sum_{j=1}^{N(s, a)} \! \hat{\mathcal{B}}_{\hat{s}_j'} Q_\lambda^k(s, a) \!-\! (1-\lambda) P_{\hat{\mathcal{D}}}(s, a) \mathcal{B}_\mathcal{D} Q_\lambda^k(s, a)) } {(1-\lambda) P_{\hat{\mathcal{D}}}(s, a) + \lambda P_{\mathcal{D}'}(s, a) } || \! \right] \! \right] \nonumber  \\
        &\quad+ \underset{(s, a) \sim \mathcal{D}}{\mathbb{E}}\left[ \underset{\substack{  \hat{s}' \sim P_\mathcal{D} (\hat{s}'| s, a)}}{\mathbb{E}} \left[ || \frac{ \lambda P_{\mathcal{D}'}(s, a) (\mathcal{B}_\mathcal{D'} Q_\lambda^k(s, a) - \mathcal{B}_\mathcal{D} Q_\lambda^k(s, a)) } {(1-\lambda) P_{\hat{\mathcal{D}}}(s, a) + \lambda P_{\mathcal{D}'}(s, a) } || \right] \right] \\
        &\leq \! \underset{(s, a) \sim \mathcal{D} \mid \hat{\mathcal{D}} }{\mathbb{E}} \!\! \left[ \underset{\substack{  \hat{s}' \sim P_\mathcal{D} (\hat{s}'| s, a)}}{\mathbb{E}} \!\! \left[ \! || \frac{(1-\lambda)  \frac{P_{\hat{\mathcal{D}}}(s, a)}{N(s, a)} \! \sum_{j=1}^{N(s, a)} \! \hat{\mathcal{B}}_{\hat{s}_j'} Q_\lambda^k(s, a) \!-\! (1-\lambda) P_{\hat{\mathcal{D}}}(s, a) \mathcal{B}_\mathcal{D} Q_\lambda^k(s, a)) } {(1-\lambda) P_{\hat{\mathcal{D}}}(s, a) + \lambda P_{\mathcal{D}'}(s, a) } || \! \right] \! \right] \nonumber  \\
        &\quad+ \underset{(s, a) \sim \mathcal{D} \mid \hat{\mathcal{D}} }{\mathbb{E}}\left[ \underset{\substack{  \hat{s}' \sim P_\mathcal{D} (\hat{s}'| s, a)}}{\mathbb{E}} \left[ || \frac{ \lambda P_{\mathcal{D}'}(s, a) (\mathcal{B}_\mathcal{D'} Q_\lambda^k(s, a) - \mathcal{B}_\mathcal{D} Q_\lambda^k(s, a)) } {(1-\lambda) P_{\hat{\mathcal{D}}}(s, a) + \lambda P_{\mathcal{D}'}(s, a) } || \right] \right] \nonumber \\
        &\quad+ \underset{(s, a) \sim \mathcal{D} \mid \mathcal{S} \times \mathcal{A} \setminus \hat{\mathcal{D}} }{\mathbb{E}}\left[ \underset{\substack{  \hat{s}' \sim P_\mathcal{D} (\hat{s}'| s, a)}}{\mathbb{E}} \left[ || e^{-NC} \cdot (\mathcal{B}_\mathcal{D'} Q_\lambda^k(s, a) - \mathcal{B}_\mathcal{D} Q_\lambda^k(s, a)) || \right] \right],
    \end{align}
where the previous inequality follows from the proof of Theorem~\ref{theorem_bound_expectation}.

Dividing both the numerator and denominator by $P_{\hat{\mathcal{D}}} (s, a)$ in the first term and by $P_{\mathcal{D}'} (s, a)$ in the second term of the right hand side of the previous inequality, and re-ordering terms yields
%
% Dividing both the numerators and denominators in the previous expression by $P_{\hat{\mathcal{D}}} (s, a)$ and re-ordering terms yields
    \begin{align}
        &\underset{\substack{ \hat{s}' \sim P_\mathcal{D} (\hat{s}'| s, a)}}{\mathbb{E}} \left[ \underset{(s, a) \sim \mathcal{D}}{\mathbb{E}}\left[||   Q_\lambda^{k+1}(s, a) - \mathcal{B}_{\mathcal{D}}  Q_\lambda^{k}(s, a) ||_\infty \right] \right] \nonumber \\
        &\leq  \underset{(s, a) \sim \mathcal{D} \mid \hat{\mathcal{D}} }{\mathbb{E}}\left[ \underset{\substack{ \hat{s}' \sim P_\mathcal{D} (\hat{s}'| s, a)}}{\mathbb{E}} \left[ || \frac{(1-\lambda) \left( \frac{1}{N(s, a)} \sum_{j=1}^{N(s, a)} \hat{\mathcal{B}}_{\hat{s}_j'} Q_\lambda^k(s, a) - \mathcal{B}_\mathcal{D} Q_\lambda^k(s, a) \right) } {1-\lambda + \lambda P_{\mathcal{D}'}(s, a) / P_{\hat{\mathcal{D}}}(s, a)} || \right] \right] \nonumber  \\
        &\quad+ \underset{(s, a) \sim \mathcal{D} \mid \hat{\mathcal{D}} }{\mathbb{E}}\left[ \underset{\substack{  \hat{s}' \sim P_\mathcal{D} (\hat{s}'| s, a)}}{\mathbb{E}} \left[ || \frac{ \lambda  (\mathcal{B}_\mathcal{D'} Q_\lambda^k(s, a) - \mathcal{B}_\mathcal{D} Q_\lambda^k(s, a)) } {(1-\lambda) P_{\hat{\mathcal{D}}}(s, a) / P_{\mathcal{D}'}(s, a) + \lambda } || \right] \right] \nonumber \\
        &\quad+ \underset{(s, a) \sim \mathcal{D} \mid \mathcal{S} \times \mathcal{A} \setminus \hat{\mathcal{D}} }{\mathbb{E}}\left[ \underset{\substack{  \hat{s}' \sim P_\mathcal{D} (\hat{s}'| s, a)}}{\mathbb{E}} \left[ || e^{-NC} \cdot (\mathcal{B}_\mathcal{D'} Q_\lambda^k(s, a) - \mathcal{B}_\mathcal{D} Q_\lambda^k(s, a)) || \right] \right].\label{eqn_theorem3_lemma}
    \end{align}

Assumption~\ref{assumption_SAprobability} implies that both $P_{\mathcal{D}'}(s, a) / P_{\hat{\mathcal{D}}} (s, a)$ and $P_{\hat{\mathcal{D}}} (s, a) / P_{\mathcal{D}'} (s, a)$ are bounded, i.e.,
\begin{align}
    \frac{1}{\beta_u} \leq \frac{P_{\mathcal{D}'} (s, a)}{P_{\hat{\mathcal{D}}} (s, a)} \leq \frac{1}{\beta_l}, ~ \beta_l \leq \frac{P_{\hat{\mathcal{D}}} (s, a)}{P_{\mathcal{D}'} (s, a)} \leq \beta_u, ~ \forall (s, a) \in \hat{\mathcal{D}}.
\end{align}
Applying the previous bounds to \eqref{eqn_theorem3_lemma} yields
    \begin{align}
        &\underset{\substack{ \hat{s}' \sim P_\mathcal{D} (\hat{s}'| s, a)}}{\mathbb{E}} \left[ \underset{(s, a) \sim \mathcal{D}}{\mathbb{E}}\left[||   Q_\lambda^{k+1}(s, a) - \mathcal{B}_{\mathcal{D}}  Q_\lambda^{k}(s, a) ||_\infty \right] \right] \nonumber \\
        &\leq  \underset{(s, a) \sim \mathcal{D} \mid \hat{\mathcal{D}} }{\mathbb{E}}\left[ \underset{\substack{  \hat{s}' \sim P_\mathcal{D} (\hat{s}'| s, a)}}{\mathbb{E}} \left[ || \frac{(1-\lambda) \left( \frac{1}{N(s, a)} \sum_{j=1}^{N(s, a)} \hat{\mathcal{B}}_{\hat{s}_j'} Q_\lambda^k(s, a) - \mathcal{B}_\mathcal{D} Q_\lambda^k(s, a) \right) } {1-\lambda + \lambda {\,  / \beta_u }} || \right] \right] \nonumber  \\
        &\quad+ \underset{(s, a) \sim \mathcal{D} \mid \hat{\mathcal{D}} }{\mathbb{E}}\left[ \underset{\substack{  \hat{s}' \sim P_\mathcal{D} (\hat{s}'| s, a)}}{\mathbb{E}} \left[ || \frac{ \lambda  (\mathcal{B}_\mathcal{D'} Q_\lambda^k(s, a) - \mathcal{B}_\mathcal{D} Q_\lambda^k(s, a)) } {(1-\lambda) {\, \beta_l } + \lambda } || \right] \right] \nonumber \\
        &\quad+ \underset{(s, a) \sim \mathcal{D} \mid \mathcal{S} \times \mathcal{A} \setminus \hat{\mathcal{D}} }{\mathbb{E}}\left[ \underset{\substack{  \hat{s}' \sim P_\mathcal{D} (\hat{s}'| s, a)}}{\mathbb{E}} \left[ || e^{-NC} \cdot (\mathcal{B}_\mathcal{D'} Q_\lambda^k(s, a) - \mathcal{B}_\mathcal{D} Q_\lambda^k(s, a)) || \right] \right].
    \end{align}

Extracting the constant terms in the above expression outside expectations and using the fact that $\mathcal{B}_\mathcal{D} Q_\lambda^k(s, a)$ and $\mathcal{B}_\mathcal{D'} Q_\lambda^k(s, a)$ do not depend on $\hat{s}'$ (at the step $k+1$) yields
    \begin{align}
        &\underset{\substack{  \hat{s}' \sim P_\mathcal{D} (\hat{s}'| s, a)}}{\mathbb{E}} \left[ \underset{(s, a) \sim \mathcal{D}}{\mathbb{E}}\left[||   Q_\lambda^{k+1}(s, a) - \mathcal{B}_{\mathcal{D}}  Q_\lambda^{k}(s, a) ||_\infty \right] \right] \nonumber \\
        &\leq \frac{1-\lambda}{1-\lambda+\lambda  { \, / \beta_u }} \underset{(s, a) \sim \mathcal{D} \mid \hat{\mathcal{D}} }{\mathbb{E}}\left[ \underset{\substack{ \hat{s}' \sim P_\mathcal{D} (\hat{s}'| s, a)}}{\mathbb{E}} \left[ || \left( \frac{1}{N(s, a)} \sum_{j=1}^{N(s, a)} \hat{\mathcal{B}}_{\hat{s}_j'} Q_\lambda^k(s, a) - \mathcal{B}_\mathcal{D} Q_\lambda^k(s, a) \right)  || \right] \right] \nonumber  \\
        &\quad+ \frac{\lambda}{(1-\lambda) {\, \beta_l } + \lambda} \underset{(s, a) \sim \mathcal{D} \mid \hat{\mathcal{D}} }{\mathbb{E}}  \left[ || (\mathcal{B}_\mathcal{D} Q_\lambda^k(s, a) - \mathcal{B}_\mathcal{D'} Q_\lambda^k(s, a)) || \right] \nonumber \\
        &\quad+ e^{-NC} \cdot \underset{(s, a) \sim \mathcal{D} \mid \mathcal{S} \times \mathcal{A} \setminus \hat{\mathcal{D}} }{\mathbb{E}}  \left[ || (\mathcal{B}_\mathcal{D} Q_\lambda^k(s, a) - \mathcal{B}_\mathcal{D'} Q_\lambda^k(s, a)) || \right] \\
        &\leq \frac{1-\lambda}{1-\lambda+\lambda { \,  / \beta_u }}  ~ \underset{ \substack{ (s, a) \in \hat{\mathcal{D}} }}{\max} \left[ \underset{\substack{ \hat{s}' \sim P_\mathcal{D} (\hat{s}'| s, a)}}{\mathbb{E}} \left[ || \left( \frac{1}{N(s, a)} \sum_{j=1}^{N(s, a)} \hat{\mathcal{B}}_{\hat{s}_j'} Q_\lambda^k(s, a) - \mathcal{B}_\mathcal{D} Q_\lambda^k(s, a) \right)  || \right] \right]  \nonumber  \\
        &\quad+ \left( \frac{\lambda}{(1-\lambda) { \, \beta_l  } + \lambda} + e^{-NC} \right) \underset{ \substack{ (s, a) \in \mathcal{S} \times \mathcal{A} }}{\max} \left[ || (\mathcal{B}_\mathcal{D} Q_\lambda^k(s, a) - \mathcal{B}_\mathcal{D'} Q_\lambda^k(s, a)) || \right].
    \end{align}
    
By definition of $\varsigma_{\text{max}}$ and $\xi_{\text{max}}$ we obtain
    \begin{align}
        \underset{\substack{ \hat{s}' \sim P_\mathcal{D} (\hat{s}'| s, a)}}{\mathbb{E}} \left[ \underset{(s, a) \sim \mathcal{D}}{\mathbb{E}}\left[||   Q_\lambda^{k+1}(s, a) - \mathcal{B}_{\mathcal{D}}  Q_\lambda^{k}(s, a) ||_\infty \right] \right]  &\leq \frac{1-\lambda}{1-\lambda+\lambda { \, / \beta_u }} \sqrt{ \varsigma_{\text{max}} } +  \frac{\lambda}{(1-\lambda) { \, \beta_l } + \lambda} \sqrt{\xi_{\text{max}}} \nonumber \\
        &\quad + e^{-NC} \sqrt{\xi_{\text{max}}}, ~ \forall k \in \mathbb{N}.
    \end{align}
%
%    
%    Substituting $ Q_\lambda^{k}(s, a),  Q_\lambda^{k+1}(s, a),  Q^{k+1}(s, a)$ as in Proposition~\ref{proposition_two_solution} into the previous expression yields
%     %
%         \begin{align}
%         &\underset{(s, a) \sim \mathcal{D}}{\mathbb{E}}\left[|| \mathbb{E} \left[  Q_\lambda^{k+1}(s, a) \right] - \mathcal{B}_{\mathcal{D}} \mathbb{E} \left[  Q_\lambda^{k}(s, a) \right]||_\infty \right] \nonumber \\
%         &\leq \underset{(s, a) \sim \mathcal{D}}{\mathbb{E}}\left[|| (1-\lambda) \mathcal{B}_\mathcal{D} Q_\lambda^{k}(s, a) + \lambda \mathcal{B}_\mathcal{D'} Q_\lambda^{k}(s, a) - \mathcal{B}_\mathcal{D} Q_\lambda^{k}(s, a) || \right]  \\      
%         &+ \gamma \underset{(s, a) \sim \mathcal{D}}{\mathbb{E}}\left[||  \frac{1-\lambda}{N(s, a)} \sum_{j=1}^{N(s, a)}\hat{\mathcal{B}}_{s_j'} Q_\lambda^{k-1}(s, a) + \lambda \mathcal{B}_\mathcal{D'} Q_\lambda^{k-1}(s, a) - (1-\lambda) \mathcal{B}_\mathcal{D} Q_\lambda^{k-1}(s, a) - \lambda \mathcal{B}_\mathcal{D'} Q_\lambda^{k-1}(s, a) || \right] \\
%         &\leq \underset{(s, a) \sim \mathcal{D}}{\mathbb{E}}\left[ \lambda || \mathcal{B}_\mathcal{D} Q_\lambda^{k}(s, a) -  \mathcal{B}_\mathcal{D'} Q_\lambda^{k}(s, a) || \right] + \gamma \underset{(s, a) \sim \mathcal{D}}{\mathbb{E}}\left[(1-\lambda) ||  \frac{1}{N(s, a)} \sum_{j=1}^{N(s, a)}\hat{\mathcal{B}}_{s_j'} Q_\lambda^{k-1}(s, a) - \mathcal{B}_\mathcal{D} Q_\lambda^{k-1}(s, a)  || \right] \\
%         &\leq \lambda \sqrt{\xi_{\text{max}}} + \gamma (1-\lambda) \sigma_{\text{max}}.
%     \end{align}
%
This completes the proof of Lemma~\ref{lemma_convergence}.
\end{proof}

Having introduced Lemma~\ref{lemma_convergence}, we are in the stage of proving Theorem~\ref{theorem_convergence}.

\noindent \textbf{Theorem~\ref{theorem_convergence}} (Convergence).
Let the conditions of Theorem \ref{theorem_bound_expectation} hold. Given any dataset $\hat{\mathcal{D}}_\text{tr}$, it holds at each iteration \((k=0, 1, 2, \cdots)\) that 
%
% \red{
% \begin{align}
%      \underset{\substack{ N(s, a),\, \hat{s}' \sim P_\mathcal{D} (\hat{s}'| s, a)}}{\mathbb{E}} \left[ \underset{(s, a) \sim \mathcal{D}}{\mathbb{E}}\left[||   Q_\lambda^{k+1}(s, a)  - Q^* (s, a)||_\infty \right] \right] &\leq \gamma^{k+1} \!\! \underset{(s, a) \sim \mathcal{D}}{\mathbb{E}}\left[ || Q^0(s, a) - Q^*(s, a) ||_\infty \right] \! \nonumber \\
%      &\quad+\! \frac{1-\gamma^{k+1}}{1-\gamma} \left(  (1-\lambda) \sigma_{\text{max}}  + \lambda \sqrt{\xi_{\text{max}}} \right). \nonumber
% \end{align}
% }
%
    \begin{align}
        &\underset{\substack{  \hat{s}' \sim P_\mathcal{D} (\hat{s}'| s, a)}}{\mathbb{E}} \left[ \underset{(s, a) \sim \mathcal{D}}{\mathbb{E}} \!\! \left[||   Q_\lambda^{k+1}(s, a) \!-\! Q^* (s, a)||_\infty \right] \right] \leq \nonumber \\
         &\gamma^{k+1} \! \underset{(s, a) \sim \mathcal{D}}{\mathbb{E}}\left[|| Q^0 (s, a)  - Q^* (s, a)||_\infty \right] 
        \!+\!\frac{1-\gamma^{k+1}}{1-\gamma} \!\! \left( \frac{1-\lambda}{1-\lambda+\lambda {\, / \beta_u }} \sqrt{ \varsigma_{\text{max}} } \!+\! \left( \frac{\lambda}{(1-\lambda) { \, \beta_l  } + \lambda} + e^{-NC} \right) \sqrt{\xi_{\text{max}}}  \right).
    \end{align}
\begin{proof}

    From the triangle inequality we obtain
    \begin{align}
        &\underset{\substack{  \hat{s}' \sim P_\mathcal{D} (\hat{s}'| s, a)}}{\mathbb{E}} \left[ \underset{(s, a) \sim \mathcal{D}}{\mathbb{E}}\left[||   Q_\lambda^{k+1}(s, a)  - Q^* (s, a)||_\infty \right] \right] \\ 
        &\leq \underset{\substack{  \hat{s}' \sim P_\mathcal{D} (\hat{s}'| s, a)}}{\mathbb{E}} \! \left[ \underset{(s, a) \sim \mathcal{D}}{\mathbb{E}}\left[||   Q_\lambda^{k+1}(s, a)  - \mathcal{B}_{\mathcal{D}} Q_\lambda^k (s, a) ||_\infty \!+\! ||  \mathcal{B}_{\mathcal{D}} Q_\lambda^k (s, a)  - Q^* (s, a)||_\infty \right] \right].
    \end{align}
Given that $Q^*$ is the fixed point of the Bellman optimality operator \eqref{eqn_q_bellman} (i.e., $\mathcal{B}_{\mathcal{D}} Q^* (s, a) = Q^* (s, a)$), we obtain
\begin{align}
        &\underset{\substack{  \hat{s}' \sim P_\mathcal{D} (\hat{s}'| s, a)}}{\mathbb{E}} \left[ \underset{(s, a) \sim \mathcal{D}}{\mathbb{E}}\left[||   Q_\lambda^{k+1}(s, a) - Q^* (s, a)||_\infty \right] \right] \nonumber \\
        &\leq \! \underset{\substack{  \hat{s}' \sim P_\mathcal{D} (\hat{s}'| s, a)}}{\mathbb{E}} \!\! \left[ \underset{(s, a) \sim \mathcal{D}}{\mathbb{E}}\left[||   Q_\lambda^{k+1}(s, a) \!-\! \mathcal{B}_{\mathcal{D}} Q_\lambda^k (s, a) ||_\infty \!+\! ||  \mathcal{B}_{\mathcal{D}} Q_\lambda^k (s, a) \!-\! \mathcal{B}_{\mathcal{D}} Q^* (s, a)||_\infty \right] \! \right].
    \end{align}

Applying the contraction property ~\citep{sutton2018reinforcement} of the Bellman optimality operator to the previous inequality and re-arranging the terms yields
\begin{align}
        &\underset{\substack{  \hat{s}' \sim P_\mathcal{D} (\hat{s}'| s, a)}}{\mathbb{E}} \left[ \underset{(s, a) \sim \mathcal{D}}{\mathbb{E}}\left[||   Q_\lambda^{k+1}(s, a)  - Q^* (s, a)||_\infty \right] \right] \nonumber \\
        % &\leq \underset{\substack{ N(s, a),\, \hat{s}' \sim P_\mathcal{D} (\hat{s}'| s, a)}}{\mathbb{E}} \left[ \underset{(s, a) \sim \mathcal{D}}{\mathbb{E}}\left[||   Q_\lambda^{k+1}(s, a)  - \mathcal{B}_{\mathcal{D}} Q_\lambda^k (s, a) ||_\infty + ||  \mathcal{B}_{\mathcal{D}} Q_\lambda^k (s, a)  - \mathcal{B}_{\mathcal{D}} Q^* (s, a)||_\infty \right] \right] \\
        &\leq \underset{\substack{ \hat{s}' \sim P_\mathcal{D} (\hat{s}'| s, a)}}{\mathbb{E}} \left[ \underset{(s, a) \sim \mathcal{D}}{\mathbb{E}}\left[||   Q_\lambda^{k+1}(s, a)  - \mathcal{B}_{\mathcal{D}} Q_\lambda^k (s, a) ||_\infty + \gamma ||  Q_\lambda^k (s, a)  -  Q^* (s, a)||_\infty \right] \right] \\
        &= \underset{\substack{  \hat{s}' \sim P_\mathcal{D} (\hat{s}'| s, a)}}{\mathbb{E}} \left[ \underset{(s, a) \sim \mathcal{D}}{\mathbb{E}}\left[||   Q_\lambda^{k+1}(s, a)  - \mathcal{B}_{\mathcal{D}} Q_\lambda^k (s, a) ||_\infty \right] \right] \nonumber \\
        &\quad + \gamma \underset{\substack{  \hat{s}' \sim P_\mathcal{D} (\hat{s}'| s, a)}}{\mathbb{E}} \left[ \underset{(s, a) \sim \mathcal{D}}{\mathbb{E}}\left[  ||  Q_\lambda^k (s, a)  -  Q^* (s, a)||_\infty \right] \right].
    \end{align}

Unrolling the previous inequality until $Q_\lambda^0$ yields
\begin{align}
        &\underset{\substack{ \hat{s}' \sim P_\mathcal{D} (\hat{s}'| s, a)}}{\mathbb{E}} \left[ \underset{(s, a) \sim \mathcal{D}}{\mathbb{E}}\left[||   Q_\lambda^{k+1}(s, a)  - Q^* (s, a)||_\infty \right] \right] \nonumber \\
        &\leq \underset{\substack{  \hat{s}' \sim P_\mathcal{D} (\hat{s}'| s, a)}}{\mathbb{E}} \left[ \underset{(s, a) \sim \mathcal{D}}{\mathbb{E}}\left[||   Q_\lambda^{k+1}(s, a)  - \mathcal{B}_{\mathcal{D}} Q_\lambda^k (s, a) ||_\infty \right] \right] \nonumber \\
        &+ \gamma \underset{\substack{  \hat{s}' \sim P_\mathcal{D} (\hat{s}'| s, a)}}{\mathbb{E}} \left[ \underset{(s, a) \sim \mathcal{D}}{\mathbb{E}}\left[||   Q_\lambda^{k}(s, a)  - \mathcal{B}_{\mathcal{D}} Q_\lambda^{k-1} (s, a) ||_\infty \right] \right] + \cdots \nonumber \\
        &+  \gamma^k \underset{\substack{  \hat{s}' \sim P_\mathcal{D} (\hat{s}'| s, a)}}{\mathbb{E}} \left[ \underset{(s, a) \sim \mathcal{D}}{\mathbb{E}}\left[||   Q_\lambda^{1}(s, a)  - \mathcal{B}_{\mathcal{D}} Q_\lambda^0 (s, a) ||_\infty \right] \right] \nonumber \\ 
        &+
        \gamma^{k+1} \underset{\substack{  \hat{s}' \sim P_\mathcal{D} (\hat{s}'| s, a)}}{\mathbb{E}} \left[ \underset{(s, a) \sim \mathcal{D}}{\mathbb{E}}\left[|| Q_\lambda^0 (s, a) - Q^* (s, a)||_\infty \right] \right].
    \end{align}
Since $Q^0$ is the initial Q-function, it holds that
\begin{align}
    Q^0(s, a) = Q_\lambda^0(s, a), \, \forall (s, a) \in \mathcal{S} \times \mathcal{A}.
\end{align}
% Then we obtain
% \begin{align}
%         &\underset{(s, a) \sim \mathcal{D}}{\mathbb{E}}\left[|| \mathbb{E} \left[  Q_\lambda^{k+1}(s, a) \right] - Q^* (s, a)||_\infty \right] \\
%         &\leq \underset{(s, a) \sim \mathcal{D}}{\mathbb{E}}\left[|| \mathbb{E} \left[  Q_\lambda^{k+1}(s, a) \right] - \mathcal{B}_{\mathcal{D}} \mathbb{E} \left[Q_\lambda^k (s, a) \right]||_\infty \right] + \gamma \underset{(s, a) \sim \mathcal{D}}{\mathbb{E}}\left[|| \mathbb{E} \left[  Q_\lambda^{k}(s, a) \right] - \mathcal{B}_{\mathcal{D}} \mathbb{E} \left[Q_\lambda^{k-1} (s, a) \right]||_\infty \right] + \cdots \nonumber \\
%         &+  \gamma^k \underset{(s, a) \sim \mathcal{D}}{\mathbb{E}}\left[|| \mathbb{E} \left[  Q_\lambda^{1}(s, a) \right] - \mathcal{B}_{\mathcal{D}} \mathbb{E} \left[Q_\lambda^{0} (s, a) \right]||_\infty \right] +
%         \gamma^{k+1} \underset{(s, a) \sim \mathcal{D}}{\mathbb{E}}\left[|| Q^0 (s, a)  - Q^* (s, a)||_\infty \right].
%     \end{align}
Employing Lemma~\ref{lemma_convergence} further implies that
\begin{align}
        &\underset{\substack{  \hat{s}' \sim P_\mathcal{D} (\hat{s}'| s, a)}}{\mathbb{E}} \left[ \underset{(s, a) \sim \mathcal{D}}{\mathbb{E}}\left[||   Q_\lambda^{k+1}(s, a)  - Q^* (s, a)||_\infty \right] \right] \leq \nonumber \\
        & \left( \frac{1-\lambda}{1-\lambda+\lambda { \,  / \beta_u }} \sqrt{ \varsigma_{\text{max}} } \!+\! \left( \frac{\lambda}{(1-\lambda) { \, \beta_l } + \lambda} + e^{-NC} \right) \sqrt{\xi_{\text{max}}}\right) + \cdots + \nonumber \\ 
        & \gamma^k \left( \frac{1-\lambda}{1-\lambda+\lambda { \,  / \beta_u }} \sqrt{ \varsigma_{\text{max}} } + \left( \frac{\lambda}{(1-\lambda) { \, \beta_l  } + \lambda} + e^{-NC} \right) \sqrt{\xi_{\text{max}}}\right) + \gamma^{k+1} \underset{(s, a) \sim \mathcal{D}}{\mathbb{E}}\left[|| Q^0 (s, a)  - Q^* (s, a)||_\infty \right] \\
         &=\gamma^{k+1} \underset{(s, a) \sim \mathcal{D}}{\mathbb{E}}\left[|| Q^0 (s, a)  - Q^* (s, a)||_\infty \right] \nonumber \\
         &\quad +\frac{1-\gamma^{k+1}}{1-\gamma} \left( \frac{1-\lambda}{1-\lambda+\lambda { \,  / \beta_u }} \sqrt{ \varsigma_{\text{max}} } + \left( \frac{\lambda}{(1-\lambda) { \, \beta_l  } + \lambda} + e^{-NC} \right) \sqrt{\xi_{\text{max}}}\right).
    \end{align}
This completes the proof of Theorem~\ref{theorem_convergence}.
\end{proof}

%%%%%%%%%%%%%%%%%%%%%%%%%%%%%%%%%%%%%%%%%%%%%%%%
%%%%%%%%%%%%%%%%%%%%%%%%%%%%%%%%%%%%%%%%%%%%%%%%
%%%%%%%%%%%%%%%%%%%%%%%%%%%%%%%%%%%%%%%%%%%%%%%%
%%%%%%%%%%%%%%%%%%%%%%%%%%%%%%%%%%%%%%%%%%%%%%%%
%%%%%%%%%%%%%%%%%%%%%%%%%%%%%%%%%%%%%%%%%%%%%%%%
%%%%%%%%%%%%%%%%%%%%%%%%%%%%%%%%%%%%%%%%%%%%%%%%

\subsection{\textit{Procgen} Environments}
\label{appendix_environments}

% \paragraph{Environments}

We select five \textit{Procgen} games~\citep{cobbe2020leveraging} to substantiate our theoretical contributions in this work, whose details are provided below.

\paragraph{Description of \textit{Caveflyer} \citep{cobbe2020leveraging}.}
``The player needs to traverse a complex network of caves to reach the exit. Player movement is reminiscent of the classic Atari game ``Asteroids'' where the ship can rotate and propel forward or backward along its current axis. The primary reward is granted upon successfully reaching the end of the level, though additional reward can be earned by destroying target objects with the ship's lasers along the way. The level is fraught with both stationary and moving lethal obstacles, demanding precise navigation and quick reflexes.''

\begin{figure}[H]
    \centering
    \includegraphics[width=0.23\linewidth]{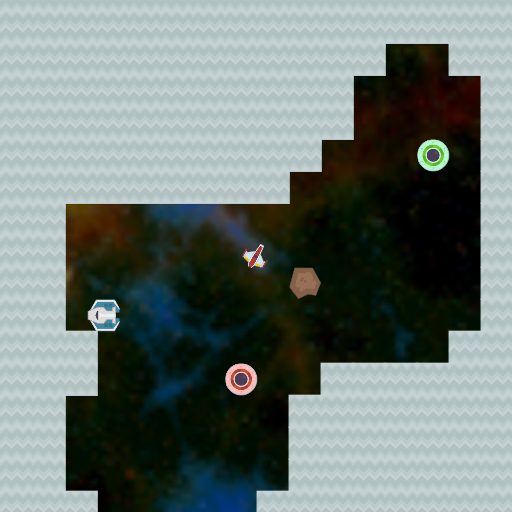}
    \caption{The screenshot of \textit{Caveflyer}~\citep{cobbe2020leveraging}.}
    \label{fig_env_caveflyer}
\end{figure}

%%%%%%%%%%%%%%%%%%%%%%%%%%%%%%%%%%%%%%%%%%%%%%%%
%%%%%%%%%%%%%%%%%%%%%%%%%%%%%%%%%%%%%%%%%%%%%%%%

\paragraph{Description of \textit{Climber} \citep{cobbe2020leveraging}.}

``The player needs to climb a series of platforms, collecting stars scattered along the path. A small reward is granted for each star collected, with a substantial reward provided for gathering all stars within a level. If every star is collected, the episode terminates. The level is also populated with lethal flying monsters, adding extra challenges to the player's journey.''

\begin{figure}[H]
    \centering
    \includegraphics[width=0.23\linewidth]{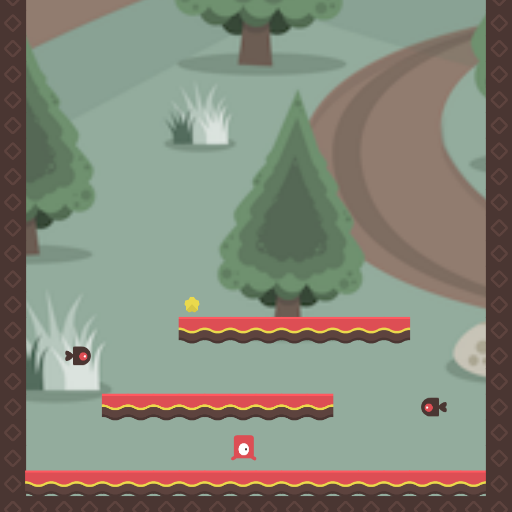}
    \caption{The screenshot of \textit{Climber}~\citep{cobbe2020leveraging}.}
    \label{fig_env_climber}
\end{figure}

%%%%%%%%%%%%%%%%%%%%%%%%%%%%%%%%%%%%%%%%%%%%%%%%
%%%%%%%%%%%%%%%%%%%%%%%%%%%%%%%%%%%%%%%%%%%%%%%%

\paragraph{Description of \textit{Dodgeball} \citep{cobbe2020leveraging}.}

``Inspired by the Atari game ``Berzerk'', the player spawns in a room with a randomly generated configuration of walls and enemies. Contact with a wall results in an immediate game over, terminating the episode. The player moves slowly, allowing for careful navigation throughout the room. Enemies, moving slowly too, throw balls at the player. The player can retaliate by throwing balls as well, but only in the direction they are facing. Once all enemies are eliminated, the player can advance to the unlocked platform, earning a substantial level completion bonus.''

\begin{figure}[H]
    \centering
    \includegraphics[width=0.23\linewidth]{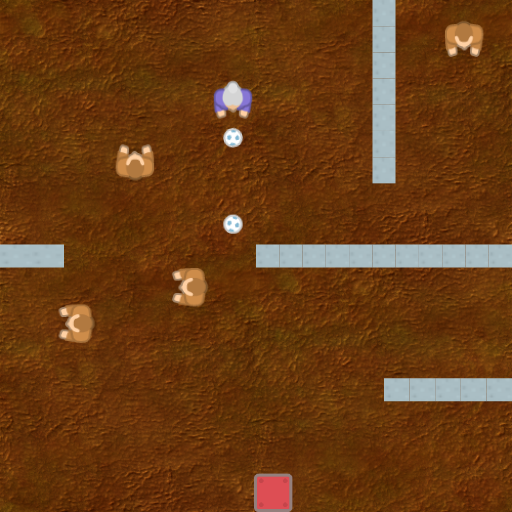}
    \caption{The screenshot of \textit{Dodgeball}~\citep{cobbe2020leveraging}.}
    \label{fig_env_dodgeball}
\end{figure}

%%%%%%%%%%%%%%%%%%%%%%%%%%%%%%%%%%%%%%%%%%%%%%%%
%%%%%%%%%%%%%%%%%%%%%%%%%%%%%%%%%%%%%%%%%%%%%%%%

\paragraph{Description of \textit{Maze} \citep{cobbe2020leveraging}.}

``The player, embodying a mouse, needs to navigate a maze to locate the sole piece of cheese and obtain a reward. The mazes, generated using Kruskal's algorithm, vary in size from $3 \times 3$ to $25 \times 25$, with dimensions uniformly sampled across this range. To navigate the maze, the player can move up, down, left, or right.''

\begin{figure}[H]
    \centering
    \includegraphics[width=0.23\linewidth]{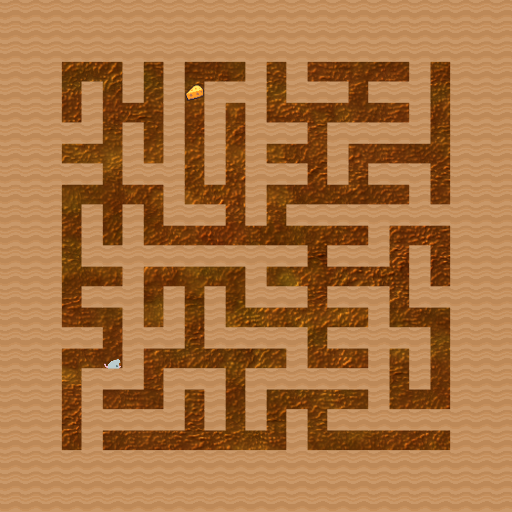}
    \caption{The screenshot of \textit{Maze}~\citep{cobbe2020leveraging}.}
    \label{fig_env_maze}
\end{figure}

%%%%%%%%%%%%%%%%%%%%%%%%%%%%%%%%%%%%%%%%%%%%%%%%
%%%%%%%%%%%%%%%%%%%%%%%%%%%%%%%%%%%%%%%%%%%%%%%%

\paragraph{Description of \textit{Miner} \citep{cobbe2020leveraging}.}

``Inspired by the game ``BoulderDash'', the player (robot) can dig through dirt to navigate the world. The game world is governed by gravity, where dirt supports both boulders and diamonds. Boulders and diamonds fall through free spaces and roll off each other. If either a boulder or a diamond falls on the player, the game terminates immediately. The objective is to collect all the diamonds in the level and then reach the exit. The player earns a small reward for each diamond collected and a huge reward for successful completion in the level.''

\begin{figure}[H]
    \centering
    \includegraphics[width=0.23\linewidth]{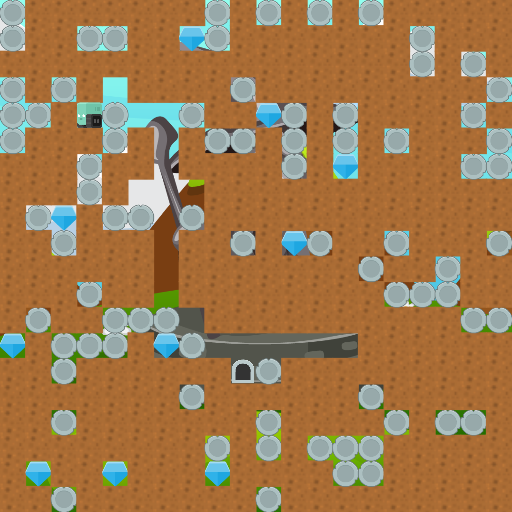}
    \caption{The screenshot of \textit{Miner}~\citep{cobbe2020leveraging}.}
    \label{fig_env_miner}
\end{figure}

\paragraph{\textit{Procgen} levels.} 
An example (\textit{Maze}) with different \textit{Procgen} levels is provided in Figure~\ref{fig_maze_level_example}.
\begin{figure}[H]
    \centering
    \includegraphics[width=0.18\linewidth]{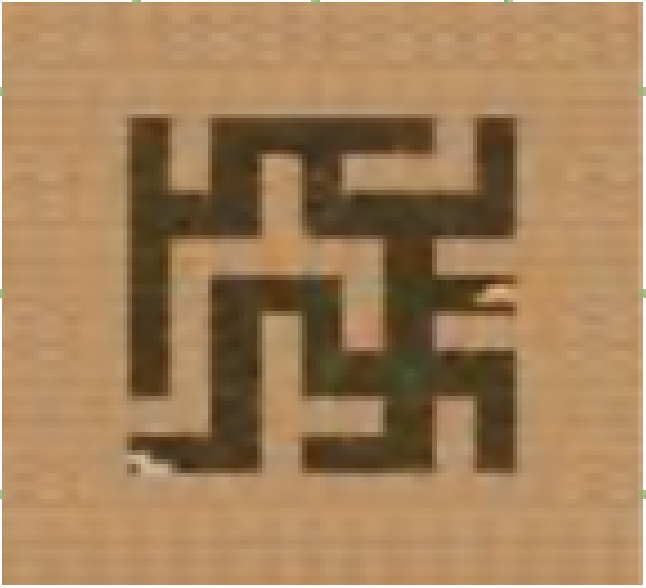}
    \includegraphics[width=0.18\linewidth]{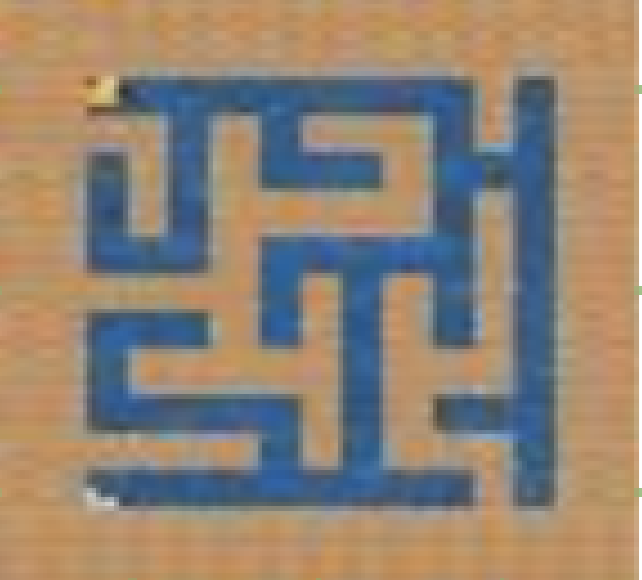}
    \caption{\textit{Maze} with different levels.}
    \label{fig_maze_level_example}
\end{figure}

\subsection{\textit{Procgen} Experimental Hyperparameters}
\label{append_exp_hyperparameters}

Key hyperparameters for the datasets and algorithms are summarized in Table~\ref{tab_hyperparameters}.

\renewcommand{\arraystretch}{1.25}
\begin{table}[ht]
	\centering
	\begin{threeparttable}
		\caption{Experimental hyperparameters.}
		\label{tab_hyperparameters}
		\begin{tabular}{c|c}
        \hline
        \hline
			 Hyperparameters & Value \cr

             \hline

          Target domain levels  & $[100, 199]$ \cr

             \hline

          Source domain levels  & $[0, 99]$, $[25, 124]$, $[50, 149]$ \cr

             \hline

          Number of target samples ($N$)  & $1000, 2500, 4000$ \cr

             \hline

          Number of source samples ($N'$)  & $40000$ \cr

             \hline

          Weight ($\lambda$)  & $\{0, 0.2, 0.4, 0.5, 0.6, 0.8, 1\}$ \cr
   
         \hline

         Number of episodes for evaluation   & $500$ \cr

         \hline

         Learning rate  &  $0.0005$  \cr

         \hline

          Batch size & $256$  \cr

         \hline

          Neural network hidden size &  $256$ \cr

        \hline

         Discount factor ($\gamma$)  & $0.99$  \cr

        \hline

         CQL conservativeness constant ($\alpha$) & $4$   \cr

        \hline

         Gradient norm clip & $0.1$  \cr
          
        \hline
        IQL expectile ($\tau_{\text{exp}}$) & $0.8$ \cr
          
        \hline
        IQL temperature ($\beta$) & $0.1$ \cr
           
        \hline
        \hline
		\end{tabular}
	\end{threeparttable}
\end{table}

\end{document}